%% file: iclr2027_conference.tex
\documentclass{article} 
\usepackage{iclr2027_conference,times}

\input{math_commands.tex}

\usepackage{hyperref}
\usepackage{url}
\usepackage{xcolor}

\usepackage{algorithm}
\usepackage{algorithmic}
\usepackage{graphicx}
\usepackage{amsmath}
\usepackage{amssymb}
\usepackage{amsthm}
\usepackage{import}
\usepackage{booktabs}
\usepackage{natbib}
\usepackage{wrapfig}
\usepackage{cleveref}
\usepackage{multirow}

\newtheorem{theorem}{Theorem}[section]
\newtheorem{proposition}[theorem]{Proposition}
\newtheorem{lemma}[theorem]{Lemma}
\newtheorem{corollary}[theorem]{Corollary}
\newtheorem{definition}[theorem]{Definition}

\newtheorem{remark}[theorem]{Remark}
\usepackage{subcaption}

\title{Lifted Bellman Linear Programming for Offline Reinforcement Learning}

\author{
Hyukjun Yang\textsuperscript{1} \quad
Jongchan Park\textsuperscript{2}\thanks{Equal contribution.} \quad
Narim Jeong\textsuperscript{1}\footnotemark[1] \quad
Donghwan Lee\textsuperscript{1}\thanks{Corresponding author.} \\
\textsuperscript{1}Korea Advanced Institute of Science and Technology (KAIST) \quad
\textsuperscript{2}NAVER LABS \\
\texttt{\{jundol32, donghwan\}@kaist.ac.kr}
}

\iclrfinalcopy 
\begin{document}

\maketitle
\lhead{Preprint. Under review.}

\import{contents}{00_abstr}

\import{contents}{01_intro}

\import{contents}{02_relat}

\import{contents}{03_preli}

\import{contents}{04_motiv}

\import{contents}{05_algor}

\import{contents}{06_exper}

\import{contents}{07_concl}

\bibliographystyle{iclr2027_conference}
\bibliography{iclr2027_conference}

\newpage
\import{contents}{XX_appen}

\end{document}

%% file: math_commands.tex
\usepackage{amsmath,amsfonts,bm}

\def\eqref#1{equation~\ref{#1}}

\def\1{\bm{1}}

\DeclareMathAlphabet{\mathsfit}{\encodingdefault}{\sfdefault}{m}{sl}
\SetMathAlphabet{\mathsfit}{bold}{\encodingdefault}{\sfdefault}{bx}{n}



%% file: contents/00_abstr.tex
\begin{abstract}
Offline reinforcement learning (RL) typically trains a critic by minimizing a regression loss against bootstrapped value targets, which are stabilized by target networks updated through exponential moving averages (EMA). Furthermore, multi-step targets incorporate behavior-policy actions and therefore require off-policy correction. We instead impose in-sample Bellman optimality on the critic through inequality constraints. We formulate the Lifted Bellman Linear Program (LBLP), which lifts the linear programming characterization of Bellman optimality to the joint $(Q,V)$ space so that every constraint involves only state-action pairs in the dataset. Its unique minimizer is the in-sample optimal pair, and constraints along $K$-step segments of dataset trajectories leave this minimizer unchanged for any rollout policy and any horizon. Under deterministic dynamics, this minimizer lies between the best return of the dataset and the optimal value. Relaxing the constraints into hinge penalties recovers the same solution above a finite penalty coefficient in the tabular case. Approximate Lifted Bellman Unconstrained Minimization (ALBUM) implements this relaxation with neural networks and detaches the $K$-step rollout targets by stop gradient. 
Its objective contains no squared regression onto bootstrapped targets, so it trains without target networks or EMA updates.
Under deterministic dynamics, the $K$-step constraints allow discounted returns along dataset trajectories to serve as lower bounds without off-policy correction or action chunking, providing a direct mechanism for long-horizon value propagation. On OGBench, ALBUM uses a single critic with a Gaussian policy, matches the average performance of FQL, and substantially outperforms ReBRAC, while using the fewest parameters and the least peak GPU memory among all measured methods.
\end{abstract}

%% file: contents/01_intro.tex
\section{Introduction}

Offline reinforcement learning (RL) learns a policy solely from a pre-collected dataset~\citep{levine2020offline}, and its central obstacle is value overestimation, since the Bellman maximization selects values extrapolated to actions outside the dataset and each backup propagates them to preceding states~\citep{mao2024doubly}. Prior work either keeps policy improvement close to the behavior policy~\citep{fujimoto2019off, kumar2019stabilizing, fujimoto2021minimalist}, or modifies the Bellman update by penalizing the values assigned to actions the dataset does not contain~\citep{kumar2020conservative}, or by replacing the maximization with an in-sample regression~\citep{kostrikov2021offline, garg2023extreme, xu2023offline}. This regression brings two costs. The critic is regressed onto a target computed from its own estimate, so a target network with EMA updates is needed to keep the target stable~\citep{mnih2015human, lillicrap2015continuous}. The same loss also limits how far a single backup reaches. When a multi-step return enters the
target, the intermediate actions come from the behavior policy, and the resulting target is biased for the learned policy unless it is corrected~\citep{li2026reinforcement}. This raises a more basic question.
\emph{Can in-sample Bellman optimality be imposed on the critic entirely through inequality constraints over state-action pairs in the dataset, so that the critic is the solution of a constrained program rather than a regression onto bootstrapped targets?}

We approach this question starting from the linear program (LP) characterization of Bellman optimality, in which the optimal value function is the smallest function satisfying a family of Bellman inequalities~\citep{de2003linear, puterman2014markov}. However, neither of the two standard LP formulations carries over to offline RL directly. The action-value formulation imposes a Bellman inequality for every action at the next state, requiring evaluation at actions that the dataset need not contain~\citep{lee2025analysis}. The state-value formulation avoids this issue by eliminating the next action, but identifies only the optimal state-value function, so a policy must be recovered from the dual occupancy measure~\citep{nachum2020reinforcement}. Offline work building on this formulation, therefore, turns to the dual, either by learning a density ratio~\citep{lee2021optidice} or by solving a saddle-point problem~\citep{zhan2022offline, kamoutsi2021efficient, ozdaglar2023revisiting, gabbianelli2024offline}. 

In this work, we resolve both difficulties with the Lifted Bellman Linear Program (LBLP), which stays on the primal side and lifts the program to the joint $(Q,V)$ space. An epigraph variable $V$ expresses $V(s) \ge \max_a Q(s,a)$ as the family of inequalities $Q(s,a) \le V(s)$, one for each action, so the Bellman inequality relates $Q$ to $V$ alone and involves no next action. LBLP places positive weight on both $Q$ and $V$, so its unique minimizer directly determines both $Q$ and $V$ without requiring a dual step. Unlike prior formulations that also use this lifting~\citep{jeong2025stochastic, mehta2020convex, bas2021logistic, lu2023convex}, in which either $V$ is not a decision variable or only one of $Q$ and $V$ carries weight in the objective (Section~\ref{sec:related works}), LBLP uniquely determines the pair through its objective.

In the offline setting, we instantiate LBLP on the empirical Markov decision process (MDP) induced by the dataset. Every constraint is indexed by a dataset state-action pair, including the $K$-step constraints along dataset trajectories. These additional constraints leave the feasible set and the unique minimizer unchanged. Under deterministic dynamics, this minimizer lies between the best dataset return and the optimal value, and both bounds hold without the coefficient of a value penalty \citep{kumar2020conservative}. For neural function approximation, we relax the constraints into hinge penalties \citep{ryu2019caql, sikchi2024dual}, and the relaxation is exact above a finite penalty coefficient in the tabular case. Approximate Lifted Bellman Unconstrained Minimization (ALBUM) implements this relaxation with neural networks and detaches the $K$-step targets by stop gradient. Since the ALBUM objective contains no squared regression onto bootstrapped targets, it requires no target networks or EMA updates, and because it imposes maximization only through constraints, it trains with a single critic. Under deterministic dynamics, its multi-step penalties provide lower bounds along trajectories without off-policy correction or action chunking, which gives a direct mechanism for long-horizon value propagation. Using a single $Q$, a single $V$, and a Gaussian actor, ALBUM matches the average performance of FQL~\citep{park2025flow} on OGBench~\citep{ogbench} and substantially outperforms ReBRAC~\citep{tarasov2023revisiting} while using half as many parameters.


Our contributions are as follows.
\begin{itemize}
\item We formulate the Lifted Bellman Linear Program (LBLP), a primal LP over the joint $(Q,V)$ space whose constraints involve only state-action pairs in the dataset. With positive objective weights on both $Q$ and $V$, its unique minimizer is the Bellman-optimal pair of the empirical MDP. Constraints along $K$-step rollouts are implied by the one-step constraints, so they can be added for any rollout policy and horizon without off-policy correction or action chunking.

\item Under deterministic dynamics, we show that this minimizer lies between the best dataset return and the optimal value, and both bounds hold without a penalty coefficient. In an idealized coordinate-wise iteration, the $K$-step term never slows and can accelerate value propagation (Theorem~\ref{thm:speedup}).

\item We relax LBLP through hinge penalties and show that the relaxation is exact above a finite penalty coefficient in the tabular case. ALBUM implements this relaxation with neural networks. Since its objective contains no squared regression onto bootstrapped targets and imposes maximization only through constraints, ALBUM trains without target networks or EMA updates and with a single critic (Table~\ref{tab:targetensemble}).

\item We analyze the update obtained by detaching the $K$-step targets with stop gradient. The detachment removes $\gamma$ and $K$ from the rollout term of the necessary condition for bounded updates (Proposition~\ref{prop:bounded}), and under additional conditions on the dataset, the LBLP solution is a stationary point of this update (Appendix~\ref{app:stationary}).

\item On OGBench~\citep{ogbench}, ALBUM with single $Q$ and $V$ networks and a Gaussian actor matches the average performance of FQL~\citep{park2025flow} and substantially outperforms ReBRAC~\citep{tarasov2023revisiting}, while using the fewest parameters and the least peak GPU memory among all measured methods. 
\end{itemize}

%% file: contents/02_relat.tex
\section{Related Works}
\label{sec:related works}
\paragraph{Offline RL.}
Offline RL aims to learn a policy from a pre-collected dataset, where the maximization can amplify erroneous value estimates at
out-of-distribution actions~\citep{fujimoto2019off, kumar2019stabilizing,
kostrikov2021offline, mao2024doubly}. Existing methods mitigate this problem
through policy constraints~\citep{fujimoto2019off, kumar2019stabilizing,
fujimoto2021minimalist}, pessimistic value
regularization~\citep{kumar2020conservative, lyu2022mildly, mao2024doubly},
in-sample regression~\citep{kostrikov2021offline, garg2023extreme,
xu2023offline}, hybrid regularization~\citep{tarasov2023revisiting}, and
generative actor-critic methods~\citep{wang2022diffusion, park2025flow}.

\paragraph{Multi-step Values.}
Multi-step and Monte Carlo returns have been used as lower bounds for critic
values to facilitate value propagation, as auxiliary terms in a regression
objective~\citep{s.he2017learning, oh2018self}. Iterated Bellman inequalities
have been used in LP-based value-function approximation to obtain tighter
underestimators~\citep{wang2015approximate}. In offline RL, multi-step targets
are biased for the learned policy unless corrected~\citep{li2026reinforcement},
and a prevailing approach is action chunking, which conditions the critic on the action sequence~\citep{li2026reinforcement, kim2026deas, li2026decoupled,
song2026chunkguided}. ALBUM instead imposes $K$-step returns as inequality constraints of the program that defines a single-action critic, rather than as auxiliary terms alongside a regression loss. Under deterministic dynamics,
these constraints require no off-policy correction and leave the solution of the program unchanged, while enabling long-horizon value propagation
without action chunking.

\paragraph{Linear Programs for Bellman Optimality.}
The LP characterization replaces the maximization with Bellman
inequalities~\citep{de2003linear, puterman2014markov,
lakshminarayanan2017linearly}. Off-policy and offline work building on this
formulation mostly passes to the dual, either estimating density ratios for
evaluation or policy optimization~\citep{nachum2019dualdice, zhang2020gendice,
nachum2019algaedice, lee2021optidice, nachum2020reinforcement} or solving
saddle-point problems~\citep{zhan2022offline, kamoutsi2021efficient,
ozdaglar2023revisiting, gabbianelli2024offline}. Other work introduces variables alongside $Q$~\citep{mehta2020convex, lu2023convex, jeong2025stochastic}, but in these programs $V$ is not a decision variable or carries no weight in the objective. \citet{lee2019stochastic, bas2021logistic} carry $Q$ and $V$ together, but only $V$ is weighted, and $Q$ is fixed by the Bellman equality rather than determined by the objective. \citet{lee2025analysis} relax the inequalities with a log
barrier, but their program involves $Q$ alone and therefore evaluates $Q$ at
next actions. Hinge relaxations appear in \citet{ryu2019caql}, whose hinge
retains a maximization in its argument, and in \citet{sikchi2024dual}, whose
$Q$ is fit by regression outside the program. To our knowledge, LBLP is the first primal program for Bellman optimality whose constraints involve only dataset pairs and whose objective determines both $Q$ and $V$. Two properties follow from this structure. The critic is the solution of a program rather than a regression onto bootstrapped targets, and multi-step constraints along dataset trajectories are implied by the one-step constraints, so they leave the solution unchanged without off-policy correction.

%% file: contents/03_preli.tex
\section{Preliminaries}

\subsection{Markov Decision Processes and Offline Reinforcement Learning}

We consider discounted MDP $\mathcal{M}=(\mathcal{S},\mathcal{A},P,R,\gamma)$ with finite
state space $\mathcal{S}$, finite action space
$\mathcal{A}$, transition kernel $P(\cdot\mid s,a)$,
expected reward $R(s,a)$, and discount factor $0\le\gamma<1$. We
identify an action-value function $Q$ with a vector in
$\mathbb{R}^{|\mathcal{S}||\mathcal{A}|}$ indexed by $(s,a)$ and a state-value
function $V$ with a vector in $\mathbb{R}^{|\mathcal{S}|}$, and we write
$\langle\cdot,\cdot\rangle$ for the Euclidean inner product on either space. The
same symbol $P$ denotes the operator this kernel induces, $(PV)(s,a) :=
\sum_{s'\in\mathcal{S}}P(s'\mid s,a)V(s')$. The realized reward $r(s,a,s')$,
whose average over successors is $R(s,a)$, satisfies $r_{\min}\le r(s,a,s')\le
r_{\max}$. For a scalar $x$ we write $[x]_{+} := \max\{x,0\}$, and both this
operation and inequalities between vectors are applied componentwise. Let $\mathbb{E}_{\mathcal{M},\pi}$ denote the expectation over
trajectories $(s_0,a_0,s_1,a_1,\ldots)$ generated by $a_k\sim\pi(\cdot\mid
s_k)$ and $s_{k+1}\sim P(\cdot\mid s_k,a_k)$, with
$r_{k}:=r(s_k,a_k,s_{k+1})$. The value functions of $\pi$ are
$V^{\pi}(s)=\mathbb{E}_{\mathcal{M},\pi}[\sum_{k=0}^{\infty}\gamma^{k}r_{k}\mid
s]$ and
$Q^{\pi}(s,a)=\mathbb{E}_{\mathcal{M},\pi}[\sum_{k=0}^{\infty}\gamma^{k}r_{k}
\mid s,a]$. Their optima $Q^{*}$ and $V^{*}$ are linked by
$V^{*}(s)=\max_{a\in\mathcal{A}}Q^{*}(s,a)$. In the offline setting the agent is given a fixed dataset
$\mathcal{D}=\{(s_i,a_i,r_i,s_i')\}_{i=1}^{m}$ of $m$ transitions collected by a
behavior policy $\beta$, where $r_i=r(s_i,a_i,s_i')$. Since
our algorithm draws contiguous subsequences rather than isolated transitions, we
assume $\mathcal{D}$ retains the trajectory structure of the collected data. The
dataset also induces an empirical Markov decision
process~\citep{fujimoto2019off}, whose definition is given in
Appendix~\ref{app:empirical-mdp}.

\subsection{Linear Programming Characterization of Bellman Optimality}
\label{subsec:lp}

For any $\rho$ with strictly positive entries, $V^*$ is the unique solution of
the linear program~\citep{de2003linear, puterman2014markov}
\begin{equation}
\min_{V}\ \langle\rho,V\rangle
\quad\text{subject to}\quad
V(s)\ge R(s,a)+\gamma(PV)(s,a),\qquad (s,a)\in\mathcal{S}\times\mathcal{A}.
\label{eq:lp-v}
\end{equation}
The LP stated over $Q$ involves $Q(s',a')$ for every action $a'$ at the next state $s'$~\citep{puterman2014markov, lee2025analysis}.

%% file: contents/04_motiv.tex
\section{Lifted Bellman Linear Programming}
\label{sec:LBLP}

We introduce the \emph{Lifted Bellman Linear Program} (LBLP), which lifts the
LP of Section~\ref{subsec:lp} from $V$ or $Q$ alone to the pair \((Q,V)\). The
lifting is the classical epigraph reformulation~\citep{boyd2004convex}, under
which \(V(s)\ge\max_{a\in\mathcal A}Q(s,a)\) is equivalent to \(Q(s,a) -  V(s) \le 0\)
for all \(a\in\mathcal A\). We state the formulation for an arbitrary finite
MDP, since its exactness does not rely on the offline setting, and
Section~\ref{sec:offline-lblp} instantiates it on the empirical MDP induced by
the dataset.

\begin{definition}[Lifted Bellman Linear Program (LBLP)]
\label{def:lblp}
Let $\mathcal M=(\mathcal S,\mathcal A,P,R,\gamma)$ be a finite discounted MDP with
$0\le\gamma<1$, let $\beta$ be a behavior policy on $\mathcal M$, and let $K>1$ be a finite
integer. The decision variables are $Q$ and $V$, the weights satisfy $c_Q(s,a)>0$
for all $(s,a)\in\mathcal S\times\mathcal A$ and $c_V(s)>0$ for all $s\in\mathcal S$,
and the objective is $F(Q,V):=\langle c_Q,Q\rangle+\langle c_V,V\rangle$. Let us define
\begin{equation*}
y_K(s,a):=\mathbb E_{\mathcal M,\beta}\!\left[
\sum_{t=0}^{K-1}\gamma^{t}R(s_t,a_t)+\gamma^{K}V(s_K)\;\middle|\;s,a\right],
\end{equation*}
where the rollout follows $a_t\sim\beta(\cdot\mid s_t)$ for $t\ge1$. The constraint
functions are
\begin{equation*}
\begin{aligned}
g_{\mathrm B}(Q,V)(s,a)
&:=R(s,a)+\gamma (PV)(s,a)-Q(s,a),
&
g_{\mathrm E}(Q,V)(s,a)
&:=Q(s,a)-V(s),\\
g_{\mathrm{KQ}}(Q,V)(s,a)
&:=y_K(s,a)-Q(s,a),
&
g_{\mathrm{KV}}(Q,V)(s,a)
&:=y_K(s,a)-V(s).
\end{aligned}
\end{equation*}
The LBLP associated with $(\mathcal M,\beta,K)$ is defined as
\begin{equation}\label{eq:lblp}
\min_{Q,V}\ F(Q,V)\quad\text{s.t.}\quad
g_{\mathrm B}\le0,\quad g_{\mathrm E}\le0,\quad
g_{\mathrm{KQ}}\le0,\quad g_{\mathrm{KV}}\le0 .
\end{equation}

\end{definition}

The four constraints play two distinct roles. The one-step constraints
$g_{\mathrm B}$ and $g_{\mathrm E}$ together determine the Bellman-optimal pair,
whereas $g_{\mathrm{KQ}}$ and $g_{\mathrm{KV}}$ leave both the feasible set and
the solution unchanged, as Section~\ref{sec:lblpexactness} shows. We retain them because each
directly relates the value at $(s,a)$ to a value $K$ transitions later through a
single inequality, and Section~\ref{subsec:acc} shows that this can accelerate
convergence toward the solution.

\subsection{Exactness of LBLP}
\label{sec:lblpexactness}

\begin{lemma}[Rollout constraints are implied by the one-step constraint]
\label{lem:kstep-redundant}
Let $(Q,V)$ satisfy $g_{\mathrm{B}}(Q,V) \le 0$ and $g_{\mathrm{E}}(Q,V) \le 0$.
Then $g_{\mathrm{KQ}}(Q,V) \le 0$ and $g_{\mathrm{KV}}(Q,V) \le 0$.
\end{lemma}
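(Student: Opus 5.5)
The plan is to chain the one-step constraints along the rollout and prove the bound $Q(s,a)\ge y_K(s,a)$ by induction on the horizon; the bound $V(s)\ge y_K(s,a)$ then follows at once from $g_{\mathrm E}\le 0$, since $V(s)\ge Q(s,a)\ge y_K(s,a)$. To make the induction work, we define for $k\ge 1$ the $k$-step quantity
\begin{equation*}
y_k(s,a):=\mathbb E_{\mathcal M,\beta}\!\left[\sum_{t=0}^{k-1}\gamma^{t}R(s_t,a_t)+\gamma^{k}V(s_k)\;\middle|\;s,a\right],
\end{equation*}
with $a_t\sim\beta(\cdot\mid s_t)$ for $t\ge1$. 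The base case is $y_1(s,a)=R(s,a)+\gamma(PV)(s,a)$, so $g_{\mathrm B}\le0$ reads exactly $Q\ge y_1$.

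For the inductive step, we condition on the first transition using the Markov property. This gives the recursion
\begin{equation*}
y_{k+1}(s,a)=R(s,a)+\gamma\,\mathbb E_{s_1\sim P(\cdot\mid s,a),\,a_1\sim\beta(\cdot\mid s_1)}\big[y_k(s_1,a_1)\big].
\end{equation*}
Suppose $Q\ge y_k$ holds pointwise. Then $y_k(s_1,a_1)\le Q(s_1,a_1)\le V(s_1)$ for every $a_1$, where the second inequality is $g_{\mathrm E}\le0$. Averaging over $a_1\sim\beta(\cdot\mid s_1)$ therefore leaves a bound that depends only on $s_1$. Since the expectation is monotone and $\gamma\ge0$, we get $y_{k+1}(s,a)\le R(s,a)+\gamma(PV)(s,a)\le Q(s,a)$, the last step being $g_{\mathrm B}\le0$. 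This closes the induction, and taking $k=K$ gives $g_{\mathrm{KQ}}\le0$ and hence $g_{\mathrm{KV}}\le0$.

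The step that needs the most care is the inductive step. The point is that the epigraph constraint $Q(s_1,a_1)\le V(s_1)$ holds for \emph{every} action, and this is what removes any dependence on the rollout policy $\beta$, so no off-policy correction is needed. Beyond that, the only bookkeeping is to justify the one-step recursion for $y_{k+1}$ from the tower property on the finite MDP, with all expectations being finite sums, and to note that monotonicity of conditional expectation uses the nonnegativity of $\gamma$ and of the probabilities. An alternative to the induction would be a telescoping sum $\sum_t \gamma^t\,\mathbb E[Q(s_t,a_t)-R(s_t,a_t)-\gamma V(s_{t+1})]$, but the induction is cleaner.
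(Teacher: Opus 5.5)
Your proof is correct and is essentially the paper's argument. Both chain $g_{\mathrm B}\le 0$ along the $\beta$-rollout with the epigraph inequality $Q(s_{t+1},a_{t+1})\le V(s_{t+1})$, which holds for every action and so survives averaging over $\beta$, and both then conclude $g_{\mathrm{KV}}\le 0$ from $V(s)\ge Q(s,a)\ge y_K(s,a)$. The only difference is bookkeeping: you run an induction on the horizon through the recursion $y_{k+1}(s,a)=R(s,a)+\gamma\,\mathbb E[y_k(s_1,a_1)]$, whereas the paper unrolls $Q(s_t,a_t)\ge R(s_t,a_t)+\gamma\,\mathbb E[Q(s_{t+1},a_{t+1})]$ forward from $t=0$ and applies $g_{\mathrm B}$ once more at $t=K-1$.
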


The proof is in Appendix~\ref{app:kstep-redundant}. Note that the implication holds in one direction only. A point satisfying the $K$-step constraint ($g_{\mathrm{KQ}}(Q,V), g_{\mathrm{KV}}(Q,V)$) need not satisfy the one-step constraint ($g_B(Q,V), g_E(Q,V)$), since the latter demands an inequality at every transition of the rollout while the former demands only one on their aggregate. By Lemma~\ref{lem:kstep-redundant}, the rollout constraints never exclude the Bellman-optimal pair. It remains to show that the one-step constraint forces $Q \ge Q^*$ and $V \ge V^*$ at every feasible point, so that minimizing the objective returns the pair itself.

\begin{proposition}[Exactness of the LBLP] \label{prop:lblp-exactness}
Let $\mathcal{M}$ be a finite discounted MDP, $\beta$ be a policy on
$\mathcal{M}$, and $K>1$ be a finite integer. Then the LBLP of
Definition~\ref{def:lblp} has the unique minimizer $(Q,V)=(Q^*,V^*)$.
\end{proposition}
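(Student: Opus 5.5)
The proof has three steps: $(Q^*,V^*)$ is feasible, every feasible point dominates $(Q^*,V^*)$ componentwise, and then strict positivity of the weights forces uniqueness.

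\emph{Feasibility.} Since $Q^* = R + \gamma P V^*$, we have $g_{\mathrm B}(Q^*,V^*)=0$. Since $V^*(s)=\max_a Q^*(s,a)$, we have $g_{\mathrm E}(Q^*,V^*)\le 0$. Lemma~\ref{lem:kstep-redundant} then gives $g_{\mathrm{KQ}}\le 0$ and $g_{\mathrm{KV}}\le 0$, so $(Q^*,V^*)$ is feasible.

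\emph{Domination.} Take any feasible $(Q,V)$. Combining $g_{\mathrm E}\le0$ with $g_{\mathrm B}\le 0$ gives, for every $a$,
\[
V(s)\ \ge\ Q(s,a)\ \ge\ R(s,a)+\gamma (PV)(s,a),
\]
so $V(s)\ge \max_a\{R(s,a)+\gamma(PV)(s,a)\} = (TV)(s)$, where $T$ is the Bellman optimality operator on state values. This is exactly the constraint of the LP \eqref{eq:lp-v}, but we need the componentwise bound $V\ge V^*$, not just optimality of $V^*$ for that LP. I would prove it by the standard monotonicity–contraction argument. $T$ is monotone, so $V\ge TV$ implies $V\ge T^nV$ for all $n$. $T$ is a $\gamma$-contraction in the sup norm, so $T^nV\to V^*$, and hence $V\ge V^*$. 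Then
\[
Q(s,a)\ \ge\ R(s,a)+\gamma(PV)(s,a)\ \ge\ R(s,a)+\gamma(PV^*)(s,a)\ =\ Q^*(s,a),
\]
where the middle step uses that $P$ has nonnegative entries.

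\emph{Uniqueness.} For any feasible $(Q,V)$ we can now write
\[
F(Q,V)-F(Q^*,V^*)=\langle c_Q,Q-Q^*\rangle+\langle c_V,V-V^*\rangle .
\]
Both differences are componentwise nonnegative and all weights are strictly positive, so this quantity is $\ge 0$, with equality only if $Q=Q^*$ and $V=V^*$. Therefore $(Q^*,V^*)$ is the unique minimizer.

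The only step needing care is the domination $V\ge V^*$; the monotone-iteration argument settles it. Alternatively, one can cite the uniqueness of $V^*$ as the solution of \eqref{eq:lp-v} for every positive $\rho$: taking $\rho=e_s+\epsilon\mathbf 1$ and letting $\epsilon\to0$ yields $V(s)\ge V^*(s)$ for each $s$. No properties of $\beta$ or $K$ are used beyond Lemma~\ref{lem:kstep-redundant}.
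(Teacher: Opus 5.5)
Your proof is correct and matches the paper's own argument essentially step for step: feasibility of $(Q^*,V^*)$ from Bellman optimality plus Lemma~\ref{lem:kstep-redundant}, then $V\ge TV$ upgraded to $V\ge V^*$ by monotone iteration and contraction, then $Q\ge Q^*$ from the nonnegativity of $P$, and finally uniqueness from the strictly positive weights $c_Q,c_V$. Your alternative derivation of $V\ge V^*$ from uniqueness of the $V$-only LP with $\rho=e_s+\epsilon\mathbf 1$ is also valid, but it is not needed.
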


The proof is in Appendix~\ref{app:lblp-exactness}. Both results hold for any
finite MDP and any rollout policy, so the rollout constraints require no
off-policy correction, and they carry over to the offline setting, where
$\mathcal{M}$ is unavailable and only the dataset $\mathcal{D}$ is given.


\subsection{LBLP on the Empirical MDP of an Offline Dataset}
\label{sec:offline-lblp}

The dataset \(\mathcal D\) induces an empirical MDP \({\mathcal M}_{\mathcal D}\) with pair set \(\mathcal X_{\mathcal D}=\{(s,a):(s,a,r,s')\in\mathcal D\}\) and action sets
\(\mathcal A_{\mathcal D}(s)=\{a:(s,a)\in\mathcal X_{\mathcal D}\}\). Appendix~\ref{app:empirical-mdp} gives the full construction. Since \({\mathcal M}_{\mathcal D}\) is a finite discounted MDP, Definition~\ref{def:lblp} applies directly. We write
\((Q^{*}_{\mathcal D},V^{*}_{\mathcal D})\) for its minimizer, which
Proposition~\ref{prop:lblp-exactness} identifies as the Bellman-optimal pair of
\({\mathcal M}_{\mathcal D}\), so that
\(V^{*}_{\mathcal D}(s)=\max_{a\in\mathcal A_{\mathcal D}(s)}Q^{*}_{\mathcal D}(s,a)\)
with the maximum taken over the actions the data contains at \(s\). Here $\beta$ is a policy on $\mathcal M_{\mathcal D}$, since it selects only
actions the data contains, and the program never evaluates an action outside the
data, as its constraints live on $\mathcal X_{\mathcal D}$ and the maximization
ranges over $\mathcal A_{\mathcal D}$. For \((s,a)\in\mathcal X_{\mathcal D}\), let \(\mathcal H_{\mathcal D}(s,a)\)
be the set of paths obtained by taking a trajectory of \(\mathcal D\) in which
\((s,a)\) appears and following it from that point to its end. Writing \(G(\tau)=\sum_{t\ge 0}\gamma^{t}R_{\mathcal D}(s_t,a_t)\) for the
discounted return along \(\tau=\big((s_0,a_0),(s_1,a_1),\dots\big)\), we define
\(G_{\mathcal D}(s,a):=\max_{\tau\in\mathcal H_{\mathcal D}(s,a)} G(\tau)\).

\begin{proposition}[Sandwich property of the LBLP solution]
\label{prop:sandwich}
Suppose the transition dynamics of $\mathcal{M}$ are deterministic, so that $P_{\mathcal{D}}(\cdot \mid s,a) = P(\cdot \mid s,a)$ and $R_{\mathcal{D}}(s,a) = R(s,a)$ for every $(s,a) \in \mathcal{X}_{\mathcal{D}}$, and suppose every trajectory of $\mathcal{D}$ ends at a state $s$ with $\mathcal{A}_\mathcal{D}(s)=\emptyset$ that is terminal in $\mathcal{M}$. Then for every \((s,a)\in\mathcal X_{\mathcal D}\), \(G_{\mathcal D}(s,a)
\;\le\;
Q^{*}_{\mathcal D}(s,a)
\;\le\;
Q^{*}(s,a),\) where $Q^{*}$ denotes the optimal value in $\mathcal{M}.$
\end{proposition}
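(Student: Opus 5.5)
The argument splits into the two inequalities. Both rest on Proposition~\ref{prop:lblp-exactness}, which identifies $(Q^*_{\mathcal D},V^*_{\mathcal D})$ with the Bellman-optimal pair of $\mathcal M_{\mathcal D}$. Concretely, we will use $Q^*_{\mathcal D}(s,a)=R_{\mathcal D}(s,a)+\gamma (P_{\mathcal D}V^*_{\mathcal D})(s,a)$ and $V^*_{\mathcal D}(s)=\max_{a\in\mathcal A_{\mathcal D}(s)}Q^*_{\mathcal D}(s,a)$, with $V^*_{\mathcal D}(s)=0$ at states where $\mathcal A_{\mathcal D}(s)=\emptyset$. We adopt this terminal convention from Appendix~\ref{app:empirical-mdp}; equivalently, such states absorb with zero reward.

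\textbf{Lower bound.} Fix $(s,a)\in\mathcal X_{\mathcal D}$ and a path $\tau=((s_0,a_0),(s_1,a_1),\dots,(s_{T-1},a_{T-1}))\in\mathcal H_{\mathcal D}(s,a)$ with $(s_0,a_0)=(s,a)$. Let $s_T$ be the final state of the trajectory, so $\mathcal A_{\mathcal D}(s_T)=\emptyset$. Since the dynamics are deterministic and each $(s_t,a_t,s_{t+1})$ is a transition of $\mathcal D$, $P_{\mathcal D}(\cdot\mid s_t,a_t)$ is the point mass at $s_{t+1}$. The Bellman equation in $\mathcal M_{\mathcal D}$ then gives $Q^*_{\mathcal D}(s_t,a_t)=R(s_t,a_t)+\gamma V^*_{\mathcal D}(s_{t+1})$. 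Because $(s_{t+1},a_{t+1})\in\mathcal X_{\mathcal D}$, we also have $V^*_{\mathcal D}(s_{t+1})\ge Q^*_{\mathcal D}(s_{t+1},a_{t+1})$ for $t+1<T$. Unrolling by backward induction from $t=T-1$, where $V^*_{\mathcal D}(s_T)=0$, yields $Q^*_{\mathcal D}(s,a)\ge\sum_{t=0}^{T-1}\gamma^tR(s_t,a_t)=G(\tau)$. Maximizing over $\tau$ gives $G_{\mathcal D}(s,a)\le Q^*_{\mathcal D}(s,a)$. The step that needs care is confirming that, in the empirical construction, the successor of a dataset pair is exactly the recorded next state (determinism), and that paths end at data-terminal states with value $0$. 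Alternatively, this half follows from Lemma~\ref{lem:kstep-redundant} applied with a deterministic rollout policy along $\tau$ and $K=T$.

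\textbf{Upper bound.} We show that the pair $(Q^*,V^*)$ of $\mathcal M$, restricted to $\mathcal X_{\mathcal D}$ and to the states of $\mathcal M_{\mathcal D}$, is feasible for the one-step constraints of the LBLP on $\mathcal M_{\mathcal D}$. For $(s,a)\in\mathcal X_{\mathcal D}$, the hypotheses $P_{\mathcal D}=P$ and $R_{\mathcal D}=R$ give $g_{\mathrm B}(Q^*,V^*)(s,a)=R(s,a)+\gamma(PV^*)(s,a)-Q^*(s,a)=0$. Moreover $Q^*(s,a)\le\max_{a'\in\mathcal A}Q^*(s,a')=V^*(s)$, so $g_{\mathrm E}\le0$.

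At data-terminal states the empirical MDP fixes the value to $0$, and $V^*=0$ there because these states are terminal in $\mathcal M$. The empirical constraint at a pair leading to such a state therefore reads $Q(s,a)\ge R(s,a)$, which $Q^*$ satisfies with equality. This consistency at terminal states is the main obstacle. It is exactly why the hypothesis that trajectories end at states terminal in $\mathcal M$ is imposed: otherwise $V^*(s_T)$ could be negative while the empirical value is $0$, and the comparison could fail.

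Given feasibility, the monotonicity step used in the proof of Proposition~\ref{prop:lblp-exactness} applies. Every feasible point satisfies $Q\ge Q^*_{\mathcal D}$ componentwise, obtained by iterating the contraction $Q\ge R+\gamma P_{\mathcal D}\max Q$ on $\mathcal M_{\mathcal D}$. Applying this to the feasible point $(Q^*,V^*)$ gives $Q^*_{\mathcal D}(s,a)\le Q^*(s,a)$ for every $(s,a)\in\mathcal X_{\mathcal D}$.
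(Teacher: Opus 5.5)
Your proposal is correct and follows the paper's proof. The lower bound is the same unrolling of $Q^*_{\mathcal D}(s_t,a_t)=r_t+\gamma V^*_{\mathcal D}(s_{t+1})\ge r_t+\gamma Q^*_{\mathcal D}(s_{t+1},a_{t+1})$ down to the data-terminal state with $V^*_{\mathcal D}(s_T)=0$. The upper bound is the same super-solution argument: you phrase $\mathcal T_{\mathcal D}Q^*|_{\mathcal X_{\mathcal D}}\le Q^*|_{\mathcal X_{\mathcal D}}$ as feasibility of $(Q^*,V^*)$ for the one-step LBLP constraints on $\mathcal M_{\mathcal D}$, with your epigraph check $Q^*(s,a)\le V^*(s)$ playing the role of the paper's $\mathcal A_{\mathcal D}(s')\subseteq\mathcal A$, and then run the same monotone-contraction iteration.
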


\begin{wrapfigure}{r}{0.35\textwidth}
\vspace{-\baselineskip}
\centering
\includegraphics[width=0.34\textwidth]{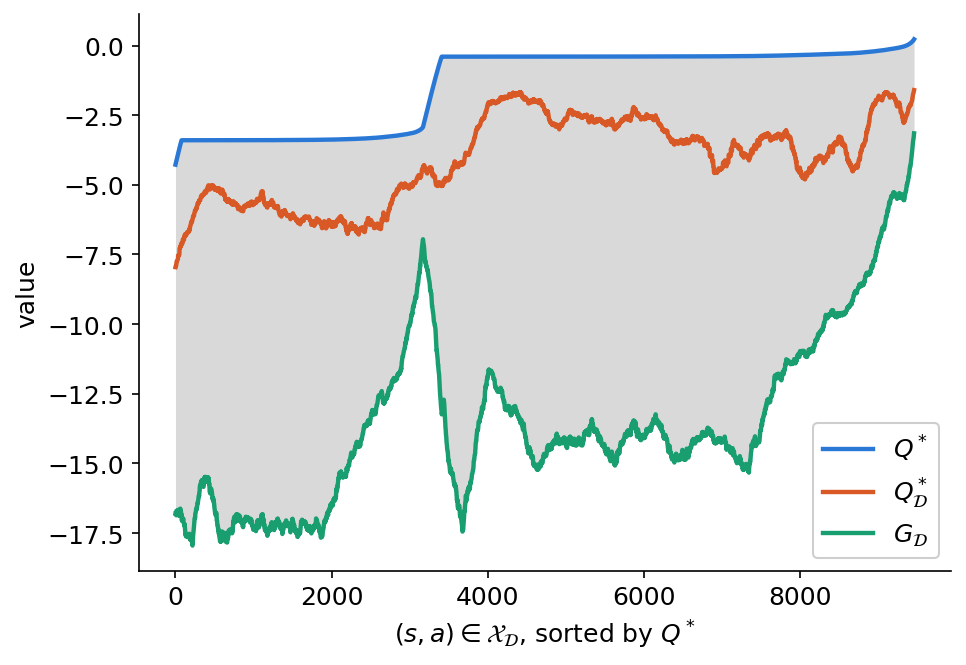}
\caption{Proposition~\ref{prop:sandwich} on a tabular gridworld. Each point is
a sorted pair in $\mathcal X_{\mathcal D}$.}
\label{fig:sandwich}
\vspace{-\baselineskip}
\end{wrapfigure}

The proof is provided in Appendix~\ref{app:proof-sandwich}. Under full behavioral coverage the left inequality also holds against
\(Q^{\beta}\), and both bounds carry over to the state-value function
(Appendix~\ref{app:sandwich-variants}). Conservative methods such as CQL \citep{kumar2020conservative} obtain this upper bound on their fixed point through a penalty coefficient that controls the degree of conservatism, and excessive conservatism can degrade performance \citep{lyu2022mildly, cen2024learning}. For the LBLP minimizer, both bounds of Proposition~\ref{prop:sandwich} hold without a coefficient, and the lower bound $G_{\mathcal{D}}$ depends on $\mathcal{D}$ alone. Figure~\ref{fig:sandwich} verifies this on a tabular gridworld where $Q^{*}$ is
available in closed form, solving the LBLP of Definition~\ref{def:lblp} as an
exact LP. The minimizer stays inside the admissible interval at
every pair of $\mathcal X_{\mathcal D}$.
Appendix~\ref{app:sandwich} gives the construction and repeats the measurement
across eight seeds.

\subsection{Acceleration through Rollout constraints}
\label{subsec:acc}

By Lemma~\ref{lem:kstep-redundant} and Proposition~\ref{prop:lblp-exactness},
the rollout constraints and their horizon can be varied without changing the solution, and we show that such a choice can accelerate convergence toward it. To characterize this effect, we consider an idealized scheme that
approaches $(Q^*_{\mathcal D},V^*_{\mathcal D})$ from below. Assume deterministic
dynamics and fix, for each $(s,a)\in\mathcal X_{\mathcal D}$, a dataset
trajectory starting from $(s,a)$, along which $s^{(k)}(s,a)$ is the state reached
after $k$ transitions and $G_k(s,a)$ collects the discounted rewards on the way.
For $k\in\{1,K\}$, let $L_k[V](s,a):=G_k(s,a)+\gamma^kV(s^{(k)}(s,a))$ denote the
$k$-step target. Starting from an underestimate $V_0\le V^*_{\mathcal D}$, the
scheme updates
\begin{equation}
Q_{n+1}(s,a)=\max_{k\in\{1,K\}}L_k[V_n](s,a),\qquad
V_{n+1}(s)=\max_{a\in\mathcal A_{\mathcal D}(s)}Q_{n+1}(s,a),
\label{eq:repair}
\end{equation}
so that each state settles at the largest $Q_{n+1}(s,a)$ the data offers there.
With $K=1$, \eqref{eq:repair} is value iteration on $\mathcal M_{\mathcal D}$,
so comparing it with $K>1$ isolates how the rollout horizon affects value
propagation. We define the suboptimality of the realized $k$-step path and the
current value gap as
$\delta_k(s,a) := Q^*_{\mathcal D}(s,a) - L_k[V^*_{\mathcal D}](s,a)$ and
$e_n := V^*_{\mathcal D} - V_n$, respectively.

\begin{theorem}[Multi-horizon speedup] \label{thm:speedup}
Under the hypotheses of Lemma~\ref{lem:propagation}, the following hold.
\begin{enumerate}
    \item $\|e_{n+1}\|_\infty \le \gamma\|e_n\|_\infty$ for any rollout length $K$,
    so the error decays geometrically at rate $\gamma$ and the iterates converge to
    $(Q^*_{\mathcal D}, V^*_{\mathcal D})$. This is the minimizer of LBLP on
    $\mathcal M_{\mathcal D}$ identified in Proposition~\ref{prop:lblp-exactness}.

    \item The error $\big( Q^*_{\mathcal D} - Q_{n+1}\big)(s,a)$ is a minimum
over $k \in \{1,K\}$ and is therefore no larger than its $k=1$ term, so
adding rollout horizon never slows the iteration. At coordinates where the
$K$-step term attains the minimum, the error is at most
$\delta_K(s,a)+\gamma^K\|e_n\|_\infty$ rather than $\gamma\|e_n\|_\infty$.
\end{enumerate}
\end{theorem}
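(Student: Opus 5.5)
Under deterministic dynamics on $\mathcal M_{\mathcal D}$, I would first record the identity
\[
\big(Q^*_{\mathcal D}-L_k[V_n]\big)(s,a)=\delta_k(s,a)+\gamma^k e_n\big(s^{(k)}(s,a)\big)
\]
for $k\in\{1,K\}$. Subtracting $L_k[V_n]$ from $L_k[V^*_{\mathcal D}]$ leaves only $\gamma^k(V^*_{\mathcal D}-V_n)$ at $s^{(k)}$, because the reward part $G_k$ cancels. Two facts from the hypotheses of Lemma~\ref{lem:propagation} (deterministic dynamics, fixed dataset trajectories) are then needed. First, $\delta_1\equiv 0$, since with deterministic transitions $L_1[V^*_{\mathcal D}](s,a)=R_{\mathcal D}(s,a)+\gamma V^*_{\mathcal D}(s')=Q^*_{\mathcal D}(s,a)$. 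Second, $\delta_K\ge 0$. For this I unroll $Q^*_{\mathcal D}(s_t,a_t)=R_{\mathcal D}(s_t,a_t)+\gamma V^*_{\mathcal D}(s_{t+1})$ along the fixed path and use $V^*_{\mathcal D}(s_{t+1})\ge Q^*_{\mathcal D}(s_{t+1},a_{t+1})$ at each step; this is exactly the argument of Lemma~\ref{lem:kstep-redundant} specialized to the deterministic path. If a trajectory terminates before $K$ steps, I use the convention that the value of a terminal state is zero, so the bound degenerates consistently.

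Taking the max over $k$ in \eqref{eq:repair} turns the identity into
\[
\big(Q^*_{\mathcal D}-Q_{n+1}\big)(s,a)=\min_{k\in\{1,K\}}\big[\delta_k(s,a)+\gamma^k e_n(s^{(k)})\big].
\]
This is the first sentence of part 2. Bounding the min by the $k=1$ term gives at most $\gamma e_n(s^{(1)})\le\gamma\|e_n\|_\infty$, and bounding it by the $K$ term gives at most $\delta_K+\gamma^K\|e_n\|_\infty$, which covers the rest of part 2.

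Part 1 requires two-sided control. The error $e_n\ge 0$ holds by induction from $V_0\le V^*_{\mathcal D}$. The inductive step uses $\delta_k\ge0$ and $e_n\ge0$, which make every term in the min nonnegative, so $Q_{n+1}\le Q^*_{\mathcal D}$ and hence $V_{n+1}\le V^*_{\mathcal D}$. For the upper side, let $a^*$ attain $V^*_{\mathcal D}(s)$. Then
\[
e_{n+1}(s)=V^*_{\mathcal D}(s)-\max_a Q_{n+1}(s,a)\le\big(Q^*_{\mathcal D}-Q_{n+1}\big)(s,a^*)\le\gamma\|e_n\|_\infty .
\]
Together these give $0\le e_{n+1}\le\gamma\|e_n\|_\infty$, hence geometric convergence of $V_n$. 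Convergence of $Q_n$ follows from the displayed error bound. The limit is identified as the LBLP minimizer via Proposition~\ref{prop:lblp-exactness} applied to $\mathcal M_{\mathcal D}$.

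The main obstacle is the sign fact $\delta_K\ge0$. Without it, the $K$-step target could overshoot and destroy the monotone sandwich $V_n\le V^*_{\mathcal D}$, and part 1 would fail. I would handle it carefully: make explicit that $s^{(k)}$ and the rewards are those of the fixed realized path, and that determinism makes $L_K[V^*_{\mathcal D}]$ coincide with the rollout target $y_K$ of Definition~\ref{def:lblp} for the degenerate rollout policy that follows that path. Lemma~\ref{lem:kstep-redundant} can then be invoked directly at the feasible point $(Q^*_{\mathcal D},V^*_{\mathcal D})$.
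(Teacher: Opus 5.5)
Your proposal is correct and follows essentially the same route as the paper: Lemma~\ref{lem:propagation} proves $\delta_1=0$ and $\delta_K\ge 0$ by the same unrolling along the recorded path, preserves the underestimate by induction, and derives the same min-identity; the theorem's proof then bounds the min by its $k=1$ term and passes to $e_{n+1}$ via $\max_a h_1(a)-\max_a h_2(a)\le\max_a (h_1-h_2)(a)$, which is exactly your choice of $a^*$. One small caution: for $\delta_K\ge0$, rely on the direct unrolling along the recorded path (as in Remark~\ref{rem:path-redundant}), because invoking Lemma~\ref{lem:kstep-redundant} with a ``path-following'' rollout policy only applies literally when that policy is a stationary Markov policy, which fails if the recorded trajectory revisits a state with a different action.
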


The proof is provided in Appendix~\ref{app:speedup}. To verify this
acceleration, we ran a numerical simulation on the same gridworld environment
used in Section~\ref{sec:offline-lblp}, with the detailed setting described in
Appendix~\ref{app:toy}. The results are reported in Appendix~\ref{app:kspeed}.

\subsection{Exactness of the Relaxed LBLP}
\label{sec:hinge_theory}

LBLP is a constrained program, and enforcing its constraints directly becomes
impractical once $Q$ and $V$ are parameterized by neural networks. Replacing the constraints with penalties on
their violation turns LBLP into an unconstrained problem, and with hinge
penalties this replacement keeps the solution of LBLP once the penalty weights
are large enough. We state the relaxation over the same variables as
Definition~\ref{def:lblp}, with the rollout value given by the expectation
$y_K$, thereby maintaining a direct correspondence with the original program.

\begin{definition}[Relaxed LBLP]
\label{def:relaxed_LBLP}
Let \(\mathcal M\), \(\beta\), and \(K\) be as in Definition~\ref{def:lblp}, with the objective and constraint functions defined there. For
penalty weights \(\lambda_{\mathrm B},\lambda_{\mathrm E},\lambda_{\mathrm K}>0\),
the relaxed LBLP minimizes
\begin{equation} \label{eq:relaxed-lblp}
\begin{aligned}
\mathcal L_{\mathrm{R}}(Q,V)
:=&~
F(Q,V)
+
\lambda_{\mathrm B}
\left\|
\left[g_{\mathrm B}(Q,V)\right]_+
\right\|_1
+
\lambda_{\mathrm E}
\left\|
\left[g_{\mathrm E}(Q,V)\right]_+
\right\|_1
\\
&+
\lambda_{\mathrm K}
\left(
\left\|
\left[g_{\mathrm{KQ}}(Q,V)\right]_+
\right\|_1
+
\left\|
\left[g_{\mathrm{KV}}(Q,V)\right]_+
\right\|_1
\right),
\end{aligned}
\end{equation}
\end{definition}

\begin{proposition}[Exactness of the hinge relaxation] \label{prop:exactness}
Let $(\lambda^*_{\mathrm B}, \lambda^*_{\mathrm E}) \ge 0$ be optimal dual
variables of the program on ${\mathcal M}_{\mathcal D}$ that keeps only the
constraints $g_{\mathrm B} \le 0$ and $g_{\mathrm E} \le 0$, and suppose
$\lambda_{\mathrm B} > \max_{(s,a)\in\mathcal X_{\mathcal D}} \lambda^*_{\mathrm B}(s,a)$
and
$\lambda_{\mathrm E} > \max_{(s,a)\in\mathcal X_{\mathcal D}} \lambda^*_{\mathrm E}(s,a)$.
Then $( Q^*_{\mathcal D}, V^*_{\mathcal D})$ is the unique
minimizer of~\eqref{eq:relaxed-lblp}, for every $\lambda_{\mathrm K}\ge0$ and
every rollout length $K$.
\end{proposition}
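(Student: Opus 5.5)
The proof follows the classical exact-penalty argument for linear programs, in two steps. First, the $K$-step hinge terms vanish at the target point and are nonnegative elsewhere, so they cannot change the minimizer. Second, for the remaining terms, the pointwise lower bound comes from the Lagrangian of the reduced program with only $g_{\mathrm B}\le0$ and $g_{\mathrm E}\le0$, and strictness of the penalty weights gives uniqueness.

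Write $\mathcal L_1(Q,V):=F+\lambda_{\mathrm B}\|[g_{\mathrm B}]_+\|_1+\lambda_{\mathrm E}\|[g_{\mathrm E}]_+\|_1$, so that $\mathcal L_{\mathrm R}=\mathcal L_1+\lambda_{\mathrm K}(\cdots)\ge\mathcal L_1$.

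\textbf{Value at the target point.} By Proposition~\ref{prop:lblp-exactness}, applied to $\mathcal M_{\mathcal D}$, the pair $(Q^*_{\mathcal D},V^*_{\mathcal D})$ satisfies $g_{\mathrm B}\le0$ and $g_{\mathrm E}\le0$. By Lemma~\ref{lem:kstep-redundant} it also satisfies $g_{\mathrm{KQ}}\le0$ and $g_{\mathrm{KV}}\le0$. Hence every hinge term vanishes there, and
\[\mathcal L_{\mathrm R}(Q^*_{\mathcal D},V^*_{\mathcal D})=F(Q^*_{\mathcal D},V^*_{\mathcal D})=:F^*.\]
It therefore suffices to show $\mathcal L_1(Q,V)>F^*$ for every $(Q,V)\neq(Q^*_{\mathcal D},V^*_{\mathcal D})$. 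This holds for every $\lambda_{\mathrm K}\ge0$ and every $K$.

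\textbf{Lower bound via duality.} The reduced program is a feasible, bounded LP whose unique minimizer is $(Q^*_{\mathcal D},V^*_{\mathcal D})$; the argument of Proposition~\ref{prop:lblp-exactness} uses only the one-step constraints. LP strong duality then gives optimal multipliers $\lambda^*_{\mathrm B},\lambda^*_{\mathrm E}\ge0$ with
\[\mathcal L(Q,V):=F+\langle\lambda^*_{\mathrm B},g_{\mathrm B}\rangle+\langle\lambda^*_{\mathrm E},g_{\mathrm E}\rangle .\]
Because $\mathcal L$ is affine in $(Q,V)$ and its infimum equals $F^*$, it is constant: $\mathcal L(Q,V)=F^*$ for all $(Q,V)$. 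Using $\langle\lambda^*,g\rangle\le\langle\lambda^*,[g]_+\rangle\le\max\lambda^*\cdot\|[g]_+\|_1$, we get
\[\mathcal L_1(Q,V)\ \ge\ F^*+\big(\lambda_{\mathrm B}-\max\lambda^*_{\mathrm B}\big)\|[g_{\mathrm B}]_+\|_1+\big(\lambda_{\mathrm E}-\max\lambda^*_{\mathrm E}\big)\|[g_{\mathrm E}]_+\|_1 .\]

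\textbf{Uniqueness.} Both coefficients on the right are strictly positive by hypothesis. So if $(Q,V)$ is infeasible for the reduced program, then $\mathcal L_1(Q,V)>F^*$. If $(Q,V)$ is feasible, the penalty terms vanish, $\mathcal L_1=F$, and the uniqueness part of Proposition~\ref{prop:lblp-exactness} gives $F(Q,V)>F^*$ unless $(Q,V)=(Q^*_{\mathcal D},V^*_{\mathcal D})$. Combined with $\mathcal L_{\mathrm R}\ge\mathcal L_1$, this proves the claim.

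\textbf{Main obstacle.} The delicate step is the uniqueness bookkeeping: the strict inequalities on $\lambda_{\mathrm B},\lambda_{\mathrm E}$ are exactly what exclude infeasible minimizers. A secondary point is domain: all vectors, including the dual variables, must be indexed over $\mathcal X_{\mathcal D}$ and the states of $\mathcal M_{\mathcal D}$, including terminal ones. These same indices define the maxima in the hypothesis and the norms in the penalty terms.
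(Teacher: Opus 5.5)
Your proposal is correct and is essentially the paper's argument. Both show that every hinge vanishes at $(Q^*_{\mathcal D},V^*_{\mathcal D})$ via Lemma~\ref{lem:kstep-redundant}, lower-bound the one-step part by the Lagrangian of the reduced program using strong duality, use the strict thresholds to rule out infeasible minimizers, and then invoke uniqueness from Proposition~\ref{prop:lblp-exactness}; your remarks that the Lagrangian is in fact constant and that $\max\lambda^*$ can replace the componentwise slack $\lambda_{\mathrm B}\mathbf 1-\lambda^*_{\mathrm B}$ are only cosmetic differences.
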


The proof is provided in Appendix~\ref{app:exactness}. The threshold is finite,
so the solution is recovered at a finite setting of the coefficients rather
than in a limit. The threshold involves $g_{\mathrm B}$ and $g_{\mathrm E}$ alone, so
exactness places no condition on the rollout coefficient
$\lambda_{\mathrm K}$. The acceleration of Section~\ref{subsec:acc} also appears
when \eqref{eq:relaxed-lblp} is minimized by gradient descent, as
Appendix~\ref{app:kspeed} shows. Proposition~\ref{prop:exactness} relates LBLP to its tabular hinge relaxation and does not enter the analysis of ALBUM.

%% file: contents/05_algor.tex
\section{Approximate Lifted Bellman Unconstrained Minimization}
\label{sec:album}

ALBUM implements the relaxed LBLP of Definition~\ref{def:relaxed_LBLP}
with neural networks. Its critic update detaches the $K$-step rollout targets
by \texttt{stop\_gradient}. The lifted variables are represented by separate
networks $Q_\theta$ and $V_\phi$. Given a transition sequence
$(s_0,a_0,r_0,\dots,r_{K-1},s_K)$ from the dataset with
$(s,a)=(s_0,a_0)$, let us define
$\hat y_k(s,a):=\sum_{t=0}^{k-1}\gamma^{t}r_t+\gamma^{k}V_\phi(s_k)$
for $k\in\{1,K\}$.
The ALBUM critic loss is
\begin{equation}
\label{eq:album-loss}
\boxed{
\begin{aligned}
\mathcal L(\theta,\phi)
=&~
\omega_Q\,\mathbb E_{\mathcal D}\!\left[Q_\theta(s,a)\right]
+
\omega_V\,\mathbb E_{\mathcal D}\!\left[V_\phi(s)\right]
\\
&+
\lambda_{\mathrm B}\,
\mathbb E_{\mathcal D}\!\left[
\left[\hat y_1(s,a)-Q_\theta(s,a)\right]_+
\right]
+
\lambda_{\mathrm E}\,
\mathbb E_{\mathcal D}\!\left[
\left[Q_\theta(s,a)-V_\phi(s)\right]_+
\right]
\\
&+
\lambda_{\mathrm K}
\left(
\mathbb E_{\mathcal D}\!\left[
\left[\operatorname{sg}\!\big[\hat y_K(s,a)\big]-Q_\theta(s,a)\right]_+
\right]
+
\mathbb E_{\mathcal D}\!\left[
\left[\operatorname{sg}\!\big[\hat y_K(s,a)\big]-V_\phi(s)\right]_+
\right]
\right)
\end{aligned}
}
\end{equation}
where $\operatorname{sg}[\cdot]$ denotes \texttt{stop\_gradient}, so the rollout
target is treated as a constant at each update. The two objective terms are
the empirical counterparts of $F$, with the sampling distribution of
$\mathcal D$ playing the role of the weights $c_Q$ and $c_V$, which are
positive on every pair of $\mathcal M_{\mathcal D}$. The sample-based
$\hat y_K$ replaces the expectation in Definition~\ref{def:lblp} with a
single trajectory of $\mathcal D$, and under deterministic dynamics the
one-step constraints still imply this sampled constraint
(Remark~\ref{rem:path-redundant}).

Unlike a target network in temporal difference regression, which stabilizes a
bootstrapped target by delaying its update, the detached target
$\operatorname{sg}[\hat y_K]$ enters only as a lower bound inside a hinge.
Treating $\hat y_K$ as a constant removes a gradient term that shared parameters
would otherwise introduce into the update of $V_\phi(s)$ and that is absent from
the coordinatewise formulation in \eqref{eq:relaxed-lblp}
(Appendix~\ref{ablation:stop_grad}).
Thus, the implemented critic update is distinct from the gradient of the
scalar objective in \eqref{eq:album-loss}. Under deterministic dynamics and additional conditions on the dataset, Appendix~\ref{app:stationary} further shows that $(Q^*_\mathcal{D}, V^*_\mathcal{D})$ is a stationary point of the detached update (Proposition~\ref{prop:stationary}).

Since our contribution concerns critic optimization, we extract the policy with
a standard objective from the offline RL literature so that the reported gains are attributable to the critic rather than to policy
extraction. We use the DDPG+BC
objective~\citep{fujimoto2021minimalist, tarasov2023revisiting, park2024value},
\begin{equation}
\label{eq:album-actor-loss}
\mathcal L_{\mathrm{actor}}(\psi)
=
-\,\mathbb E_{s\sim\mathcal D}\!\left[Q_\theta(s,\pi_\psi(s))\right]
+
\alpha_{\mathrm{BC}}\,
\mathbb E_{(s,a)\sim\mathcal D}\!\left[\left\|\pi_\psi(s)-a\right\|_2^2\right].
\end{equation}

As in other in-sample methods~\citep{kostrikov2021offline}, the critic is never
trained at actions outside the dataset, so its values there rest on function
approximation, and the behavioral cloning term keeps the actor's queries near
the data.

\subsection{A Necessary Condition for Bounded Updates}
\label{sec:boundedness}

\begin{proposition}[Necessary conditions for bounded updates]
\label{prop:bounded}
Let $Q$ and $V$ in \eqref{eq:album-loss} be free variables, and consider lowering every entry of $Q$ and $V$ by the same amount.
\begin{enumerate}
\item[(i)] The ALBUM update, which treats $\mathrm{sg}[\hat{y}_K]$ as a constant, does not lower $Q$ and $V$ without bound along this direction only if
\begin{equation}\label{eq:bounded-sg}
\lambda_{\mathrm B}(1-\gamma) + 2\lambda_{\mathrm K} \;\ge\; \omega_{\mathrm Q}+\omega_{\mathrm V}.
\end{equation}
\item[(ii)] The gradient update of \eqref{eq:album-loss} without stop gradient does not lower $Q$ and $V$ without bound along this direction only if
\begin{equation}\label{eq:bounded}
\lambda_{\mathrm B}(1-\gamma) + 2\lambda_{\mathrm K}(1-\gamma^{K}) \;\ge\; \omega_{\mathrm Q}+\omega_{\mathrm V}.
\end{equation}
\end{enumerate}
\end{proposition}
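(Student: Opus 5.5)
We compute the directional derivative of the update along the uniform shift $(Q,V)\mapsto(Q-c\mathbf 1,V-c\mathbf 1)$, $c>0$, and ask when lowering stops being a descent direction. The loss is piecewise linear, so it suffices to look at large $c$. In that regime every lower-bound hinge is active, and we track how each term changes with $c$.

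Start from an arbitrary $(Q,V)$ and lower by $c$, with $c$ larger than the maximal constraint slack. The four groups of terms behave as follows.

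\begin{itemize}
\item The objective terms contribute $-(\omega_Q+\omega_V)c$.
\item The epigraph hinge $[Q-V]_+$ is invariant under a common shift, so it contributes nothing.
\item The one-step hinge $[\hat y_1-Q]_+$ has argument $r_0+\gamma(V(s_1)-c)-(Q(s,a)-c)$, which grows by $(1-\gamma)c$. Once active it contributes $+\lambda_{\mathrm B}(1-\gamma)c$.
\item The two $K$-step hinges depend on whether the target is detached.
\end{itemize}

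\textbf{Case (i), stop gradient.} The update treats $\operatorname{sg}[\hat y_K]$ as a constant. So the update direction is the (sub)gradient with $\hat y_K$ frozen. Along the shift, the arguments $\operatorname{sg}[\hat y_K]-Q$ and $\operatorname{sg}[\hat y_K]-V$ each grow by $c$. Their gradient components pushing $Q$ and $V$ upward are therefore $\lambda_{\mathrm K}$ each, for a combined $2\lambda_{\mathrm K}$ per unit shift.

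The precise statement: summing the components of the update field along the direction $-\mathbf 1$, at points deep in the lowered region, gives
\[
(\omega_Q+\omega_V)-\lambda_{\mathrm B}(1-\gamma)-2\lambda_{\mathrm K}.
\]
If this quantity is strictly positive, the update keeps lowering the iterates indefinitely. The iteration stays in the same activity region, so the field there is constant and the iterates drift without bound. Its nonpositivity is therefore necessary, which is \eqref{eq:bounded-sg}.

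We do this bookkeeping with expectations over $\mathcal D$. The uniform shift makes every per-sample increment identical, so the expectations factor out trivially.

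\textbf{Case (ii), no stop gradient.} $\hat y_K$ now depends on $V$ through $\gamma^K V_\phi(s_K)$. Each $K$-step argument therefore grows only by $(1-\gamma^K)c$. Computing the actual gradient of \eqref{eq:album-loss} along $-\mathbf 1$ gives
\[
(\omega_Q+\omega_V)-\lambda_{\mathrm B}(1-\gamma)-2\lambda_{\mathrm K}(1-\gamma^K).
\]
Here the update is the true gradient, so this is just the directional derivative of a convex piecewise-linear function, which is constant for large $c$. If it is positive, $\mathcal L$ is unbounded below along $-\mathbf 1$. Gradient descent started in that region then moves along the ray forever, because the gradient there is constant and has a positive component along $-\mathbf 1$. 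This yields \eqref{eq:bounded}.

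\textbf{Main obstacle.} The delicate step is case (i). The detached update is not the gradient of any scalar, so we cannot argue via unboundedness of a loss. The argument has to be phrased in terms of the update vector field itself.

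The plan is to exhibit an explicit region that is invariant under the update. Take all points shifted far below with all lower-bound hinges active; there the field is constant. Then verify that the field's component along $-\mathbf 1$ equals the expression above.

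A secondary care point is interpreting ``only if'' via the sum of components. The field may also have components orthogonal to $\mathbf 1$. To handle this, project onto $\mathbf 1$ and note that the projection of the iterates decreases linearly whenever the inequality fails.
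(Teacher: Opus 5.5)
Your slope computation is the paper's proof. It uses the same common shift $(Q-c\mathbf 1,V-c\mathbf 1)$ and the same regime $c>c_0$ in which every lower-bound hinge is active. The per-term rates are the same, and so is the observation that detaching $\hat y_K$ changes only the derivative the update uses. One correction in (i): the update re-evaluates $\operatorname{sg}[\hat y_K]$ at the current iterate, so at the shifted point its forward value is $\hat y_K|_{V}-\gamma^K c$. The two rollout residuals therefore grow by $(1-\gamma^K)c$, not by $c$; only their derivative along the shift becomes $\lambda_{\mathrm K}$ per hinge. The paper draws exactly this distinction. It is harmless here because $1-\gamma^K>0$ still makes those hinges active for large $c$.

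The step that fails is the one you flag as the main obstacle. Nothing keeps the iterates in the region where all lower-bound hinges are active, because the update is not a common shift. The active $g_{\mathrm B}$ hinge at $(s,a)$ contributes an increase of $Q(s,a)$ and a decrease of $V(s')$, which drives its own residual $r+\gamma V(s')-Q(s,a)$ toward the kink. The activity pattern of $g_{\mathrm E}$ can also change along the iteration, so the field is not constant there. Hence neither ``the iterates stay in the region'' nor, in (ii), ``they move along the ray'' is justified.

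You do not need invariance. Take any point and any admissible subgradient selections $u,v,w,x$ (notation of Appendix~\ref{app:stationary}). The sum of the components of the detached field is
\[
\omega_{\mathrm Q}+\omega_{\mathrm V}-(1-\gamma)\sum_{\mathcal X_{\mathcal D}}u-\sum_{\mathcal X_{\mathcal D}}w-\sum_{\mathcal X_{\mathcal D}}x ,
\]
since $v$ enters with $+$ at $Q(s,a)$ and with $-$ at $V(s)$ and cancels. Because $0\le u\le\lambda_{\mathrm B}d_{\mathcal D}$ and $0\le w,x\le\lambda_{\mathrm K}d_{\mathcal D}$, this sum is at least $\omega_{\mathrm Q}+\omega_{\mathrm V}-\lambda_{\mathrm B}(1-\gamma)-2\lambda_{\mathrm K}$ everywhere. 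So when \eqref{eq:bounded-sg} fails, a gradient step with constant step size lowers $\mathbf 1^\top(Q,V)$ by at least a fixed positive amount, from any initialization.

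Without stop gradient, $w$ and $x$ reappear at $V(s_K)$ with weight $\gamma^K$. The same bound then reads $\omega_{\mathrm Q}+\omega_{\mathrm V}-\lambda_{\mathrm B}(1-\gamma)-2\lambda_{\mathrm K}(1-\gamma^K)$, which gives \eqref{eq:bounded}. This is the aggregation of Remark~\ref{rem:aggregate}. With it your argument becomes rigorous, and stronger than the paper's proof, which only evaluates the slope on the far-lowered ray and asserts the drift.
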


The proof is provided in Appendix~\ref{app:bounded}. ALBUM follows the update in (i), so \eqref{eq:bounded-sg} governs the downward drift of its critic, and \eqref{eq:bounded} applies to an update that ALBUM does not use. The detached rollout target removes $\gamma$ and $K$ from the condition. The same condition extends to neural parameterizations whose outputs can be lowered by the same constant at every input, and Appendix~\ref{app:boundness} shows that the critic leaves the feasible range when it is violated. The default coefficients in Table~\ref{tab:hyperparams} give $\lambda_{\mathrm B}(1-\gamma)+2\lambda_{\mathrm K}=2.0125$ against $\omega_{\mathrm Q}+\omega_{\mathrm V}=0.15$.

%% file: contents/06_exper.tex
\section{Experiments}
\label{sec:experiments}

We evaluate ALBUM on the state-based tasks of OGBench~\citep{ogbench} and
compare it with different policy classes. We report the success rate after
$1$M gradient steps over $8$ seeds, with $50$ evaluation episodes per seed.
All five critic coefficients in~\eqref{eq:album-loss} are fixed across all
domains at the values in Table~\ref{tab:hyperparams} and are not tuned per
environment. Since \eqref{eq:bounded-sg} involves neither the reward nor the
dataset, these values satisfy it in every domain. Although the coefficients
can affect performance, as shown in Appendix~\ref{app:ablation}, we select them
once on the ablation tasks. For the actor, we follow the tuning protocol of
\citet{park2025flow} and tune only the behavioral cloning coefficient
$\alpha_{\mathrm{BC}}$ in~\eqref{eq:album-actor-loss} on the default task of
each environment. This per-environment tuning budget is no larger than that of any baseline, as
ALBUM performs no per-environment critic tuning whereas
ReBRAC~\citep{tarasov2023revisiting} tunes its critic regularization coefficient
per environment~\citep{park2025flow}. The ablations rerun FQL and ReBRAC with
the hyperparameters in Table~6 of \citet{park2025flow}.

\paragraph{Comparison across policy classes.}
Table~\ref{tab:main} compares ALBUM with ten baselines, including Gaussian
policies (BC, IQL~\citep{kostrikov2021offline},
ReBRAC~\citep{tarasov2023revisiting}), diffusion policies
(IDQL~\citep{hansen2023idql}, SRPO~\citep{chen2024score},
CAC~\citep{ding2024consistency}), and flow policies (FAWAC, FBRAC, IFQL,
FQL~\citep{park2025flow}). Their scores are taken from \citet{park2025flow}, who
tuned every baseline under the fair tuning budget, which allows a direct
comparison with ALBUM tuned under the same protocol. ALBUM and FQL achieve the
two highest state-based averages in Table~\ref{tab:main}. ALBUM attains the
highest score on \texttt{antmaze-large-navigate},
\texttt{antmaze-giant-navigate}, \texttt{scene-play}, and
\texttt{puzzle-3x3-play}, by a large margin on the last two, and all four
domains require long-horizon value propagation. ALBUM is weakest on
\texttt{humanoidmaze-large-navigate}. It achieves these results with a Gaussian
policy and without a critic ensemble or a target network. Compared with ReBRAC,
which uses the same policy class, ALBUM improves the average by 12.4 points and
leads on eight of ten domains.



\providecommand{\std}[1]{{\scriptsize$\pm$#1}}
\begin{table}[t]
\centering\footnotesize\setlength{\tabcolsep}{2.5pt}
\resizebox{\textwidth}{!}{%
\begin{tabular}{lcccccccccc|c}
\toprule
 & \multicolumn{3}{c}{Gaussian Policies} & \multicolumn{3}{c}{Diffusion Policies} & \multicolumn{4}{c}{Flow Policies} & \multicolumn{1}{c}{Gaussian (Ours)} \\
\cmidrule(lr){2-4}\cmidrule(lr){5-7}\cmidrule(lr){8-11}\cmidrule(lr){12-12}
Task Category & BC & IQL & ReBRAC & IDQL & SRPO & CAC & FAWAC & FBRAC & IFQL & FQL & ALBUM (Ours) \\
\midrule
antmaze-large-navigate & 10.6 & 53.4 & \underline{80.8} & 20.8 & 10.6 & 32.8 & 6.4 & 60.2 & 28.0 & 78.6 & \textbf{89.8}\std{1.0} \\
antmaze-giant-navigate & 0.2 & 4.0 & \underline{26.2} & 0.0 & 0.0 & 0.0 & 0.0 & 3.8 & 2.6 & 8.6 & \textbf{28.1}\std{5.1}\\
humanoidmaze-medium-navigate & 2.0 & 32.8 & 21.8 & 0.8 & 1.4 & 52.8 & 19.4 & 38.4 & \textbf{60.4} & \underline{57.4} & 35.5\std{1.8} \\
humanoidmaze-large-navigate & 0.4 & 2.4 & 2.6 & 0.6 & 0.2 & 0.6 & 0.2 & 2.2 & \textbf{11.0} & \underline{4.2} & 1.6\std{0.5} \\
antsoccer-arena-navigate & 1.0 & 8.4 & 0.0 & 11.8 & 1.0 & 1.8 & 12.4 & 16.0 & 33.2 & \textbf{60.2} & \underline{36.1}\std{3.8} \\
cube-single-play & 5.4 & 83.0 & 90.6 & \underline{94.6} & 79.6 & 85.2 & 81.2 & 78.6 & 79.2 & \textbf{95.8} & 83.2\std{1.7} \\
cube-double-play & 1.6 & 6.4 & 12.2 & 14.6 & 1.4 & 5.8 & 5.2 & \underline{15.0} & 14.0 & \textbf{28.6} & 13.3\std{2.8} \\
scene-play & 4.6 & 27.6 & 40.6 & 46.2 & 20.0 & 39.8 & 29.8 & 44.8 & 30.4 & \underline{55.8} & \textbf{64.4}\std{5.5} \\
puzzle-3x3-play & 1.8 & 9.0 & 21.6 & 10.4 & 17.8 & 19.4 & 6.4 & 14.0 & 19.0 & \underline{29.6} & \textbf{61.2}\std{6.4} \\
puzzle-4x4-play & 0.2 & 7.4 & 14.0 & \textbf{29.2} & 10.6 & 14.8 & 0.4 & 13.2 & \underline{25.2} & 17.2 & 20.6\std{2.5} \\
\midrule
Average (state-based) & 2.8 & 23.4 & 31.0 & 22.9 & 14.3 & 25.3 & 16.1 & 28.6 & 30.3 & \textbf{43.6} & \textbf{43.4} \\
\bottomrule
\end{tabular}}
\caption{Success rate (\%) on OGBench state-based tasks. ALBUM: mean over $5$
tasks and $8$ seeds at $1$M steps, $\pm$ std across seeds. Baselines from
\citet{park2025flow}. Best in bold, second underlined.}
\label{tab:main}
\end{table}

\subsection{Ablations}
\label{sec:ablations}

In this section, we examine the design properties induced by the LBLP formulation and
test whether the empirical behavior is consistent with the theoretical
analysis. Unless otherwise specified, hyperparameters follow Table~\ref{tab:hyperparams}, with task, seed, and evaluation settings specified in Appendix~\ref{app:ablation}.

\paragraph{Why does ALBUM remain stable without target networks or critic ensembles?}
Averaged over the three tasks in Table~\ref{tab:targetensemble}, removing target networks and EMA updates degrades ReBRAC by 16\% and FQL by 5\%, using a single critic degrades them by 87\% and 34\%, and removing both degrades them by 90\% and 49\%. These results are consistent with the LP-derived critic objective, which contains neither moving-target regression nor action maximization. See Appendix~\ref{ablation:EMA,ensemble} for detailed comparisons.

\providecommand{\std}[1]{{\scriptsize$\pm$#1}}
\begin{table}[h]
\centering\footnotesize\setlength{\tabcolsep}{2.5pt}
\resizebox{\textwidth}{!}{%
\begin{tabular}{l rr rr rr}
\toprule
& \multicolumn{2}{c}{antmaze-large-navigate-task1} & \multicolumn{2}{c}{scene-play-task2} & \multicolumn{2}{c}{puzzle-3x3-play-task4} \\
\cmidrule(lr){2-3}\cmidrule(lr){4-5}\cmidrule(lr){6-7}
 & Success (\%) & $\Delta$ & Success (\%) & $\Delta$ & Success (\%) & $\Delta$ \\
\midrule
ALBUM(w/ ensemble \& w/o target) & $91.8$\std{4.3} & $-$ & $93.0$\std{3.0} & $-$ & $93.2$\std{8.4} & $-$ \\
\textbf{ALBUM(w/o ensemble \& target, Ours)} & $92.5$\std{3.1} & $+0.7$ & $93.2$\std{3.2} & $+0.2$ & $93.0$\std{7.9} & $-0.2$ \\
\midrule
FQL & $88.0$\std{3.3} & $-$ & $84.8$\std{11.5} & $-$ & $10.8$\std{2.0} & $-$ \\
FQL(w/o ensemble) & $54.0$\std{41.5} & $-34.0$ & $58.5$\std{14.7} & $-26.2$ & $8.2$\std{2.1} & $-2.5$ \\
FQL(w/o target) & $83.5$\std{7.5} & $-4.5$ & $77.0$\std{8.5} & $-7.8$ & $14.0$\std{6.6} & $+3.2$ \\
FQL(w/o ensemble \& target) & $45.8$\std{11.5} & $-42.2$ & $35.2$\std{6.6} & $-49.5$ & $12.0$\std{3.9} & $+1.2$ \\
\midrule
ReBRAC & $95.8$\std{3.4} & $-$ & $52.0$\std{15.5} & $-$ & $6.5$\std{2.4} & $-$ \\
ReBRAC(w/o ensemble) & $7.0$\std{18.5} & $-88.8$ & $7.8$\std{10.3} & $-44.2$ & $5.5$\std{5.0} & $-1.0$ \\
ReBRAC(w/o target) & $56.0$\std{38.2} & $-39.8$ & $66.8$\std{11.1} & $+14.8$ & $7.2$\std{3.0} & $+0.8$ \\
ReBRAC(w/o ensemble \& target) & $0.0$\std{0.0} & $-95.8$ & $10.8$\std{10.9} & $-41.2$ & $4.2$\std{4.4} & $-2.2$ \\
\bottomrule
\end{tabular}}
\caption{Success rate (\%) on OGBench state-based default tasks. Mean $8$ seeds at 1M steps, $\pm$ std across seeds, $\Delta$ the gap to the un-ablated method.}
\label{tab:targetensemble}
\end{table}

\paragraph{How does ALBUM compare with action-chunking methods?}
Table~\ref{tab:qc} compares ALBUM with QC-FQL~\citep{li2026reinforcement}, DEAS~\citep{kim2026deas}, DQC~\citep{li2026decoupled}, CGQ~\citep{song2026chunkguided}, and NFQL, an $n$-step variant of FQL evaluated in CGQ. ALBUM attains the highest score on \texttt{antmaze-large-navigate} and \texttt{antmaze-giant-navigate}, exceeding the best baseline by 10.8 and 16.4 points, respectively. Its average of 37.5 is higher than those of NFQL and QC-FQL and lower than those of FQL, DEAS, CGQ, and DQC. Full results are reported in Appendix~\ref{app:qc}.

\paragraph{How much does ALBUM reduce computational cost?}
We compare the training cost per gradient step of ALBUM with ReBRAC, IQL, FQL,
NFQL, QC-FQL, DQC, and CGQ, each in its default configuration
(Table~\ref{tab:cost-antmaze}). These methods hold four to eight value networks,
including critic ensembles and target copies, whereas ALBUM holds two. ALBUM
also uses 2.43M parameters and 78\,MiB of peak memory, the fewest among all
measured methods. Compared with FQL, it reduces the parameters by 50\%, the
peak memory by 39\%, and the time per step by 44\%. The reduction is larger
against action-chunking methods, reaching 76\%, 71\%, and 74\% against CGQ.
With the actor updated at every step for all methods, ALBUM is the fastest,
although ReBRAC with its default delayed actor update remains 15\% faster. See Appendix~\ref{app:cost}.

\paragraph{What happens when the coefficient condition is violated?}
Proposition~\ref{prop:bounded} gives a necessary condition for the update to
avoid an unbounded downward drift, and we enforce \eqref{eq:bounded-sg} when
choosing the penalty coefficients. To verify its necessity in practice,
we set $\lambda_{\mathrm K}=0$ on
\texttt{antmaze-large-navigate}, yielding
$\lambda_{\mathrm B}(1-\gamma)+2\lambda_{\mathrm K}=0.013<0.15
=\omega_{\mathrm Q}+\omega_{\mathrm V}$. The resulting $V_{\min}$ is
driven to the order of $-10^6$, far outside the physical range
$[-1/(1-\gamma),1/(1-\gamma)]=[-200,200]$, confirming the unbounded
regime predicted by Proposition~\ref{prop:bounded}. The task also has near-zero success and only $1\%$ of samples satisfy $g_{\mathrm B}\leq0$.
Full results are reported in Appendix~\ref{app:boundness}.

\paragraph{Does the $K$-step term behave as predicted?}
Theorem~\ref{thm:speedup} shows that adding the $K$-step term does not slow
the idealized iteration and that, where the $K$-step term is selected, the
error contracts at $\gamma^K$. Appendix~\ref{app:kspeed} confirms both in a
tabular setting. Under neural parameterization, a longer horizon accelerates
learning on \texttt{antmaze-large-navigate}, with $K=20$ reaching $50\%$
success at $300$k steps against $500$k for $K=5$. On
\texttt{puzzle-3x3-play} a longer horizon instead lowers the final success
rate, which the idealized iteration does not predict. Removing the $K$-step
term entirely fails on every task. Full results are reported in
Appendix~\ref{ablation:K}.

\paragraph{Does the coefficient condition alone recover performance?}
The condition can be satisfied without the $K$-step constraint by increasing the
one-step penalties. We set $\lambda_{\mathrm K} = 0$ and increase
$\lambda_{\mathrm B}=\lambda_{\mathrm E}$ well beyond the value
$(\omega_Q+\omega_V)/(1-\gamma)$ required by \eqref{eq:bounded-sg}. The values no longer diverge, but the best $\lambda_{\mathrm K}=0$ setting achieves only $4.2\%$, $2.0\%$, and $0.5\%$
success on \texttt{antmaze-large-navigate}, \texttt{scene-play}, and
\texttt{puzzle-3x3-play}, respectively, compared with $92.5\%$, $93.2\%$, and
$93.0\%$ with $\lambda_{\mathrm K}=1$. Thus, the $K$-step constraint contributes
beyond satisfying the condition. Value heatmaps on
\texttt{antmaze-large-navigate} (Figure~\ref{fig:propagate}) indicate faster
value propagation with the $K$-step constraint (Appendix~\ref{riselk}).


\paragraph{Does the ALBUM critic improvement transfer to a generative policy?}
The main experiments fix a Gaussian policy so that the critic is the only
change relative to the baselines of the same policy class in Table~\ref{tab:main}.
To test the critic under an expressive policy, we replace the IQL critic of
IFQL~\citep{park2025flow} with ALBUM without any tuning. The five-task average
rises from 30.4 to 68.8 on \texttt{scene-play} and from 19.0 to 72.3 on
\texttt{puzzle-3x3-play}, above both FQL and ALBUM in Table~\ref{tab:main},
and stays near its original value on \texttt{antmaze-large-navigate}. The gains
concentrate on the tasks IFQL fails to solve, four of which move from 0 to
between 54.0 and 97.0. (Appendix~\ref{app:album-ifql}).

%% file: contents/07_concl.tex
\section{Conclusion}

In this work, we show that in-sample Bellman optimality can be imposed on an offline critic through inequality constraints, rather than through regression. LBLP realizes this principle over the joint $(Q,V)$ space. With positive weight on both $Q$ and $V$, its unique minimizer is the in-sample optimal pair, and $K$-step constraints along dataset trajectories leave this minimizer unchanged. Under deterministic dynamics, this minimizer lies between the best dataset return and the optimal value. The tabular hinge relaxation of LBLP recovers the same solution above a finite penalty coefficient. ALBUM implements this relaxed formulation with neural networks, and we derive a necessary condition on its coefficients for bounded updates that extends to the neural parameterization. The ALBUM critic contains no squared regression, so it trains without target networks or EMA updates, and under deterministic dynamics its $K$-step penalties use multi-step dataset returns as lower bounds without off-policy correction or action chunking. With a single Q and V network and a Gaussian actor, ALBUM matches the average performance of FQL on OGBench, substantially outperforms ReBRAC, with the fewest parameters and the least peak GPU memory among all measured methods. Establishing convergence of the neural update and extending the analysis to stochastic transitions remain future work.

\section*{AI Use Statement}

In this work, we used generative AI tools to assist in formulating and refining some mathematical claims, to suggest proof ingredients and proof sketches for some results, to implement the proposed methods, and to check the proofs as a supplement to the authors' own verification. The theoretical framework, the formulation of LBLP, and the design of the experiments were developed by the authors. We have not used generative AI tools to design research methodology or experiments or to interpret experimental results, and generating synthetic data, cleaning or reformatting datasets, and qualitative or thematic data analysis are not applicable to this work. Additionally, we used generative AI tools for translation and to edit parts of the manuscript for readability and organization. We have reviewed all AI-assisted work. Every proof was checked by at least two authors. We take responsibility for the final content of this work, including all mathematical claims, proofs, implementations, and experimental results.

\section*{Reproducibility statement}

The reference ALBUM implementation is provided as supplementary material with pinned dependency versions and is written in JAX. Algorithm~\ref{alg:album} summarizes the training procedure, with the critic loss and actor objective given in~\eqref{eq:album-loss} and~\eqref{eq:album-actor-loss}, respectively. The hyperparameters shared across all environments are listed in Table~\ref{tab:hyperparams}, the environment-specific behavioral cloning coefficients are reported in Table~\ref{tab:alpha}, and the evaluation protocol, including the number of random seeds and evaluation episodes, is described in Section~\ref{sec:experiments}. The OGBench datasets are publicly available, and the baseline scores are taken from~\citet{park2025flow} and~\citet{song2026chunkguided}. The theoretical results are stated with their assumptions and accompanied by complete proofs in Appendix~\ref{app:proofs}. For the Gridworld experiments, we provide the environment configuration, dataset collection procedure, exact solution, and computational procedure to facilitate reproduction in Appendix~\ref{app:toy}.

%% file: contents/XX_appen.tex
\newpage

\appendix

\section{Limitations}
\label{app:limitations}

Lemma~\ref{lem:kstep-redundant}, Proposition~\ref{prop:lblp-exactness},
Proposition~\ref{prop:exactness}, and Proposition~\ref{prop:bounded} hold for
any finite discounted MDP, and the remaining theoretical results assume
deterministic transitions. The assumption enters at three points.
Proposition~\ref{prop:sandwich} identifies $\mathcal M_{\mathcal D}$ with
$\mathcal M$ on $\mathcal X_{\mathcal D}$, Lemma~\ref{lem:propagation} fixes
one dataset trajectory per pair, and Remark~\ref{rem:path-redundant} replaces
the expectation $y_K$ by a single trajectory. The last one carries the sharpest
consequence. Under stochastic transitions a sampled path is not a lower bound
on $Q^{*}_{\mathcal D}$, so $\hat y_K$ can exceed it and the $K$-step hinges no
longer provide lower bounds.

Under continuous states, each state of $\mathcal{D}$ appears once almost surely,
so $|\mathcal{A}_{\mathcal{D}}(s)| = 1$ for every $s \in \mathcal{S}_{\mathcal{D}}$.
The interval in Proposition~\ref{prop:sandwich} collapses because
$G_{\mathcal{D}} = Q^*_{\mathcal{D}}$, and the LBLP solution reduces to the
discounted return along each dataset trajectory. The same condition gives
$\delta_K = 0$ on $\mathcal{X}_{\mathcal{D}}$, so the stationary point of
Corollary~\ref{cor:single-action} is this trajectory-wise return.
Proposition~\ref{prop:lblp-exactness}, Proposition~\ref{prop:sandwich}, and
Corollary~\ref{cor:single-action} therefore characterize the in-sample
solution within $\mathcal{M}_{\mathcal{D}}$. The collapse follows from
$|\mathcal{A}_{\mathcal{D}}(s)| = 1$ alone and is independent of the LP
formulation, so the same collapse applies to methods that explicitly target
an in-sample optimum \citep{kostrikov2021offline, garg2023extreme,
xu2023offline}.

The results divide into two groups according to whether the neural
parameterization affects them. Proposition~\ref{prop:lblp-exactness},
Proposition~\ref{prop:exactness}, and Proposition~\ref{prop:sandwich}
characterize the in-sample solution underlying the critic loss in
\eqref{eq:album-loss}. These results describe the solution of the program on
$\mathcal{M}_{\mathcal{D}}$ rather than the values the neural critic attains. The mean of the learned critic exceeds the mean of
$V^*_{\mathcal{D}}$ in nine of the ten environments
(Appendix~\ref{app:vstar}). The remaining properties concern the structure of
the implemented loss and update. Every critic term in
\eqref{eq:album-loss} is evaluated at a pair in $\mathcal{D}$, so critic
training never evaluates $Q_\theta$ at an action absent from the data. The
actor objective in \eqref{eq:album-actor-loss} evaluates $Q_\theta$ at actions
outside $\mathcal{D}$, and the behavioral cloning term restricts these
evaluations to a neighborhood of the data. Under deterministic dynamics, the
one-step hinges imply the sampled $K$-step hinges
(Remark~\ref{rem:path-redundant}), so the $K$-step term changes neither the
constraint set nor the in-sample solution targeted by the loss, for any $K$.
Proposition~\ref{prop:bounded} concerns the update along a common shift of all
outputs, which any network with a bias in its final layer admits, so the
condition \eqref{eq:bounded-sg} governs the neural update directly, and
Appendix~\ref{app:boundness} confirms the predicted divergence when it is
violated. A full characterization of the learned values under neural
parameterization remains open.

Theorem~\ref{thm:speedup} concerns an idealized coordinatewise iteration rather
than the ALBUM update, and Appendix~\ref{app:stationary} establishes the
existence of a zero of the detached update rather than uniqueness or
convergence. The coefficient conditions separate divergence from boundedness
and do not separate the settings that perform well from those that do not, as
$\lambda_{\mathrm B}=10$ satisfies them while failing on every task
(Appendix~\ref{sec:abl-lambda}). Extending the analysis to stochastic
transitions and to an empirical MDP with branching action sets remains future
work.

\newpage
\section{Algorithm Pseudocode}
\label{app:pseudocode}

\begin{algorithm}[h]
\caption{Approximate Lifted Bellman Unconstrained Minimization (ALBUM)}
\label{alg:album}
\begin{algorithmic}[1]
\REQUIRE Offline dataset \(\mathcal D\), mini-batch size \(B\), rollout length
\(K\), discount factor \(\gamma\)
\REQUIRE Objective weights \(\omega_Q,\omega_V\), penalty weights
\(\lambda_{\mathrm B},\lambda_{\mathrm E},\lambda_{\mathrm K}\), BC weight
\(\alpha_{\mathrm{BC}}\)
\REQUIRE Learning rates \(\eta_{\mathrm{critic}},\eta_{\mathrm{actor}}\)
\STATE Initialize \(Q_\theta\), \(V_\phi\), and \(\pi_\psi\)
\WHILE{not converged}
    \STATE Sample \(B\) subsequences of length \(K\) from \(\mathcal D\)
    \STATE Compute \( \hat y_1\) and \(\hat y_K\).
    \STATE Compute \(\mathcal L(\theta,\phi)\) using Eq.~\ref{eq:album-loss}
    \STATE \(\theta\leftarrow\theta-\eta_{\mathrm{critic}}\nabla_\theta\mathcal L\),
           \(\phi\leftarrow\phi-\eta_{\mathrm{critic}}\nabla_\phi\mathcal L\)
    \STATE \(\psi\leftarrow\psi-\eta_{\mathrm{actor}}\nabla_\psi\mathcal L_{\mathrm{actor}}\)
           using Eq.~\ref{eq:album-actor-loss}
\ENDWHILE
\RETURN \(Q_\theta,V_\phi,\pi_\psi\)
\end{algorithmic}
\end{algorithm}

\section{Offline Dataset and Empirical MDP}
\label{app:empirical-mdp}
To discuss LBLP in the offline setting, we define the empirical MDP induced by the dataset, following~\citet{fujimoto2019off, xiao2023sample}, where we adopt our own notation to explain the construction more clearly. Let $\mathcal{M} = (\mathcal S,\mathcal A,P,R,\gamma)$ be the underlying MDP and $\mathcal D = \{(s_i,a_i,r_i,s_i')\}_{i=1}^{m}$ be a fixed offline dataset of $m$ transitions collected from $\mathcal M$; every object below is computed from $\mathcal D$ alone, with no further interaction with $\mathcal M$. Writing $n_{\mathcal D}(s,a) := \sum_{i=1}^{m} \mathbf 1\{(s_i,a_i)=(s,a)\}$ for the number of times a pair appears in the data, the pairs seen at least once form the empirical support
\[ \mathcal X_{\mathcal D} := \{(s,a): n_{\mathcal D}(s,a)>0\}, \]
with state marginal and action slices
\[ \mathcal S_{\mathcal D} = \{s:\exists a,\,(s,a) \in \mathcal X_{\mathcal D}\}, \qquad \mathcal A_{\mathcal D}(s) = \{a:(s,a) \in \mathcal X_{\mathcal D}\}, \]
so that $\mathcal X_{\mathcal D} = \{(s,a): s \in \mathcal S_{\mathcal D},\, a \in \mathcal A_{\mathcal D}(s)\}$.
All value functions below are vectors in $\mathbb R^{|\mathcal X_{\mathcal D}|}$,
i.e.\ they are defined only on pairs the dataset has actually seen. Normalizing the
counts gives the empirical sampling distribution
$ d_{\mathcal D}(s,a):=n_{\mathcal D}(s,a)/m$, a probability distribution
supported exactly on $\mathcal X_{\mathcal D}$, which records how strongly each pair
is weighted in the empirical objective; we write
$d_{\min}:=\min_{\mathcal X_{\mathcal D}} d_{\mathcal D}$,
$d_{\max}:=\max_{\mathcal X_{\mathcal D}} d_{\mathcal D}$, and
$ D_{\mathcal D}:=\operatorname{diag}( d_{\mathcal D}(s,a))_{(s,a)\in\mathcal X_{\mathcal D}}$.
Averaging the transitions that start at each $(s,a)\in\mathcal X_{\mathcal D}$ yields
the empirical reward and transition kernel
\[
 R_{\mathcal D}(s,a):=\frac{1}{n_{\mathcal D}(s,a)}\!\!\sum_{i:(s_i,a_i)=(s,a)}\!\!r_i,
\qquad
 P_{\mathcal D}(s'\mid s,a):=\frac{1}{n_{\mathcal D}(s,a)}\!\!\sum_{i:(s_i,a_i)=(s,a)}\!\!\mathbf 1\{s_i'=s'\}.
\]

The successors at which no action of $\mathcal{D}$ appears form $\mathcal{S}^0_\mathcal{D} := \{s'_i : i\in[m]\}\setminus\mathcal{S}_\mathcal{D}$, the states that the data reaches but never leaves. We treat such states as absorbing with zero reward, so that
$\mathcal A_{\mathcal D}(s) = \emptyset$ and
\begin{equation} \label{T}
    \max_{a' \in \mathcal A_{\mathcal D}(s)} Q(s,a') := 0 \quad \text{whenever } \mathcal A_{\mathcal D}(s) = \emptyset, \qquad \text{equivalently } V(s) = 0.
\end{equation}
The resulting empirical MDP is ${\mathcal M}_{\mathcal D} = (\mathcal S_{\mathcal D} \cup \mathcal S^{0}_{\mathcal D}, \{\mathcal A_{\mathcal D}(s)\}_s,  P_{\mathcal D},  R_{\mathcal D}, \gamma)$,
and its Bellman optimality operator
${\mathcal T}_{\mathcal D}: \mathbb R^{|\mathcal X_{\mathcal D}|} \to \mathbb R^{|\mathcal X_{\mathcal D}|}$
is
\[ \left({\mathcal T}_{\mathcal D} Q\right)(s,a) =  R_{\mathcal D}(s,a) + \gamma \mathbb E_{s' \sim  P_{\mathcal D}(\cdot \mid s,a)} \left[\max_{a' \in \mathcal A_{\mathcal D}(s')} Q(s',a')\right], \qquad (s,a) \in \mathcal X_{\mathcal D}, \]
with~\eqref{T} applied at terminal successors. This is the ordinary Bellman optimality operator subject to two restrictions,
in that the transition kernel and reward are the empirical ones and the inner
maximization ranges over the actions the dataset contains.

This definition is well posed because \({\mathcal M}_{\mathcal D}\) is closed
under its own transitions. For every \((s,a)\in\mathcal X_{\mathcal D}\) and
every \(s'\) with \(P_{\mathcal D}(s'\mid s,a)>0\), the successor \(s'\) lies
in \(\mathcal S_{\mathcal D}\) or in \(\mathcal S^{0}_{\mathcal D}\). Indeed, a
positive empirical transition probability means the transition
\((s,a)\to s'\) appears in \(\mathcal D\), so \(s'\) either carries an action
of its own or carries none at all. In the first case the inner maximum is
taken over the nonempty set \(\mathcal A_{\mathcal D}(s')\) and reads only
entries of \(Q\) indexed by \(\mathcal X_{\mathcal D}\), and in the second it
equals \(0\) by~\eqref{T}. Hence \({\mathcal T}_{\mathcal D}\) never
queries \(Q\) outside \(\mathcal X_{\mathcal D}\), it is a \(\gamma\)-contraction
in \(\|\cdot\|_\infty\), and it admits a unique fixed point \(Q^{*}_{\mathcal D}\). This fixed point is
not a new object, as prior work states the same in-sample Bellman optimality
equation \citep{xiao2023sample}.

\section{Proofs}
\label{app:proofs}
Lemma~\ref{lem:kstep-redundant}, Proposition~\ref{prop:lblp-exactness}, Proposition~\ref{prop:exactness},  and Proposition~\ref{prop:bounded} hold for any finite discounted MDP, and the remaining results assume deterministic dynamics.

\subsection{Proof of Lemma~\ref{lem:kstep-redundant}}
\label{app:kstep-redundant}
\begin{proof}
Fix $(s,a)\in\mathcal S\times\mathcal A$. Consider a $\beta$-rollout with
$(s_0,a_0)=(s,a)$, so that $s_{t+1}\sim P(\cdot\mid s_t,a_t)$ and
$a_{t+1}\sim\beta(\cdot\mid s_{t+1})$ for $t\ge0$. From $g_{\mathrm E}\le0$ we
have $V(s_{t+1})\ge Q(s_{t+1},a_{t+1})$ for every $a_{t+1}$, and averaging over
$a_{t+1}\sim\beta(\cdot\mid s_{t+1})$ preserves the inequality,
\begin{equation}
\label{eq:epi-averaged}
V(s_{t+1})
\;\ge\;
\mathbb E_{a_{t+1}\sim\beta(\cdot\mid s_{t+1})}
\!\left[Q(s_{t+1},a_{t+1})\right].
\end{equation}
Combining $g_{\mathrm B}\le0$ with \eqref{eq:epi-averaged}, we have
\begin{equation}
\label{eq:one-hop}
\begin{aligned}
Q(s_t,a_t)
&\;\ge\;
R(s_t,a_t)+\gamma\,\mathbb E_{\mathcal M}\!\left[V(s_{t+1})\mid s_t,a_t\right]
\\
&\;\ge\;
R(s_t,a_t)
+\gamma\,\mathbb E_{\mathcal M,\beta}\!\left[Q(s_{t+1},a_{t+1})\mid s_t,a_t\right].
\end{aligned}
\end{equation}
Applying \eqref{eq:one-hop} at $t=0$ and then taking the expectation of
\eqref{eq:one-hop} at $t=1,\dots,K-2$ under $\mathbb E_{\mathcal M,\beta}
[\,\cdot\mid s,a]$, and finally $g_{\mathrm B}\le0$ at $t=K-1$,
\begin{align}
Q(s,a)
&\;\ge\;
R(s,a)+\gamma\,\mathbb E_{\mathcal M,\beta}\!\left[Q(s_1,a_1)\mid s,a\right]
\label{eq:unroll-1}\\[2pt]
&\;\ge\;
\mathbb E_{\mathcal M,\beta}\!\left[
\sum_{t=0}^{1}\gamma^{t}R(s_t,a_t)
+\gamma^{2}Q(s_2,a_2)\;\Big|\;s,a\right]
\label{eq:unroll-2}\\[2pt]
&\;\;\;\vdots \nonumber\\[2pt]
&\;\ge\;
\mathbb E_{\mathcal M,\beta}\!\left[
\sum_{t=0}^{K-2}\gamma^{t}R(s_t,a_t)
+\gamma^{K-1}Q(s_{K-1},a_{K-1})\;\Big|\;s,a\right]
\label{eq:unroll-k1}\\[2pt]
&\;\ge\;
\mathbb E_{\mathcal M,\beta}\!\left[
\sum_{t=0}^{K-1}\gamma^{t}R(s_t,a_t)
+\gamma^{K}V(s_K)\;\Big|\;s,a\right]
\;=\;y_K(s,a),
\label{eq:unroll-final}
\end{align}
which implies $g_{\mathrm{KQ}}(Q,V)(s,a)\le0$. 

Applying $g_{\mathrm E}\le0$
at $(s,a)$ we have
\begin{equation}
V(s)\;\ge\;Q(s,a)\;\ge\;y_K(s,a),
\end{equation}
which implies $g_{\mathrm{KV}}(Q,V)(s,a)\le0$.
\end{proof}

\begin{remark}
\label{rem:path-redundant}
Suppose the dynamics are deterministic and let
$(s_0,a_0),\dots,(s_{K-1},a_{K-1}),s_K$ be a segment of a dataset trajectory, so
that every $(s_t,a_t)$ lies in $\mathcal X_{\mathcal D}$ and $s_{t+1}$ is the
successor of $(s_t,a_t)$. If $(Q,V)$ satisfies $g_{\mathrm B}\le0$ and
$g_{\mathrm E}\le0$ on $\mathcal M_{\mathcal D}$, then
\begin{equation*}
Q(s_t,a_t)\ge r_t+\gamma V(s_{t+1})\ge r_t+\gamma Q(s_{t+1},a_{t+1}),
\end{equation*}
and chaining these inequalities as in \eqref{eq:unroll-1}--\eqref{eq:unroll-final}
gives $\sum_{t<K}\gamma^t r_t+\gamma^K V(s_K)\le Q(s_0,a_0)\le V(s_0)$. Hence
the sampled rollout constraints are implied by the one-step constraints and
leave the solution unchanged.
\end{remark}

\subsection{Proof of Proposition~\ref{prop:lblp-exactness}}
\label{app:lblp-exactness}
\begin{proof}
We first check that $(Q^*,V^*)$ is feasible. Bellman optimality gives
$Q^* = R + \gamma PV^*$, so $g_{\mathrm{B}}(Q^*,V^*) = 0$, and
$V^*(s) = \max_{a} Q^*(s,a) \ge Q^*(s,a)$ gives $g_{\mathrm{E}}(Q^*,V^*) \le 0$.
The remaining constraints $g_{\mathrm{KQ}}(Q^*,V^*) \le 0$ and
$g_{\mathrm{KV}}(Q^*,V^*) \le 0$ hold automatically by
Lemma~\ref{lem:kstep-redundant}. Hence the $(Q^*, V^*)$ is feasible.

Now we check that $(Q^*,V^*)$ is the unique minimizer. Let $(Q,V)$ be an arbitrary
feasible point. From $g_B(Q, V) \le 0$ and $g_E(Q, V) \le 0$, we have
\[ V(s) \ge Q(s,a) \ge R(s,a) + \gamma (PV)(s,a),
\qquad \forall (s,a)\in\mathcal S\times\mathcal A. \]

Taking the maximum over $a$ on
the right, we have $V \ge \mathcal{B}V$ where the Bellman optimality operator on
state-value functions is defined as
$(\mathcal{B}V)(s) := \max_{a \in \mathcal{A}}[R(s,a) + \gamma (PV)(s,a)]$.
Iterating this inequality and using that $\mathcal{B}$ is monotone and a
$\gamma$-contraction, one gets $V\ge V^*$~\citep{puterman2014markov}. Moreover,
by using the monotonicity of $P$, we get
$Q \ge R +\gamma PV \ge R + \gamma PV^* = Q^*$. Therefore, 
\[ V(s) \ge V^*(s) \quad \forall s \in \mathcal{S},
\qquad\qquad
Q(s,a) \ge Q^*(s,a) \quad \forall (s,a) \in \mathcal{S}\times\mathcal{A}. \]

Since $c_Q, c_V > 0$,
\[ F(Q,V)-F(Q^*,V^*) =\langle c_Q, Q-Q^* \rangle + \langle c_V, V-V^* \rangle \ge 0, \]
which implies that equality holds if and only if $(Q,V)=(Q^*,V^*)$. Hence
$(Q^*,V^*)$ is the unique minimizer.
\end{proof}

\subsection{Proof of Proposition~\ref{prop:sandwich}}
\label{app:proof-sandwich}

Throughout, write \(s'\) for the successor of \((s,a)\in\mathcal X_{\mathcal D}\),
which is unique by determinism and agrees with the successor stored in
\(\mathcal D\). By Proposition~\ref{prop:lblp-exactness} the minimizer
\((Q^*_{\mathcal D},V^*_{\mathcal D})\) is the Bellman-optimal pair of
\({\mathcal M}_{\mathcal D}\), so it satisfies
\begin{equation}\label{eq:emp-bellman}
Q^*_{\mathcal D}(s,a)=\ R_{\mathcal D}(s,a)+\gamma V^*_{\mathcal D}(s'),
\qquad
V^*_{\mathcal D}(s)=\max_{a\in\mathcal A_{\mathcal D}(s)}Q^*_{\mathcal D}(s,a),
\end{equation}
for every \((s,a)\in\mathcal X_{\mathcal D}\), with the convention
\(V^*_{\mathcal D}(s)=0\) at states carrying no action of
\(\mathcal A_{\mathcal D}\). By hypothesis such states are terminal in
\(\mathcal M\), so \(V^{*}\) vanishes there.

\paragraph{Lower bound.}
Fix \((s,a)\in\mathcal X_{\mathcal D}\) and let
\((s_0,a_0),(s_1,a_1),\dots\) be a continuation of \((s,a)\), with
\(r_t=\ R_{\mathcal D}(s_t,a_t)\). Every pair of the continuation lies in
\(\mathcal X_{\mathcal D}\) and \(a_{t+1}\in\mathcal A_{\mathcal D}(s_{t+1})\),
so the second identity of \eqref{eq:emp-bellman} gives
\(V^*_{\mathcal D}(s_{t+1})\ge Q^*_{\mathcal D}(s_{t+1},a_{t+1})\). Combining this
with the first identity,
\[
Q^*_{\mathcal D}(s_t,a_t)\;\ge\;r_t+\gamma\,Q^*_{\mathcal D}(s_{t+1},a_{t+1}).
\]
Let $s_T$ be the final state of the continuation. By hypothesis $\mathcal{A}_\mathcal{D}(s_T)=\emptyset$, so $V^*_\mathcal{D}(s_T)=0$ and $Q^*_\mathcal{D}(s_{T-1},a_{T-1}) = r_{T-1}$. Unrolling the inequality for $t=0,\dots,T-2$ yields
\begin{equation*}
Q^*_\mathcal{D}(s,a) \ge \sum_{t<T}\gamma^t r_t = G(\tau),
\end{equation*}
and taking the maximum over $\tau\in\mathcal{H}_\mathcal{D}(s,a)$ gives $G_\mathcal{D}(s,a)\le Q^*_\mathcal{D}(s,a)$.

\paragraph{Upper bound.}
Let \({\mathcal T}_{\mathcal D}\) be the Bellman optimality operator of
\({\mathcal M}_{\mathcal D}\) and let \(Q^{*}\big|_{\mathcal X_{\mathcal D}}\)
denote the restriction of \(Q^{*}\) to \(\mathcal X_{\mathcal D}\). For
\((s,a)\in\mathcal X_{\mathcal D}\), determinism and the agreement of
\( R_{\mathcal D}\) and \( P_{\mathcal D}\) with \(R\) and \(P\)
on \(\mathcal X_{\mathcal D}\) give
\[
\big({\mathcal T}_{\mathcal D}Q^{*}\big|_{\mathcal X_{\mathcal D}}\big)(s,a)
= R(s,a)+\gamma\max_{a'\in\mathcal A_{\mathcal D}(s')}Q^{*}(s',a')
\;\le\;
R(s,a)+\gamma\max_{a'\in\mathcal A}Q^{*}(s',a')
= Q^{*}(s,a),
\]
where the inequality uses \(\mathcal A_{\mathcal D}(s')\subseteq\mathcal A\),
and holds at a terminal successor as well, since both sides are then
\(R(s,a)\). Thus \(Q^{*}\big|_{\mathcal X_{\mathcal D}}\) satisfies
\({\mathcal T}_{\mathcal D}Q^{*}\big|_{\mathcal X_{\mathcal D}}
\le Q^{*}\big|_{\mathcal X_{\mathcal D}}\). Since
\({\mathcal T}_{\mathcal D}\) is monotone, iterating preserves the
inequality, and since it is a \(\gamma\)-contraction with fixed point
\(Q^*_{\mathcal D}\), the iterates converge to \(Q^*_{\mathcal D}\). Hence
\(Q^*_{\mathcal D}\le Q^{*}\) on \(\mathcal X_{\mathcal D}\), which is the right
inequality.

\subsection{Variants of the sandwich property}
\label{app:sandwich-variants}

This section collects two variants of Proposition~\ref{prop:sandwich} that
were stated informally in Section~\ref{sec:offline-lblp}. The first replaces
the recorded return by the value of the behavior policy under a coverage
assumption, and the second transfers both bounds to the state-value function $V$.

\begin{corollary}[Sandwich property under full behavioral coverage]
\label{cor:sandwich-beta}
Assume in addition that
\(\operatorname{supp}\beta(\cdot\mid s)\subseteq\mathcal A_{\mathcal D}(s)\)
for every \(s\in\mathcal S_{\mathcal D}\), so that \(\beta\) is a policy of
\({\mathcal M}_{\mathcal D}\). Then for every
\((s,a)\in\mathcal X_{\mathcal D}\),
\[
Q^{\beta}(s,a)
\;\le\;
Q^{*}_{\mathcal D}(s,a)
\;\le\;
Q^{*}(s,a),
\]
where \(Q^{\beta}\) denotes the value of the behavior policy in \(\mathcal M\).
\end{corollary}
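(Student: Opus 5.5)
The upper bound $Q^*_{\mathcal D}\le Q^*$ is already the right inequality of Proposition~\ref{prop:sandwich}, whose hypotheses we keep. Its proof used only determinism, the agreement of $R_{\mathcal D},P_{\mathcal D}$ with $R,P$ on $\mathcal X_{\mathcal D}$, and $\mathcal A_{\mathcal D}(s')\subseteq\mathcal A$. The coverage assumption plays no role there, so only the lower bound $Q^\beta\le Q^*_{\mathcal D}$ needs a new argument.

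For the lower bound, the plan is to compare Bellman operators on $\mathcal M_{\mathcal D}$. First, I will restrict $Q^\beta$ to $\mathcal X_{\mathcal D}$ and check that it is a fixed point of the policy-evaluation operator of $\beta$ on $\mathcal M_{\mathcal D}$,
\[
(\mathcal T^\beta_{\mathcal D}Q)(s,a)=R_{\mathcal D}(s,a)+\gamma\sum_{a'}\beta(a'\mid s')Q(s',a').
\]
Three facts make this work. By determinism, $s'$ is the successor in both $\mathcal M$ and $\mathcal M_{\mathcal D}$, and $R_{\mathcal D}=R$. By coverage, $\beta(\cdot\mid s')$ charges only actions in $\mathcal A_{\mathcal D}(s')$, so $(s',a')\in\mathcal X_{\mathcal D}$ whenever $\beta(a'\mid s')>0$, and the Bellman equation $Q^\beta=R+\gamma P^\beta Q^\beta$ in $\mathcal M$ only reads entries in $\mathcal X_{\mathcal D}$. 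Finally, if $s'$ carries no data action, the hypothesis says it is terminal in $\mathcal M$, so its value is $0$, matching convention~\eqref{T}.

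Second, a policy-averaged value is at most a maximum, so
\[
\mathcal T^\beta_{\mathcal D}Q\le\mathcal T_{\mathcal D}Q
\]
for every $Q$. Hence $Q^\beta=\mathcal T^\beta_{\mathcal D}Q^\beta\le\mathcal T_{\mathcal D}Q^\beta$. Iterating with monotonicity and the $\gamma$-contraction of $\mathcal T_{\mathcal D}$ (Appendix~\ref{app:empirical-mdp}) gives $Q^\beta\le\mathcal T_{\mathcal D}^nQ^\beta\to Q^*_{\mathcal D}$. This mirrors the upper-bound argument with the inequality reversed.

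An alternative is a trajectory argument: average the one-step inequality $Q^*_{\mathcal D}(s_t,a_t)\ge r_t+\gamma Q^*_{\mathcal D}(s_{t+1},a_{t+1})$ over $\beta$-rollouts, as in Lemma~\ref{lem:kstep-redundant}, and let the horizon go to infinity. I prefer the operator route because it avoids handling infinite horizons separately.

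The main obstacle is the first step, namely justifying that the $\mathcal M$-value $Q^\beta$ restricted to $\mathcal X_{\mathcal D}$ is exactly the $\beta$-value in $\mathcal M_{\mathcal D}$. This requires that $\beta$-rollouts from $\mathcal X_{\mathcal D}$ never leave $\mathcal X_{\mathcal D}$ except into terminal states with zero continuation value. Closure under deterministic successors together with coverage gives this, but I will state carefully how terminal states, and any absorbing reward in $\mathcal M$, are treated so that they match convention~\eqref{T}.
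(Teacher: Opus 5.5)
Your proof is correct. The paper does not write out a proof of this corollary. It presents it as a variant of Proposition~\ref{prop:sandwich}, and the route that suggests is the one you list as your alternative. There, the lower bound comes from unrolling $Q^{*}_{\mathcal D}(s_t,a_t)\ge r_t+\gamma Q^{*}_{\mathcal D}(s_{t+1},a_{t+1})$ along a path. Under coverage this can be done along $\beta$-rollouts, which stay in $\mathcal X_{\mathcal D}$, and then averaged as in Lemma~\ref{lem:kstep-redundant}.

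You are right that a new argument is needed. The left inequality of the proposition cannot be reused, because $Q^{\beta}$ averages over $\beta$-paths that can stitch together different recorded trajectories, so it may exceed $G_{\mathcal D}$. Your operator comparison instead mirrors the paper's proof of the \emph{upper} bound with the inequality reversed. Its advantage is that it needs no limit: $\beta$-rollouts in $\mathcal M$, unlike the recorded continuations in $\mathcal H_{\mathcal D}(s,a)$, need not reach a terminal state in finite time. The trajectory route therefore needs a tail argument, namely $\gamma^{T}Q^{*}_{\mathcal D}(s_T,a_T)\to 0$ by boundedness of $Q^{*}_{\mathcal D}$. In exchange, the trajectory route reuses the paper's own lower-bound mechanism almost verbatim.

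The obstacle you single out is smaller than it looks. You never need closure of entire rollouts, nor the identification of $Q^{\beta}|_{\mathcal X_{\mathcal D}}$ with the $\beta$-value of $\mathcal M_{\mathcal D}$. The single inequality $Q^{\beta}\le\mathcal T_{\mathcal D}Q^{\beta}$ on $\mathcal X_{\mathcal D}$ suffices. It follows from three facts:
\begin{itemize}
\item the one-step Bellman equation of $Q^{\beta}$ in $\mathcal M$;
\item coverage at successors in $\mathcal S_{\mathcal D}$, so that the $\beta$-average is at most the in-sample maximum;
\item $V^{\beta}=0$ at successors in $\mathcal S^{0}_{\mathcal D}$.
\end{itemize}
The last fact holds because, by the trajectory structure of $\mathcal D$, every state of $\mathcal S^{0}_{\mathcal D}$ is the final state of its trajectory. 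It is therefore terminal in $\mathcal M$ by hypothesis, which is the same step the paper uses in the proof of Proposition~\ref{prop:sandwich}.
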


\begin{corollary}[Sandwich property for the state-value function]
\label{cor:sandwich-v}
Under the assumptions of Proposition~\ref{prop:sandwich}, for every
\(s\in\mathcal S_{\mathcal D}\) we have
\(\max_{a\in\mathcal A_{\mathcal D}(s)}G_{\mathcal D}(s,a)\le V^{*}_{\mathcal D}(s)
\le V^{*}(s)\), and under the additional assumption of
Corollary~\ref{cor:sandwich-beta} we have
\(V^{\beta}(s)\le V^{*}_{\mathcal D}(s)\le V^{*}(s)\).
\end{corollary}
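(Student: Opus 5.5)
The statement to prove is Corollary~\ref{cor:sandwich-v}. The plan is to transfer the bounds of Proposition~\ref{prop:sandwich} and Corollary~\ref{cor:sandwich-beta} from $Q$ to $V$ by maximizing or averaging over actions. The one input I use repeatedly is the identity $V^{*}_{\mathcal D}(s)=\max_{a\in\mathcal A_{\mathcal D}(s)}Q^{*}_{\mathcal D}(s,a)$ from \eqref{eq:emp-bellman}. For $s\in\mathcal S_{\mathcal D}$ the set $\mathcal A_{\mathcal D}(s)$ is nonempty, so this maximum is over a genuine finite set.

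\emph{Lower bound (first chain).} Proposition~\ref{prop:sandwich} gives $G_{\mathcal D}(s,a)\le Q^{*}_{\mathcal D}(s,a)$ for every $a\in\mathcal A_{\mathcal D}(s)$. Taking the maximum over $a\in\mathcal A_{\mathcal D}(s)$ on both sides preserves the inequality and yields $\max_{a}G_{\mathcal D}(s,a)\le V^{*}_{\mathcal D}(s)$.

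\emph{Upper bound.} Choose $a^{\dagger}\in\mathcal A_{\mathcal D}(s)$ attaining the maximum defining $V^{*}_{\mathcal D}(s)$. By Proposition~\ref{prop:sandwich}, $V^{*}_{\mathcal D}(s)=Q^{*}_{\mathcal D}(s,a^{\dagger})\le Q^{*}(s,a^{\dagger})$. Since $\mathcal A_{\mathcal D}(s)\subseteq\mathcal A$, we also have $Q^{*}(s,a^{\dagger})\le\max_{a\in\mathcal A}Q^{*}(s,a)=V^{*}(s)$. This proves the first chain.

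\emph{Lower bound (second chain).} Under the coverage assumption, $V^{\beta}(s)=\sum_{a}\beta(a\mid s)Q^{\beta}(s,a)$, and the sum runs only over $a\in\mathcal A_{\mathcal D}(s)$. Corollary~\ref{cor:sandwich-beta} bounds each term: $Q^{\beta}(s,a)\le Q^{*}_{\mathcal D}(s,a)\le V^{*}_{\mathcal D}(s)$. A convex combination of quantities each at most $V^{*}_{\mathcal D}(s)$ is itself at most $V^{*}_{\mathcal D}(s)$, so $V^{\beta}(s)\le V^{*}_{\mathcal D}(s)$. The upper bound in this chain is the same as before.

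\emph{Main obstacle.} The only delicate point is the averaging step. It relies on $V^{\beta}(s)=\mathbb E_{a\sim\beta(\cdot\mid s)}[Q^{\beta}(s,a)]$ in $\mathcal M$, which holds by definition of the policy value. It also relies on every action with positive $\beta$-probability lying in $\mathcal A_{\mathcal D}(s)$, which is exactly what the coverage assumption supplies. Without coverage, the right-hand side of Corollary~\ref{cor:sandwich-beta} would be unavailable for some actions in the average.
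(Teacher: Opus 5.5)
Your proof is correct and follows the route the paper intends. The paper gives no written proof of this corollary; it only describes it as transferring the bounds of Proposition~\ref{prop:sandwich} and Corollary~\ref{cor:sandwich-beta} to $V$. Your argument does exactly that, by maximizing over the nonempty set $\mathcal A_{\mathcal D}(s)$ for the first chain and by averaging over $\beta(\cdot\mid s)$ under the coverage assumption for the second.
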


\subsection{Propagation Identity}
\label{app:propagation}

The following lemma is used in the proof of Theorem~\ref{thm:speedup}.

\begin{lemma}[Propagation identity] \label{lem:propagation}
Suppose the transition dynamics of $\mathcal M$ are deterministic. Under the
empirical MDP ${\mathcal M}_{\mathcal D}$, fix, for each $(s,a) \in \mathcal
X_{\mathcal D}$, a recorded trajectory starting from $(s,a)$, and let
$s^{(k)}(s,a)$ be the state reached after $k$ transitions along this trajectory
while $G_k(s,a) = \sum_{t<k} \gamma^t r_t$ collects the discounted rewards along
the way. The statement below holds for any such choice. Let us define the
horizon-$k$ target
\begin{equation}
    L_k[V](s,a) := G_k(s,a) + \gamma^k V \big(s^{(k)}(s,a)\big),
\end{equation}
and the block coordinate updates 
\begin{equation} \label{eq:repair2}
    Q_{n+1} := \max_{k \in \{1, K\}} L_k[V_n], \qquad V_{n+1}(s) := \max_{a \in \mathcal A_{\mathcal D}(s)} Q_{n+1}(s,a).
\end{equation}
Moreover, define the suboptimality of the realized $k$-step path and the current value gap
\begin{equation} \label{eq:delta}
\delta_k(s,a) := Q^*_{\mathcal D}(s,a) - L_k[V^*_{\mathcal D}](s,a),
\qquad
e_n := V^*_{\mathcal D} - V_n .
\end{equation}
Then, $\delta_K \ge 0$ with $\delta_1 = 0$. Moreover, if $Q_0 \le  Q^*_{\mathcal D}$ and $V_0 \le  V^*_{\mathcal D}$, the iterates remain underestimates, $Q_n \le  Q^*_{\mathcal D}$ and $V_n \le  V^*_{\mathcal D}$ for all $n$, so that $e_n \ge 0$. In addition, for every $(s,a) \in \mathcal X_{\mathcal D}$,
\begin{equation}\label{eq:propagation-rate}
    \big( Q^*_{\mathcal D} - Q_{n+1}\big)(s,a) = \min_{k \in \{1, K\}} \Big[\delta_k(s,a) + \gamma^k e_n \big(s^{(k)}(s,a)\big)\Big].
\end{equation}
\end{lemma}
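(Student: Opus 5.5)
The plan is to prove the three claims in order: the sign of $\delta_k$, the invariance of the underestimates, and then the identity \eqref{eq:propagation-rate}. All three rest on the empirical Bellman relations \eqref{eq:emp-bellman}. Under deterministic dynamics these give $Q^*_{\mathcal D}(s_t,a_t)=r_t+\gamma V^*_{\mathcal D}(s_{t+1})$ and $V^*_{\mathcal D}(s_{t+1})\ge Q^*_{\mathcal D}(s_{t+1},a_{t+1})$ along any recorded trajectory. At a terminal state $s_T$ the convention \eqref{T} gives $V^*_{\mathcal D}(s_T)=0$.

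\emph{Sign of $\delta_k$.} For $k=1$, the successor of $(s,a)$ is $s^{(1)}(s,a)$, and the first identity of \eqref{eq:emp-bellman} gives $L_1[V^*_{\mathcal D}](s,a)=Q^*_{\mathcal D}(s,a)$. Hence $\delta_1=0$. For $k=K$, I would chain the inequality $Q^*_{\mathcal D}(s_t,a_t)\ge r_t+\gamma Q^*_{\mathcal D}(s_{t+1},a_{t+1})$ for $t=0,\dots,K-2$. At the last step $t=K-1$ I use the equality $Q^*_{\mathcal D}(s_{K-1},a_{K-1})=r_{K-1}+\gamma V^*_{\mathcal D}(s_K)$. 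This is exactly the argument of Remark~\ref{rem:path-redundant}, and it yields $Q^*_{\mathcal D}(s,a)\ge G_K(s,a)+\gamma^K V^*_{\mathcal D}(s^{(K)}(s,a))$, that is, $\delta_K\ge0$.

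The one point needing care is a trajectory that reaches its terminal state before $K$ steps. I would interpret $s^{(k)}$ as the terminal state from then on, with zero reward and zero value. The chain then simply stops at the terminal state, where $V^*_{\mathcal D}=0$, and the bound is unchanged. I expect this bookkeeping to be the main, though minor, obstacle.

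\emph{Underestimates.} I would argue by induction on $n$. Suppose $V_n\le V^*_{\mathcal D}$. Then $L_k$ is monotone in $V$ because $\gamma^k>0$, so
\[
L_k[V_n]\le L_k[V^*_{\mathcal D}]=Q^*_{\mathcal D}-\delta_k\le Q^*_{\mathcal D},
\]
where the last step uses $\delta_k\ge0$. Taking the maximum over $k\in\{1,K\}$ gives $Q_{n+1}\le Q^*_{\mathcal D}$. Maximizing over $a\in\mathcal A_{\mathcal D}(s)$ and applying the second identity of \eqref{eq:emp-bellman} gives $V_{n+1}\le V^*_{\mathcal D}$. At terminal states both values are fixed at $0$, so the inequality holds there trivially. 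It follows that $e_n\ge0$.

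\emph{Identity.} I would write
\[
Q^*_{\mathcal D}-L_k[V_n]=\bigl(Q^*_{\mathcal D}-L_k[V^*_{\mathcal D}]\bigr)+\bigl(L_k[V^*_{\mathcal D}]-L_k[V_n]\bigr)=\delta_k+\gamma^k e_n\circ s^{(k)}.
\]
The first bracket is $\delta_k$ by definition \eqref{eq:delta}. The second bracket equals $\gamma^k e_n(s^{(k)}(s,a))$, because the reward term $G_k$ cancels. Finally, since $Q^*_{\mathcal D}-\max_k L_k[V_n]=\min_k\bigl(Q^*_{\mathcal D}-L_k[V_n]\bigr)$, this gives \eqref{eq:propagation-rate}. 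The identity holds for every $n\ge0$ and any choice of recorded trajectory, since the argument never used which trajectory was fixed for $(s,a)$.
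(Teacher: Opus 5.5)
Your proposal is correct and follows essentially the same route as the paper: $\delta_1=0$ from the one-step Bellman equality and $\delta_K\ge0$ from the alternating Bellman chain, an induction on $n$ for the underestimates, and the decomposition $Q^*_{\mathcal D}-L_k[V_n]=\delta_k+\gamma^k e_n(s^{(k)})$ combined with $Q^*_{\mathcal D}-\max_k L_k[V_n]=\min_k\bigl(Q^*_{\mathcal D}-L_k[V_n]\bigr)$. You also treat trajectories that terminate before $K$ steps explicitly, which the paper leaves implicit.
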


\begin{proof}
We first show $\delta_k \ge 0$ with equality at $k = 1$. Write $(s_0,a_0)=(s,a)$ and let $s_1,a_1,\dots$ be the rest of the recorded trajectory so that $s_k = s^{(k)}(s,a)$. By Proposition~\ref{prop:lblp-exactness} applied to the finite discounted MDP
$\mathcal M_{\mathcal D}$, $(Q^*_{\mathcal D}, V^*_{\mathcal D})$ is the
Bellman-optimal pair of $\mathcal M_{\mathcal D}$, so
\[  Q^*_{\mathcal D}(s_t,a_t) = r_t + \gamma  V^*_{\mathcal D}(s_{t+1}), \qquad  V^*_{\mathcal D}(s_t) = \max_{a\in\mathcal A_{\mathcal D}(s_t)}  Q^*_{\mathcal D}(s_t,a) \ge  Q^*_{\mathcal D}(s_t,a_t), \]
where the first because deterministic dynamics give $(s_t,a_t)$ a unique next state, and the second with equality exactly when the recorded action $a_t$ is optimal at $s_t$. Alternating the two,
\begin{align*}
     Q^*_{\mathcal D}(s_0,a_0)
    =& r_0 + \gamma  V^*_{\mathcal D}(s_1) \\
    \ge& r_0 + \gamma  Q^*_{\mathcal D}(s_1,a_1) \\
    =& r_0 + \gamma r_1 + \gamma^2  V^*_{\mathcal D}(s_2) \\
    \ge& \cdots \ge \sum_{t=0}^{k-1} \gamma^t r_t + \gamma^k  V^*_{\mathcal D}(s_k)
    =L_k[ V^*_{\mathcal D}](s,a),
\end{align*}
so $\delta_k \ge 0$ for $k > 1$. For $k=1$, the first line is an equality, and $\delta_1=0$.

For the second claim, expand $L_k[V_n]$ and substitute the definitions of $e_n$ and $\delta_k$:
\begin{align}
    L_k[V_n](s,a)
    =& G_k(s,a) + \gamma^k V_n\big(s^{(k)}(s,a)\big) \nonumber \\
    =& G_k(s,a) + \gamma^k  V^*_{\mathcal D}\big(s^{(k)}(s,a)\big) - \gamma^k e_n\big(s^{(k)}(s,a)\big) \nonumber \\
    =&  Q^*_{\mathcal D}(s,a) - \delta_k(s,a) - \gamma^k e_n\big(s^{(k)}(s,a)\big). \label{L_K_expansion}
\end{align}
Each horizon target therefore falls short of $ Q^*_{\mathcal D}(s,a)$ by
the amount $\delta_k(s,a)+\gamma^k e_n(s^{(k)}(s,a))$. We argue by induction that this shortfall is nonnegative. At $n=0$, the underestimate assumption $V_0\le  V^*_{\mathcal D}$ gives $e_0 \ge 0$. Assuming $e_n \ge 0$, $L_k[V_n] \le  Q^*_{\mathcal D}$ for every $k$ with $\delta_k \ge 0$. Taking the maximum over $k$ preserves the inequality, so $Q_{n+1} \le  Q^*_{\mathcal D}$. Moreover, taking the maximum over $a \in \mathcal A_{\mathcal D}(s)$ gives $V_{n+1} \le  V^*_{\mathcal D}$, i.e.\ $e_{n+1} \ge 0$. The underestimate property is thus preserved for all $n$.

For the final claim, we have
\begin{align*}
    \big( Q^*_{\mathcal D} - Q_{n+1}\big)(s,a)
    =&  Q^*_{\mathcal D}(s,a) - \max_{k \in \{1,K\}}L_k[V_n](s,a) \\
    =& \min_{k \in \{1,K\}} \Big[ Q^*_{\mathcal D}(s,a) - L_k[V_n](s,a)\Big] \\
    =& \min_{k \in \{1,K\}} \Big[\delta_k(s,a) + \gamma^k e_n\big(s^{(k)}(s,a)\big)\Big],
\end{align*}
which is \eqref{eq:propagation-rate}. The first equality is the definition of~\eqref{eq:repair2} and the last equality substitutes~\eqref{L_K_expansion}.
\end{proof}

Note that the underestimate initialization in Lemma~\ref{lem:propagation} is not restrictive in practice. When rewards lie in $[r_{\min},r_{\max}]$, the choice $V_0 \equiv r_{\min}/(1-\gamma)$ and $Q_0 \equiv r_{\min}/(1-\gamma)$ satisfies the hypothesis for any dataset, since no value in ${\mathcal M}_{\mathcal D}$ can fall below that bound.

\subsection{Proof of Theorem~\ref{thm:speedup}}
\label{app:speedup}

\begin{proof}
For the first claim, since the minimum in \eqref{eq:propagation-rate} cannot exceed any of its terms, it is bounded by the $k=1$ term, which equals $\gamma e_n(s^{(1)}(s,a))$ since $\delta_1=0$ by Lemma~\ref{lem:propagation}. Thus,
\[ \big( Q^*_{\mathcal D} - Q_{n+1}\big)(s,a) \le \gamma e_n\big(s^{(1)}(s,a)\big) \le \gamma \max_{s'} e_n(s') = \gamma\|e_n\|_\infty \qquad\text{for every }(s,a)\in\mathcal X_{\mathcal D}. \]
Taking maxima over $a\in\mathcal A_{\mathcal D}(s)$ and using $\max_a h_1(a) - \max_a h_2(a) \le \max_a (h_1(a) - h_2(a))$ for any two functions $h_1, h_2$,
\[ e_{n+1}(s) = \max_{a}  Q^*_{\mathcal D}(s,a) - \max_{a} Q_{n+1}(s,a) \le \max_{a} \big( Q^*_{\mathcal D} - Q_{n+1}\big)(s,a) \le \gamma\|e_n\|_\infty \]

for every $s$. Since $e_{n+1} \ge 0$ by Lemma~\ref{lem:propagation}, this gives
$\|e_{n+1}\|_\infty\le\gamma\|e_n\|_\infty$, hence
$\|e_n\|_\infty\le\gamma^{n}\|e_0\|_\infty\to0$, so $V_n\to V^*_{\mathcal D}$ and
$Q_{n+1}=\max_{k\in\{1,K\}}L_k[V_n]\to Q^*_{\mathcal D}$. Only the $k=1$ term
entered the argument, so the conclusion holds for any rollout length $K$.

For the second claim, including the $k=K$ term gives the minimum in \eqref{eq:propagation-rate} more terms to choose from, so at every coordinate the error can only get smaller, never larger. Suppose at some $(s,a)$,
\[ \delta_K(s,a) + \gamma^K e_n\big(s^{(K)}(s,a)\big) < \gamma e_n\big(s^{(1)}(s,a)\big). \]
Then, the $k=K$ term is strictly smaller than the $k=1$ term, so the error at $(s,a)$ is governed by $\delta_K(s,a) + \gamma^K e_n(s^{(K)}(s,a))$ rather than by the one-step term. If in addition every action in $\mathcal D$ along the $K$-step path is optimal in ${\mathcal M}_{\mathcal D}$, so that $\delta_K(s,a)=0$, the error at that coordinate contracts by $\gamma^K$
instead of $\gamma$.
\end{proof}

\subsection{Proof of Proposition~\ref{prop:exactness}}
\label{app:exactness}
\begin{proof}
Since $\mathcal L_{\mathrm{R}}$ replaces the constraints of LBLP by hinge penalties, the penalty weights must be large enough that violating is never profitable; otherwise the two problems would have different minimizers. Linear programming duality answers this: the optimal dual variables of the constrained program measure the gain from violating each constraint, and therefore define the optimal threshold. Let $\mathrm{LP}_1$ denote the relaxation of LBLP on ${\mathcal M}_{\mathcal D}$ obtained by keeping only the one-step and epigraph constraints $g_{\mathrm B}\le 0$ and $g_{\mathrm E}\le 0$. Since $g_{\mathrm{KQ}}$ and $g_{\mathrm{KV}}$ are redundant by Lemma~\ref{lem:kstep-redundant}, $\mathrm{LP}_1$ has the same feasible set as the LBLP on $\mathcal M_{\mathcal D}$, and as the two programs also share the objective $F$, they have the same minimizers. Proposition~\ref{prop:lblp-exactness}, applied to the finite discounted MDP $\mathcal M_{\mathcal D}$, identifies this common minimizer as the unique pair $(Q^*_{\mathcal D},V^*_{\mathcal D})$. Hence $\mathrm{LP}_1$ is feasible and bounded below by $p^* := F(Q^*_{\mathcal D},V^*_{\mathcal D})$, attained uniquely there. As $\mathrm{LP}_1$ is a feasible and bounded linear program, strong duality
holds, and its optimal dual variables for $g_{\mathrm B}$ and $g_{\mathrm E}$
are the $(\lambda^*_{\mathrm B},\lambda^*_{\mathrm E})$ of the statement,
indexed by $\mathcal X_{\mathcal D}$.

Next, we check that $( Q^*_{\mathcal D}, V^*_{\mathcal D})$ minimizes $\mathcal L_{\mathrm{R}}$. Consider~\eqref{eq:relaxed-lblp} as $\mathcal L_{\mathrm{R}} = \Phi_1 + \Phi_{\mathrm K}$, where
\[ \Phi_1(Q,V) := F(Q,V) + \lambda_{\mathrm B} \big\|[g_{\mathrm B}(Q,V)]_+\big\|_1 + \lambda_{\mathrm E} \big\|[g_{\mathrm E}(Q,V)]_+\big\|_1, \]
\[ \Phi_{\mathrm K}(Q,V) = \lambda_{\mathrm K}\Big(\big\|[g_{\mathrm{KQ}}(Q,V)]_+\big\|_1 + \big\|[g_{\mathrm{KV}}(Q,V)]_+\big\|_1\Big). \]
Writing $x=(Q,V)$ and $L(x,u,v) = F(x) + \langle u,g_{\mathrm B}(x)\rangle + \langle v,g_{\mathrm E}(x)\rangle$, strong duality gives $\min_x L(x,\lambda^*_{\mathrm B},\lambda^*_{\mathrm E}) = p^*$. Using the definition of $\|\cdot\|_1$, we get
\begin{align}
    \Phi_1(x)
    =& F(x) + \langle\lambda_{\mathrm B}\mathbf 1,[g_{\mathrm B}(x)]_+\rangle + \langle\lambda_{\mathrm E}\mathbf 1,[g_{\mathrm E}(x)]_+\rangle \nonumber \\
    =& F(x) + \langle\lambda^*_{\mathrm B},[g_{\mathrm B}(x)]_+\rangle + \langle\lambda^*_{\mathrm E},[g_{\mathrm E}(x)]_+\rangle \nonumber \\
    &+ \big\langle\lambda_{\mathrm B}\mathbf 1-\lambda^*_{\mathrm B},[g_{\mathrm B}(x)]_+\big\rangle + \big\langle\lambda_{\mathrm E}\mathbf 1-\lambda^*_{\mathrm E},[g_{\mathrm E}(x)]_+\big\rangle \nonumber \\
    \ge& L(x,\lambda^*_{\mathrm B},\lambda^*_{\mathrm E}) + \big\langle\lambda_{\mathrm B}\mathbf 1-\lambda^*_{\mathrm B},[g_{\mathrm B}(x)]_+\big\rangle + \big\langle\lambda_{\mathrm E}\mathbf 1-\lambda^*_{\mathrm E},[g_{\mathrm E}(x)]_+\big\rangle \nonumber \\
    \ge& p^* + \big\langle\lambda_{\mathrm B}\mathbf 1-\lambda^*_{\mathrm B},[g_{\mathrm B}(x)]_+\big\rangle + \big\langle\lambda_{\mathrm E}\mathbf 1-\lambda^*_{\mathrm E},[g_{\mathrm E}(x)]_+\big\rangle, \label{eq:key}
\end{align}
where the first inequality utilizes $g_{\mathrm B} \le [g_{\mathrm B}]_+$ and $g_{\mathrm E} \le [g_{\mathrm E}]_+$ together with $\lambda^*_{\mathrm B},\lambda^*_{\mathrm E} \ge 0$, and the last inequality comes from $\min_x L(x,\lambda^*_{\mathrm B},\lambda^*_{\mathrm E}) = p^*$. Since $( Q^*_{\mathcal D},  V^*_{\mathcal D})$ is feasible for $\mathrm{LP}_1$, both hinges in $\Phi_1$ vanish there. The rollout term $\Phi_{\mathrm K}$ vanishes as well, since by Lemma~\ref{lem:kstep-redundant}, the one-step and epigraph constraints imply $g_{\mathrm{KQ}}(x)\le 0$ and $g_{\mathrm{KV}}(x)\le 0$. Therefore, $\mathcal L_{\mathrm{R}}( Q^*_{\mathcal D}, V^*_{\mathcal D}) = p^*$ and \eqref{eq:key} shows it is a minimizer.

We now check that no other point can be a minimizer. If $x$ violates a one-step or epigraph constraint at some $(s,a)$, the corresponding hinge is strictly positive there and its slack $\lambda_{\mathrm B} - \lambda^*_{\mathrm B}(s,a)$ or $\lambda_{\mathrm E} - \lambda^*_{\mathrm E}(s,a)$ is strictly positive by the threshold assumption. Furthermore, since $\lambda_{\mathrm K}\ge 0$, we have $\Phi_{\mathrm K}\ge 0$ everywhere. Therefore, \eqref{eq:key} gives $\mathcal L_{\mathrm{R}}(x) > p^*$: no infeasible point can then be a minimizer since $( Q^*_{\mathcal D}, V^*_{\mathcal D})$ already attains $p^*$. Every minimizer is therefore feasible for $\mathrm{LP}_1$, where all hinges of $\Phi_1$ vanish and, by Lemma~\ref{lem:kstep-redundant}, so does $\Phi_{\mathrm K}$, leaving $\mathcal L_{\mathrm{R}}=F$. Minimizing $\mathcal L_{\mathrm{R}}$ thus reduces to $\mathrm{LP}_1$ itself, whose unique solution is $( Q^*_{\mathcal D}, V^*_{\mathcal D})$ by Proposition~\ref{prop:lblp-exactness} together with the feasible-set identity
of Lemma~\ref{lem:kstep-redundant}. Finally, the rollout term entered the argument only through its nonnegativity and its vanishing on the feasible set, never through its magnitude, so the conclusion holds for every $\lambda_{\mathrm K}\ge 0$ and every rollout length $K$.
\end{proof}

\subsection{Proof of Proposition~\ref{prop:bounded}}
\label{app:bounded}

\begin{proof}
Fix any $(Q,V)$ and write $(Q_c, V_c) := (Q - c\mathbf 1,\, V -
c\mathbf 1)$ for $c > 0$. We evaluate \eqref{eq:album-loss} along this
family.

The two objective terms are linear, so they contribute
\begin{equation}
\omega_{\mathrm Q}\,\mathbb E_{\mathcal D}\!\left[Q_c(s,a)\right]
+ \omega_{\mathrm V}\,\mathbb E_{\mathcal D}\!\left[V_c(s)\right]
= \omega_{\mathrm Q}\,\mathbb E_{\mathcal D}\!\left[Q(s,a)\right]
+ \omega_{\mathrm V}\,\mathbb E_{\mathcal D}\!\left[V(s)\right]
- c\,(\omega_{\mathrm Q}+\omega_{\mathrm V}).
\end{equation}

For the residuals, the epigraph term is invariant, since
\begin{equation}
Q_c(s,a) - V_c(s) = \big(Q(s,a) - c\big) - \big(V(s) - c\big)
= Q(s,a) - V(s).
\end{equation}
The one-step target shifts by $\gamma c$, because
$\hat y_1(s,a) = r_0 + \gamma V(s_1)$ is affine in $V$ with coefficient
$\gamma$, so
\begin{equation}
\hat y_1(s,a)\big|_{V_c} - Q_c(s,a)
= \hat y_1(s,a) - Q(s,a) + c\,(1-\gamma).
\end{equation}
The rollout target $\hat y_K(s,a) = \sum_{t<K}\gamma^{t}r_t +
\gamma^{K}V(s_K)$ is affine in $V$ with coefficient $\gamma^{K}$, so both
rollout residuals shift by $c\,(1-\gamma^{K})$,
\begin{equation}
\begin{aligned}
\hat y_K(s,a)\big|_{V_c} - Q_c(s,a)
&= \hat y_K(s,a) - Q(s,a) + c\,(1-\gamma^{K}), \\
\hat y_K(s,a)\big|_{V_c} - V_c(s)
&= \hat y_K(s,a) - V(s) + c\,(1-\gamma^{K}).
\end{aligned}
\end{equation}

Each residual above is an affine function of $c$ with a positive slope,
so there is a finite $c_0$ beyond which all of them are positive at every
sample, and each hinge equals its argument. For $c > c_0$ the objective
is therefore affine in $c$ with
\begin{equation} \label{eq:recession-slope}
\frac{d}{dc}\,L(Q_c,V_c)
= -(\omega_{\mathrm Q}+\omega_{\mathrm V})
+ \lambda_{\mathrm B}(1-\gamma)
+ 2\lambda_{\mathrm K}(1-\gamma^{K}),
\end{equation}
where the factor 2 collects the two rollout penalties, which shift at the same rate. Without stop gradient, the update is the gradient of \eqref{eq:album-loss}, and its component along this direction is the negative of \eqref{eq:recession-slope}. If \eqref{eq:recession-slope} is negative, the gradient update lowers $Q$ and $V$ without bound along this direction, which yields \eqref{eq:bounded}.

When \texttt{stop\_gradient} is applied to $\hat y_K$, its forward value is
unchanged. Hence, the rollout residuals still increase at rate
$1-\gamma^{K}$ along this family, and every hinge is active for $c>c_0$ as
before. The derivative used by the update, however, differs. For $c>c_0$, each
rollout penalty is affine in $c$, and without the stop gradient both arguments
depend on $c$,
\begin{equation}
\frac{d}{dc}\,\lambda_{\mathrm K}
\big(\hat y_K(s,a)\big|_{V_c}-Q_c(s,a)\big)
=
\lambda_{\mathrm K}(1-\gamma^{K}),
\qquad
\frac{d}{dc}\,\lambda_{\mathrm K}
\big(\hat y_K(s,a)\big|_{V_c}-V_c(s)\big)
=
\lambda_{\mathrm K}(1-\gamma^{K}),
\end{equation}
giving the rollout contribution in \eqref{eq:recession-slope}. With the stop
gradient, $\hat y_K$ is treated as constant by the update, so its dependence on
$V_c$ is removed from the derivative,
\begin{equation}
\frac{d}{dc}\,\lambda_{\mathrm K}
\big(\mathrm{sg}[\hat y_K(s,a)]-Q_c(s,a)\big)
=
\lambda_{\mathrm K},
\qquad
\frac{d}{dc}\,\lambda_{\mathrm K}
\big(\mathrm{sg}[\hat y_K(s,a)]-V_c(s)\big)
=
\lambda_{\mathrm K}.
\end{equation}
The objective and epigraph terms are unaffected by the stop gradient, while the
one-step penalty contributes $\lambda_{\mathrm B}(1-\gamma)$ as before. Thus,
the component of the update along this direction is
\begin{equation}
-(\omega_{\mathrm Q}+\omega_{\mathrm V})
+\lambda_{\mathrm B}(1-\gamma)
+2\lambda_{\mathrm K}.
\end{equation}
If this quantity is negative, the update induces an unbounded downward drift
along this direction, yielding \eqref{eq:bounded-sg}.
\end{proof}

\section{The \texttt{MudWorld} Environment}
\label{app:toy}

\begin{figure}[h]
\centering
\includegraphics[width=0.7\textwidth]{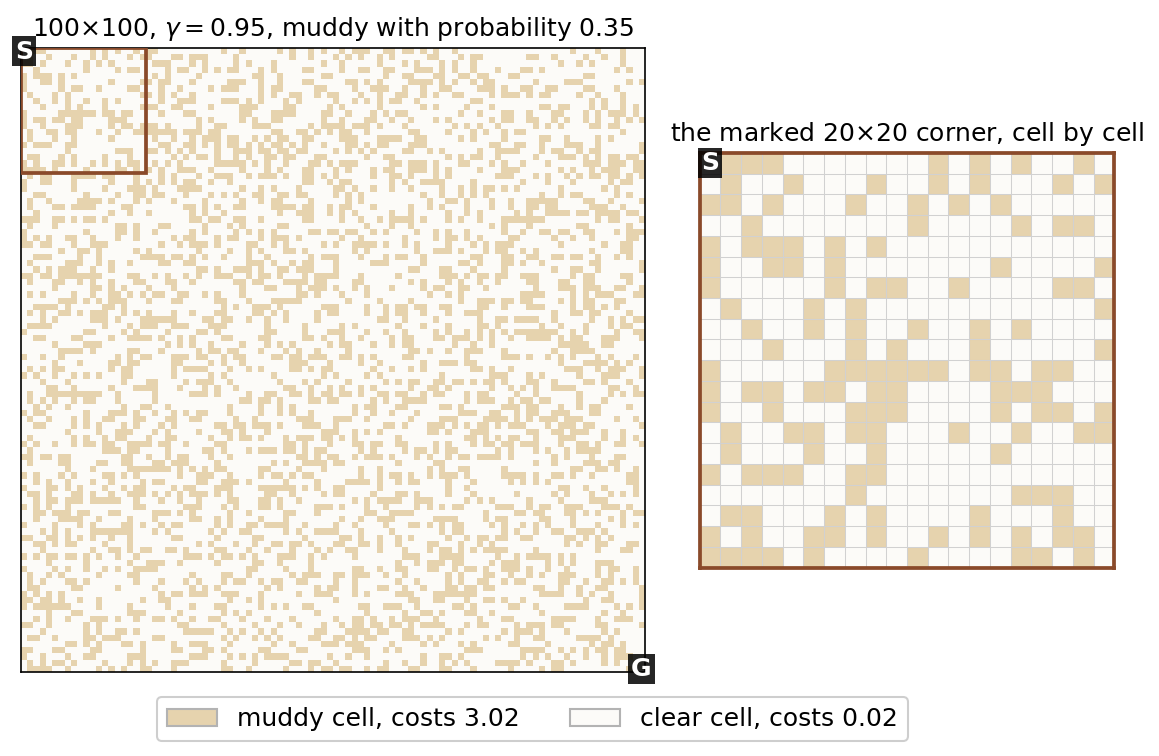}
\caption{The \texttt{MudWorld} environment shared by both numerical experiments.
Brown cells are muddy. The right image magnifies the $20\times20$ corner marked on
the left.}
\label{fig:mudworld}
\end{figure}

To verify the two theoretical statements in Sections~\ref{sec:offline-lblp} and~\ref{subsec:acc}, we construct a custom gridworld, \texttt{MudWorld}, for numerical simulation. The corresponding experiments are presented in Appendices~\ref{app:sandwich} and~\ref{app:kspeed}, respectively.
 Reaching a brown cell costs \texttt{-3.02}, while
reaching a white cell costs only \texttt{-0.02}. We name the environment
\texttt{MudWorld} after these costly cells, which the agent must decide whether
to cross or go around. The agent starts at \texttt{S} and the goal is to reach
\texttt{G}. The grid is $100\times100$, giving $|\mathcal S|=10{,}000$ states,
and $\mathcal A=\{\texttt{Up},\texttt{Down},\texttt{Left},\texttt{Right}\}$ with
$|\mathcal A|=4$, so $|\mathcal X|=40{,}000$. \texttt{S} is the top-left cell
$(0,0)$ and \texttt{G} the bottom-right cell $(99,99)$, a Manhattan distance of
$198$ apart. The dynamics are deterministic. Each action moves the agent one
cell in its direction, and an action that would leave the grid leaves the agent
where it is, so every action is available in every state. Each cell carries a cost that is charged on
entry. Muddy cells are drawn i.i.d.\ with probability $0.35$ and cost $3.02$,
and the remaining cells cost $0.02$.
The reward at a transition into $s'$ is $1-\mathrm{cost}(s')$ if $s'$ is the
goal and $-\mathrm{cost}(s')$ otherwise, so rewards lie in $[-3.02, 0.98]$. We
use $\gamma=0.95$. The reference solution $Q^*$ is obtained by value iteration
on $\mathcal M$.

\section{Numerical Verification of the Sandwich Property}
\label{app:sandwich}

\subsection{Dataset}
Since the setting is offline, a dataset is collected in advance, and it contains
only \texttt{Down} and \texttt{Right} actions, so no \texttt{Up} or \texttt{Left}
action appears anywhere in the data. The behavior is therefore suboptimal, and
the optimal policy of $\mathcal M$ cannot be recovered from the dataset support.
To collect the offline dataset, we run a behavior policy $\beta$ that selects
uniformly between the two actions, excluding the one that would leave the grid,
so every episode reaches the goal in exactly $198$ steps. Collecting $2{,}000$
episodes gives $396{,}000$ transitions, covering
$|\mathcal X_{\mathcal D}|=9{,}693$ pairs over
$|\mathcal S_{\mathcal D}|=5{,}133$ states. 

\subsection{Solving LBLP}
We set $K=10$, and take the objective weights from the
empirical sampling distribution, $c_Q=\omega_Q\,\hat d(s,a)$ and
$c_V=\omega_V\,\hat d_{\mathcal S}(s)$ with $\omega_Q=0.1$ and
$\omega_V=0.05$, where $\hat d$ is the visitation frequency of $\mathcal D$ and
$\hat d_{\mathcal S}$ its state marginal. Both are strictly positive on
$\mathcal X_{\mathcal D}$ and $\mathcal S_{\mathcal D}$, so
Definition~\ref{def:lblp} applies and the minimizer is unique. The program has
$14{,}826$ decision variables, $9{,}693$ for $Q$ on $\mathcal X_{\mathcal D}$ and
$5{,}133$ for $V$ on $\mathcal S_{\mathcal D}$, and $38{,}772$ constraints,
$9{,}693$ from each of the four families, with $225{,}776$ nonzeros in the
constraint matrix. Assembling $y_K$ as an affine function of $V$ by the $K$-step
recursion takes about $1.5$\,s; we then solve the program with
\texttt{scipy.optimize.linprog} using the \texttt{highs} method.

The two reference quantities ($Q^*$ and $G_{\mathcal D}$) are obtained
independently of the program. $Q^{*}$ is the value iteration solution of
Appendix~\ref{app:toy}. We compute $G_{\mathcal D}$ in two passes over
$\mathcal D$, first accumulating the discounted return of every suffix backward
along each trajectory, then taking, for each pair, the maximum over the
positions at which it appears.

\subsection{Results}
We repeat the simulation over eight random seeds, each generating a different
mud layout and a different dataset. As shown in Figure~\ref{fig:seedgrid}, the
LBLP minimizer lies between $G_{\mathcal D}$ and $Q^{*}$ at every pair in all
eight seeds.

\begin{figure}[h]
\centering
\includegraphics[width=\textwidth]{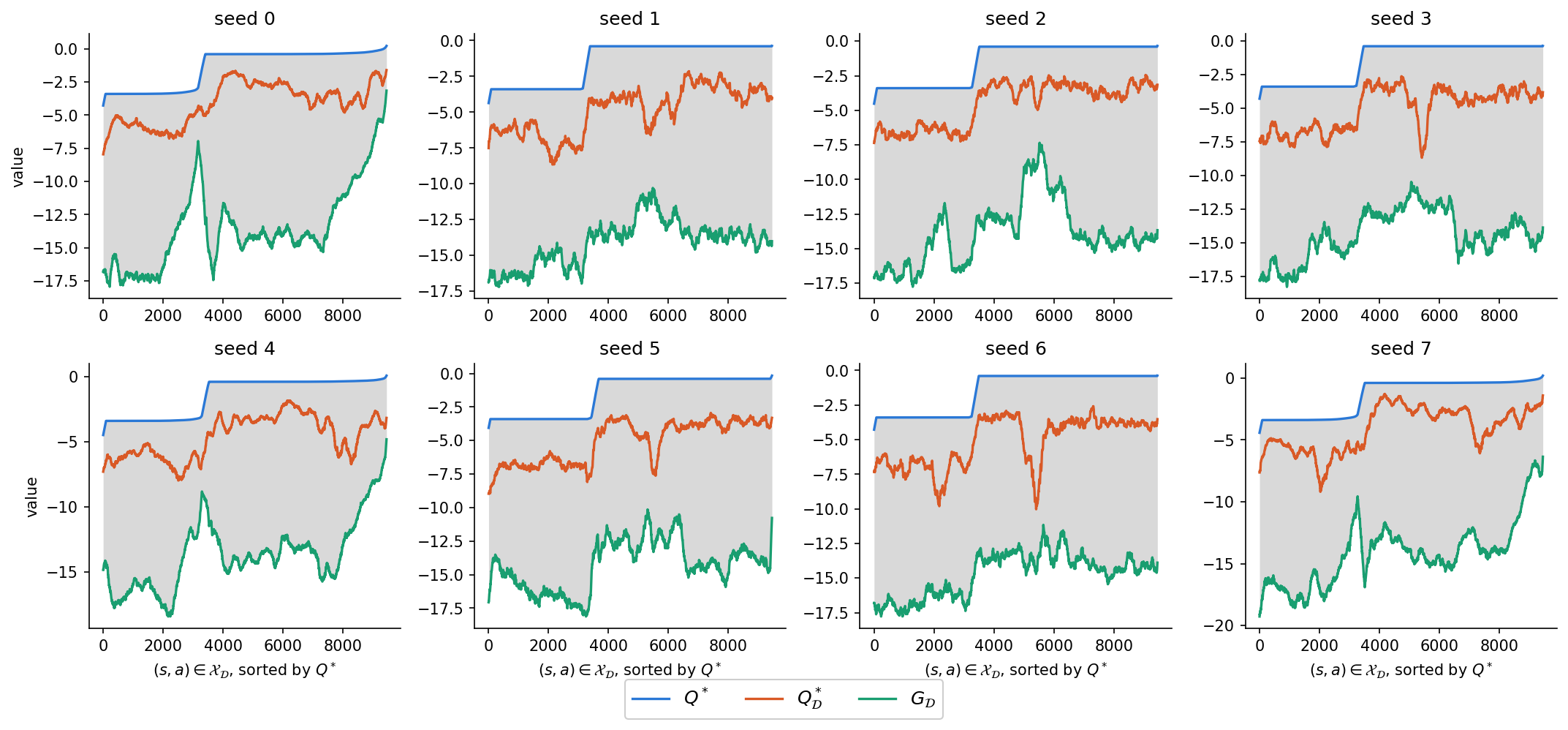}
\caption{Proposition~\ref{prop:sandwich} across eight seeds. Each subfigure plots
$Q^{*}$, $Q^{*}_{\mathcal D}$, and $G_{\mathcal D}$ at every pair of
$\mathcal X_{\mathcal D}$, sorted by $Q^{*}$.}
\label{fig:seedgrid}
\end{figure}

\section{Numerical Verification of the $K$-Rollout Acceleration}
\label{app:kspeed}

\subsection{Dataset}

In this section we reuse the environment of Appendix~\ref{app:toy} with a
different behavior policy. According to Lemma~\ref{lem:propagation}, the error at a pair
after one update is $\min_{k \in \{1,K\}}\big[\delta_k + \gamma^k
e_n(s^{(k)})\big]$, and $\delta_1 = 0$, so the $k=1$ term contributes $\gamma
e_n(s^{(1)})$ and the $k=K$ term contributes $\delta_K + \gamma^K e_n(s^{(K)})$.
While $e_n$ is large, $\gamma^K e_n$ falls below $\gamma e_n$ by enough to
absorb $\delta_K$ and the $K$-step term is selected, so the error contracts at
$\gamma^K$. Once $e_n$ has decayed to the point where $\gamma^K e_n$ is small
against $\delta_K$, the one-step term takes over and the rate returns to
$\gamma$. The residual $\delta_K$ thus acts as a floor below which the
acceleration no longer operates.

To verify such existence of $\delta_K$ and $\gamma^K$, we design and collect a second dataset with an $\varepsilon$-greedy policy that
takes $\arg\max_a Q^{*}(s,a)$ with probability $1-\varepsilon$ and a uniform
action otherwise, with $\varepsilon = 0.02$, from a start cell drawn uniformly
over the $9{,}999$ non-goal cells. Three hundred episodes capped at $400$ steps
give $41{,}178$ transitions over $|\mathcal X_{\mathcal D}| = 3{,}182$ pairs and
$|\mathcal S_{\mathcal D}| = 2{,}703$ states, with visit counts ranging from $1$
to $374$.

\subsection{Computation}

In Figure~\ref{fig:kspeed}(a), we run~\eqref{eq:repair} directly,
$Q_{n+1} = \max_{k\in{1,K}} L_k[V_n]$ and
$V_{n+1}(s) = \max_{a\in\mathcal A_{\mathcal D}(s)} Q_{n+1}(s,a)$, starting from
the underestimate $V_0 \equiv r_{\min}/(1-\gamma)$ given in
Lemma~\ref{lem:propagation}. The reference $(Q^{*}_{\mathcal D},V^{*}_{\mathcal D})$
is obtained by value iteration with $\mathcal T_{\mathcal D}$. As shown in
Figure~\ref{fig:kspeed}(a), during the early phase of the iteration, every
error curve with $K>1$ exhibits a fast decay rate close to $\gamma^K$.
However, the decay rate changes in the later phase and becomes aligned with
the slope of the $K=1$ curve, corresponding to a decay rate of $\gamma$.
This provides partial evidence that $\delta_K$ indeed exists and contributes
to the convergence behavior in the late stage of the iteration. (Lemma~\ref{lem:propagation})

Figure~\ref{fig:kspeed}(b) compares the idealized iteration (\eqref{eq:repair})
with gradient descent on the relaxed LBLP objective (\eqref{eq:relaxed-lblp}),
using $\lambda_B = \lambda_E = \lambda_K = 2.5$ and a learning rate of $0.03$.
Since the two methods use different computational procedures and are measured
in different metrics, we first measure each method using its own no-rollout
version as the baseline, and then compare the resulting speedup factors.
Both methods start from $Q \equiv V \equiv r_{\min}/(1-\gamma)$ and are evaluated
at the same target error.

Together, Figures~\ref{fig:kspeed}(a) and~\ref{fig:kspeed}(b) verify that
the acceleration indeed exists. Figure~\ref{fig:kspeed}(b) further suggests
that, under favorable conditions, this acceleration can even outperform the
idealized iteration. Both methods achieve a speedup compared with the
version without rollout terms.

\begin{figure}[h]
\centering
\includegraphics[width=\textwidth]{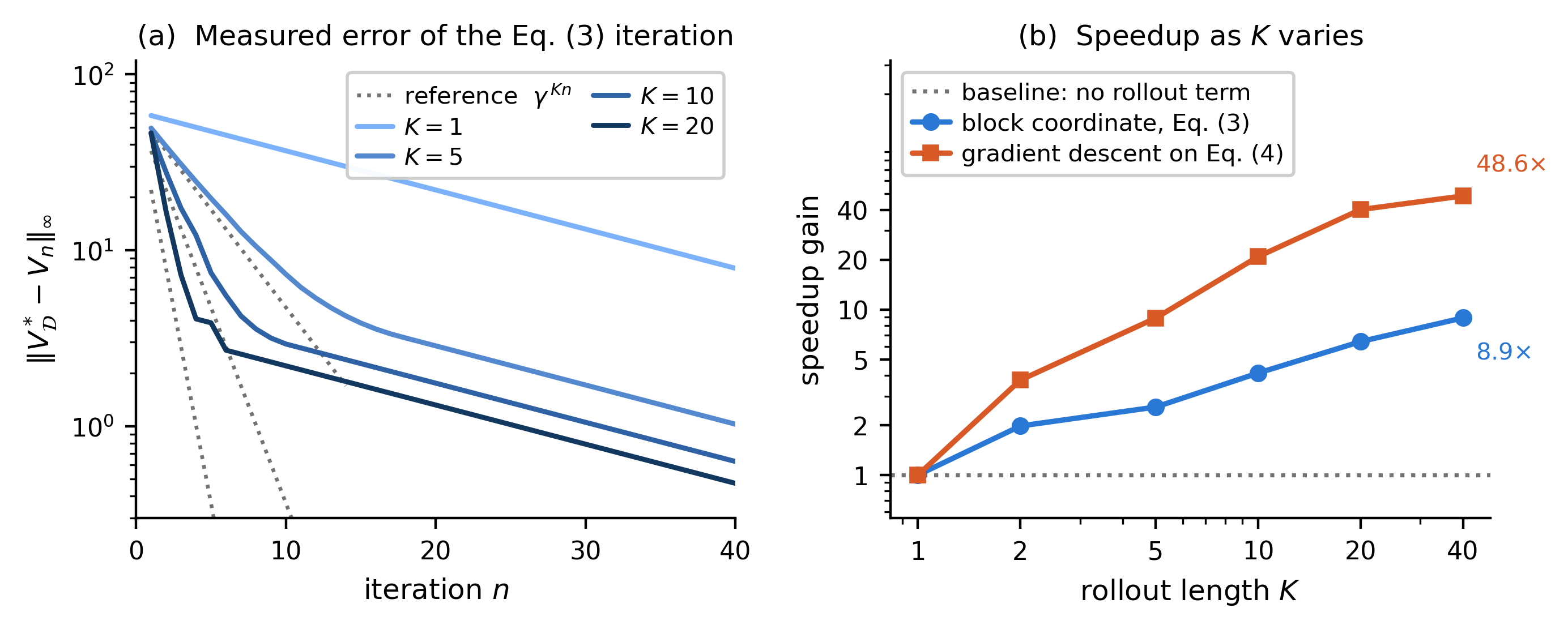}
\caption{$K$-rollout acceleration on \texttt{MudWorld}.
\textbf{(a)} Each curve leaves the start on its own $\gamma^{Kn}$ slope, and at the $\delta_K$
floor the curves rejoin the $\gamma^{n}$ slope.
\textbf{(b)} The direct gradient descent on the relaxed LBLP objective
exhibits an acceleration and can outperform the idealized iteration
under favorable conditions, while both methods achieve a speedup over the
version without rollout terms.
}
\label{fig:kspeed}
\end{figure}

\newpage
\section{Design of \texttt{stop\_gradient} on $\hat y_K$ under Neural Parameterization}
\label{ablation:stop_grad}

As mentioned in Section~\ref{sec:album}, we apply \texttt{stop\_gradient} to the
rollout term $\hat y_K$. The device might be familiar from temporal difference
methods, where the target is the label of a regression and holding it fixed
keeps the regression from chasing its own output. However, our use of it has a different
origin, and follows from the structure of the tabular objective of Definition~\ref{def:relaxed_LBLP}, which the
neural parameterization breaks in one specific way. In
\eqref{eq:relaxed-lblp} the value function is a vector with one coordinate per
state, and distinct states occupy orthogonal coordinates. Under a network
$V_\phi$, the values at distinct states share parameters, so a gradient
component aimed at one state also moves the others. The following remark makes
the consequence precise.

\begin{remark}[One descent step on the rollout penalty]
\label{rem:stopgrad-step}
Fix a recorded pair $(s,a)$ with $s \neq s_K$, suppose the hinge
$\lambda_\mathrm K\big(y_K(s,a) - V(s)\big)_+$ is active, and take one descent
step with rate $\eta$ on this term alone. In \eqref{eq:relaxed-lblp}
the value function is a vector with one coordinate per state, and the
term has the two derivatives
\begin{equation}
\frac{\partial}{\partial V(s)}\,
\lambda_K\big(y_K(s,a) - V(s)\big)_+ = -\lambda_K,
\qquad
\frac{\partial}{\partial V(s_K)}\,
\lambda_K\big(y_K(s,a) - V(s)\big)_+ = \lambda_K\gamma^{K},
\label{eq:tab-grad}
\end{equation}
so the step moves the coordinate $V(s)$ by
\begin{equation}
\Delta V(s) = \eta\lambda_K .
\label{eq:tab-step}
\end{equation}
The second derivative in \eqref{eq:tab-grad} moves $V(s_K)$ as well, by
$-\eta\lambda_K\gamma^{K}$, but that coordinate does not enter
\eqref{eq:tab-step}. In \eqref{eq:album-loss} the decision variable is
$\phi$ and one network carries $V$ at every state, so the two
derivatives of \eqref{eq:tab-grad} collapse into the single gradient
\begin{equation}
\nabla_\phi\,\lambda_K\big(\hat y_K(s,a) - V_\phi(s)\big)_+
= \lambda_K\Big(\gamma^{K}\,\nabla_\phi V_\phi(s_K)
- \nabla_\phi V_\phi(s)\Big),
\label{eq:nn-grad}
\end{equation}
the step is $\Delta\phi = -\eta\lambda_K\big(\gamma^{K}\nabla_\phi
V_\phi(s_K) - \nabla_\phi V_\phi(s)\big)$, and the value at $s$ moves by
\begin{equation}
\Delta V_\phi(s)
= \eta\lambda_K\Big(\big\|\nabla_\phi V_\phi(s)\big\|^{2}
- \gamma^{K}\,\nabla_\phi V_\phi(s)^\top \nabla_\phi V_\phi(s_K)\Big)
+ O(\eta^{2}).
\label{eq:nn-step}
\end{equation}
The second term of \eqref{eq:nn-step} depends on $s_K$, whereas
\eqref{eq:tab-step} has no counterpart to it. This term arises because
the shared parameters of a neural network allow the gradient of
$V_\phi(s_K)$ to contribute to the update at $s$. Consequently, without
\texttt{stop\_gradient}, the gradient of $\hat y_K$ introduces an
additional term that is absent from the tabular update and changes the
intended gradient direction. Applying \texttt{stop\_gradient} to
$\hat y_K$ removes this term, so that \eqref{eq:nn-step} recovers the
gradient structure of \eqref{eq:tab-step}.
\end{remark}

\newpage
\section{Stationary Point Analysis}
\label{app:stationary}

The four hinge terms in \eqref{eq:album-loss} are nondifferentiable at
pairs where their constraint residuals equal zero. Hence, each admits a set of
subgradients rather than a unique gradient. Let $u,v,w,x$ denote selections
from the subdifferentials of
$\lambda_{\mathrm B}\mathbb E_{\mathcal D}[(g_{\mathrm B})_+]$,
$\lambda_{\mathrm E}\mathbb E_{\mathcal D}[(g_{\mathrm E})_+]$,
$\lambda_{\mathrm K}\mathbb E_{\mathcal D}[(g_{\mathrm{KQ}})_+]$, and
$\lambda_{\mathrm K}\mathbb E_{\mathcal D}[(g_{\mathrm{KV}})_+]$,
respectively. For a constraint residual $g$ and coefficient $\lambda$, an
admissible subgradient selection takes the value
$\lambda d_{\mathcal D}(s,a)$ when $g(s,a)>0$, the value $0$ when
$g(s,a)<0$, and any value in
$[0,\lambda d_{\mathcal D}(s,a)]$ when $g(s,a)=0$.

\begin{definition}[Detached update field]
\label{def:detached-field}
For a fixed $\bar y$ on $\mathcal X_{\mathcal D}$, let us define
$\mathcal L_R(Q,V;\bar y)$ as \eqref{eq:album-loss} with
$\operatorname{sg}[\hat y_K]$ replaced by $\bar y$ and with $Q$ and $V$ treated
as free variables. This function is convex in $(Q,V)$, and let us define the
detached update field as
\begin{equation}
\mathcal G(Q,V) := \partial_{(Q,V)}\,\mathcal L_R(Q,V;\bar y)\big|_{\bar y=\hat y_K(V)},
\end{equation}
where $\partial$ denotes the convex subdifferential. The update of ALBUM
descends along $\mathcal G$, which is in general not the subdifferential of any
function of $(Q,V)$.
\end{definition}

At $(Q^{*}_{\mathcal D},V^{*}_{\mathcal D})$, the admissible selections have
the following ranges:
\begin{itemize}
\item $u(s,a) \in [0,\lambda_{\mathrm B}d_{\mathcal D}(s,a)]$ for every
$(s,a)\in\mathcal X_{\mathcal D}$;
\item $v(s,a) \in [0,\lambda_{\mathrm E}d_{\mathcal D}(s,a)]$ when
$Q^{*}_{\mathcal D}(s,a)=V^{*}_{\mathcal D}(s)$, and $v(s,a)=0$ otherwise;
\item $w(s,a) \in [0,\lambda_{\mathrm K}d_{\mathcal D}(s,a)]$ when
$g_{\mathrm{KQ}}(s,a)=0$, and $w(s,a)=0$ otherwise;
\item $x(s,a) \in [0,\lambda_{\mathrm K}d_{\mathcal D}(s,a)]$ when
$g_{\mathrm{KV}}(s,a)=0$, and $x(s,a)=0$ otherwise.
\end{itemize}

The conditions on $w$ and $x$ follow from the two $K$-step residuals at
$(Q^{*}_{\mathcal D},V^{*}_{\mathcal D})$. The $K$-step target along the dataset
trajectory of $(s,a)$ never exceeds the in-sample optimum, so
\[
g_{\mathrm{KQ}}(s,a)=\hat y_K(s,a)-Q^{*}_{\mathcal D}(s,a)\leq 0,
\qquad
g_{\mathrm{KV}}(s,a)=g_{\mathrm{KQ}}(s,a)-\bigl(V^{*}_{\mathcal D}(s)-Q^{*}_{\mathcal D}(s,a)\bigr)\leq 0,
\]
where $g_{\mathrm{KQ}}$ vanishes precisely when the dataset continuation of
$(s,a)$ is optimal in $\mathcal M_{\mathcal D}$. Hence $g_{\mathrm{KV}}$ vanishes
only on those pairs at which $a$ also attains $V^{*}_{\mathcal D}(s)$.

\begin{proposition}[Stationarity of the LBLP solution under the detached update]
\label{prop:stationary}
Suppose there exist admissible selections $u,v,w,x$ at
$(Q^{*}_{\mathcal D},V^{*}_{\mathcal D})$ satisfying
\begin{equation} \label{eq:stat-q}
\omega_{\mathrm Q}d_{\mathcal D}(s,a)
-u(s,a)+v(s,a)-w(s,a)=0
\qquad
\text{for every }(s,a)\in\mathcal X_{\mathcal D},
\end{equation}
and
\begin{equation} \label{eq:stat-v}
\omega_{\mathrm V}d_{\mathcal S}(s)
+\gamma\!\!\sum_{(t,b)\in\mathcal I(s)}\!\!u(t,b)
-\!\!\sum_{a\in\mathcal A_{\mathcal D}(s)}
\bigl(v(s,a)+x(s,a)\bigr)=0
\qquad
\text{for every }s\in\mathcal S_{\mathcal D},
\end{equation}
where
\[
\mathcal I(s)
:=
\{(t,b)\in\mathcal X_{\mathcal D}:t'=s\}
\]
collects the pairs whose one-step target contains $V(s)$.
Then
\[
0\in\mathcal G(Q^{*}_{\mathcal D},V^{*}_{\mathcal D}).
\]
\end{proposition}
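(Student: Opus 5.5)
The plan is to compute the detached update field $\mathcal G$ at $(Q^{*}_{\mathcal D},V^{*}_{\mathcal D})$ explicitly, coordinate by coordinate. We then show that the element built from the given selections $u,v,w,x$ has zero $Q$-component exactly when \eqref{eq:stat-q} holds and zero $V$-component exactly when \eqref{eq:stat-v} holds.

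\emph{Subdifferential decomposition.} Since $\bar y$ is frozen, $\mathcal L_R(\cdot;\bar y)$ is a sum of one linear term and four convex hinge terms, each the composition of $[\cdot]_+$ with an affine map of $(Q,V)$. On the finite support $\mathcal X_{\mathcal D}$ these terms are finite and polyhedral, so the subdifferential sum rule holds with equality. The chain rule for $[\cdot]_+\circ$affine then gives the subdifferential of each hinge as the set of vectors $\mu(s,a)\,\nabla g(s,a)$, with $\mu$ ranging over exactly the admissible selections listed before the statement.

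\emph{Coordinate contributions.} Writing the expectations as sums weighted by $d_{\mathcal D}$ (and by its state marginal $d_{\mathcal S}$ for the $V$ objective term), we record how each residual enters.
\begin{itemize}
\item $g_{\mathrm B}=r+\gamma V(s')-Q(s,a)$ contributes $-u(s,a)$ to the coordinate $Q(s,a)$ and $+\gamma u(s,a)$ to the coordinate $V(s')$. Summed over pairs, this is the term $\gamma\sum_{(t,b)\in\mathcal I(s)}u(t,b)$ at $V(s)$.
\item $g_{\mathrm E}=Q(s,a)-V(s)$ contributes $+v(s,a)$ to $Q(s,a)$ and $-v(s,a)$ to $V(s)$.
\item $g_{\mathrm{KQ}}=\bar y(s,a)-Q(s,a)$ contributes only $-w(s,a)$ to $Q(s,a)$, because $\bar y$ is held constant.
\item $g_{\mathrm{KV}}=\bar y(s,a)-V(s)$ contributes only $-x(s,a)$ to $V(s)$.
\end{itemize}
Collecting terms, the $Q(s,a)$-component is the left side of \eqref{eq:stat-q}, and the $V(s)$-component is the left side of \eqref{eq:stat-v}, after summing the $\mathrm E$ and $\mathrm{KV}$ contributions over $a\in\mathcal A_{\mathcal D}(s)$.

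\emph{Admissibility of the ranges.} We confirm that the listed ranges really are the subdifferential ranges at $(Q^{*}_{\mathcal D},V^{*}_{\mathcal D})$ once $\bar y=\hat y_K(V^{*}_{\mathcal D})$ is substituted.
\begin{itemize}
\item $g_{\mathrm B}=0$ everywhere by the Bellman equation \eqref{eq:emp-bellman}, so $u$ may take any value in $[0,\lambda_{\mathrm B}d_{\mathcal D}]$.
\item $g_{\mathrm E}\le 0$, with equality exactly at pairs where $a$ attains the maximum.
\item $g_{\mathrm{KQ}}\le 0$ and $g_{\mathrm{KV}}\le 0$ by Remark~\ref{rem:path-redundant}, equivalently by $\delta_K\ge 0$ from Lemma~\ref{lem:propagation} under deterministic dynamics.
\end{itemize}
No residual is strictly positive, so every selection lies in the stated interval or is forced to zero. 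The hypothesis of the proposition therefore supplies an element of $\mathcal G$ that vanishes.

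\emph{Main care point.} The step needing the most attention is bookkeeping the $V$-coordinates. Two points matter. First, terminal successors in $\mathcal S^0_{\mathcal D}$ carry no variable, since $V=0$ is fixed there, so $\mathcal I(s)$ is only needed for $s\in\mathcal S_{\mathcal D}$. Second, the rollout target's dependence on $V(s_K)$ is removed by the detachment; this is exactly why no $\gamma^K$ term appears in \eqref{eq:stat-v}.

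The remaining subtlety is justifying equality in the subdifferential sum rule. Polyhedrality of every summand on this finite-dimensional space gives it without any qualification condition.
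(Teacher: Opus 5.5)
Your proposal is correct and follows essentially the same route as the paper: freeze $\bar y=\hat y_K(V^{*}_{\mathcal D})$, decompose $\partial\mathcal L_R$ by the sum rule and the affine chain rule for each hinge, and read off the $Q(s,a)$- and $V(s)$-components, which are exactly the left sides of \eqref{eq:stat-q} and \eqref{eq:stat-v}. Two of your steps are stronger than needed. Exhibiting $0\in\mathcal G$ only requires the inclusions $\sum_i\partial f_i\subseteq\partial\sum_i f_i$ and $A^{\top}\partial\phi(Ax+b)\subseteq\partial(\phi\circ(A\cdot+b))$, so you never need equality in the sum rule. Your residual-sign check, which uses determinism, only re-derives the admissible ranges the paper lists before the statement, whereas the hypothesis already supplies admissible selections.
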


\begin{proof}
With $\bar y$ held fixed, Definition~\ref{def:detached-field} gives
\begin{equation}
\begin{aligned}
\mathcal L_R(Q,V;\bar y)
=&~\omega_{\mathrm Q}\,\mathbb E_{\mathcal D}\!\left[Q(s,a)\right]
+\omega_{\mathrm V}\,\mathbb E_{\mathcal D}\!\left[V(s)\right] \\
&+\lambda_{\mathrm B}\,\mathbb E_{\mathcal D}\!\left[\left(r+\gamma V(s')-Q(s,a)\right)_+\right]
+\lambda_{\mathrm E}\,\mathbb E_{\mathcal D}\!\left[\left(Q(s,a)-V(s)\right)_+\right] \\
&+\lambda_{\mathrm K}\left(
\mathbb E_{\mathcal D}\!\left[\left(\bar y(s,a)-Q(s,a)\right)_+\right]
+\mathbb E_{\mathcal D}\!\left[\left(\bar y(s,a)-V(s)\right)_+\right]\right).
\end{aligned}
\end{equation}
Every term is a finite convex function and each hinge composes
$[\,\cdot\,]_+$ with an affine map of $(Q,V)$, so the sum rule and the chain
rule for affine composition
apply~\citep[Theorems 23.8 and 23.9]{rockafellar1997convex}. Selecting one
subgradient from each hinge determines one element of $\mathcal G$, and we take
the admissible selections $u,v,w,x$ of the statement, evaluated at
$(Q^{*}_{\mathcal D},V^{*}_{\mathcal D})$ with $\bar y=\hat y_K$, which already
carry the weight $d_{\mathcal D}$.

Differentiating the objective terms gives
\begin{equation}
\frac{\partial}{\partial Q(s,a)}\,\omega_{\mathrm Q}\mathbb E_{\mathcal D}[Q]
=\omega_{\mathrm Q}d_{\mathcal D}(s,a),
\qquad
\frac{\partial}{\partial V(s)}\,\omega_{\mathrm V}\mathbb E_{\mathcal D}[V]
=\omega_{\mathrm V}d_{\mathcal S}(s).
\end{equation}
For the hinges, the chain rule sends each selection to the transpose of its
affine map, so at $Q(s,a)$ we obtain the contributions
\begin{equation}
-u(s,a),\qquad +v(s,a),\qquad -w(s,a),\qquad 0,
\end{equation}
from $g_{\mathrm B}$, $g_{\mathrm E}$, $g_{\mathrm{KQ}}$ and $g_{\mathrm{KV}}$
respectively, and at $V(s)$ the contributions
\begin{equation}
\gamma\!\!\sum_{(t,b)\in\mathcal I(s)}\!\!u(t,b),
\qquad
-\!\!\sum_{a\in\mathcal A_{\mathcal D}(s)}\!\!v(s,a),
\qquad
0,
\qquad
-\!\!\sum_{a\in\mathcal A_{\mathcal D}(s)}\!\!x(s,a),
\end{equation}
where the first sum collects the pairs whose one-step argument
$r+\gamma V(t')$ contains $V(s)$, and the two zeros hold because $\bar y$ does
not depend on $(Q,V)$.

Summing the contributions yields
\begin{equation}
\omega_{\mathrm Q}d_{\mathcal D}(s,a)-u(s,a)+v(s,a)-w(s,a)=0
\end{equation}
by \eqref{eq:stat-q}, and
\begin{equation}
\omega_{\mathrm V}d_{\mathcal S}(s)
+\gamma\!\!\sum_{(t,b)\in\mathcal I(s)}\!\!u(t,b)
-\!\!\sum_{a\in\mathcal A_{\mathcal D}(s)}\bigl(v(s,a)+x(s,a)\bigr)=0
\end{equation}
by \eqref{eq:stat-v}. The selected element of $\mathcal G$ therefore has zero
component at every $Q(s,a)$ and every $V(s)$, which yields
$0\in\mathcal G(Q^{*}_{\mathcal D},V^{*}_{\mathcal D})$.
\end{proof}

The existence of such selections is not automatic. The objective terms
contribute $\omega_{\mathrm Q}d_{\mathcal D}(s,a)$ and
$\omega_{\mathrm V}d_{\mathcal S}(s)$ at each coordinate, while the selections
enter \eqref{eq:stat-q} and \eqref{eq:stat-v} with the opposite sign and are
each bounded by their respective penalty coefficients. Which selections carry
these contributions separates two regimes, and each yields its own bound on the
coefficients.

\paragraph{The one-step selection regime.}
Setting $w=x=0$ leaves $u$ and $v$ to carry both conditions. The affine map of
$g_{\mathrm B}$ places $u(s,a)$ at $Q(s,a)$ in \eqref{eq:stat-q} and at $V(s')$
in \eqref{eq:stat-v} with weight $\gamma$, so the same selection enters two
conditions and accumulates along a trajectory of $\mathcal M_{\mathcal D}$. The
bound $\lambda_{\mathrm B}$ places on it therefore grows with the effective
horizon $1/(1-\gamma)$.

\begin{remark}[The one-step selection alone]
\label{rem:onestep-regime}
Suppose $|\mathcal A_{\mathcal D}(s)|=1$ for every $s\in\mathcal S_{\mathcal D}$,
so that $d_{\mathcal D}(s,a)=d_{\mathcal S}(s)$, and consider a dataset
trajectory $(s_0,a_0),\dots,(s_{T-1},a_{T-1})$ in which $\mathcal I(s_0)$ is
empty and $\mathcal I(s_t)=\{(s_{t-1},a_{t-1})\}$ for $1\le t\le T-1$, so that
each pair appears once and $d_{\mathcal D}$ is uniform. At
$(Q^{*}_{\mathcal D},V^{*}_{\mathcal D})$ the residual $g_{\mathrm B}$ vanishes
by Bellman optimality, and the single action gives
$Q^{*}_{\mathcal D}(s,a)=V^{*}_{\mathcal D}(s)$, so both $u$ and $v$ are free.
Fix $w=x=0$ and write $u_t:=u(s_t,a_t)$ and $v_t:=v(s_t,a_t)$. At $(s_t,a_t)$,
\eqref{eq:stat-q} gives
\[
v_t = u_t - \omega_{\mathrm Q}\,d_{\mathcal D},
\]
and at $s_t$, the single action makes \eqref{eq:stat-v} reduces to
$\omega_{\mathrm V} d_{\mathcal D} + \gamma u_{t-1} - v_t = 0$, so
\[
u_t = \gamma u_{t-1}
+ (\omega_{\mathrm Q}+\omega_{\mathrm V})\,d_{\mathcal D},
\qquad u_{-1}=0 .
\]
Unrolling from $t=0$,
\[
u_t = (\omega_{\mathrm Q}+\omega_{\mathrm V})\,d_{\mathcal D}
\sum_{j=0}^{t}\gamma^{\,j}
= (\omega_{\mathrm Q}+\omega_{\mathrm V})\,d_{\mathcal D}\,
\frac{1-\gamma^{\,t+1}}{1-\gamma},
\]
which increases in $t$ and stays below
$(\omega_{\mathrm Q}+\omega_{\mathrm V})d_{\mathcal D}/(1-\gamma)$. The
admissibility bounds $u_t\le\lambda_{\mathrm B}d_{\mathcal D}$ and
$v_t\le\lambda_{\mathrm E}d_{\mathcal D}$ therefore hold once
\[
\lambda_{\mathrm B}\ \ge\
\frac{\omega_{\mathrm Q}+\omega_{\mathrm V}}{1-\gamma},
\qquad
\lambda_{\mathrm E}\ \ge\
\frac{\omega_{\mathrm Q}+\omega_{\mathrm V}}{1-\gamma}-\omega_{\mathrm Q},
\]
so $0\in\mathcal G(Q^{*}_{\mathcal D},V^{*}_{\mathcal D})$ holds without the
rollout selections at coefficients of this order. The same bound on
$\lambda_{\mathrm B}$ appears in \eqref{eq:bounded-sg} at
$\lambda_{\mathrm K}=0$.
\end{remark}

Without uniformity the ratio $d_{\max}/d_{\min}$ enters this bound, so the
stationarity conditions require a larger $\lambda_{\mathrm B}$ than
\eqref{eq:bounded-sg} does at $\lambda_{\mathrm K}=0$. The aggregation of
Remark~\ref{rem:aggregate} replaces the coordinatewise bounds by their totals,
which carry no dependence on $d_{\mathcal D}$.

\paragraph{The rollout selections regime.}
Taking $w$ and $x$ nonzero instead allows them to carry the objective terms. The
selection $w(s,a)$ enters \eqref{eq:stat-q} at $(s,a)$ and $x(s,a)$ enters
\eqref{eq:stat-v} at $s$, and neither appears elsewhere, since $\bar y$ is held
fixed in Definition~\ref{def:detached-field} and contributes nothing at
$V(s_K)$. Unlike $u$, which the affine map of $g_{\mathrm B}$ places at both
$Q(s,a)$ and $V(s')$, they offset the objective term at that coordinate alone,
so the resulting bound on $\lambda_{\mathrm K}$ involves neither $\gamma$ nor
$K$.

\begin{proposition}[Stationarity of the LBLP solution under the detached update]
\label{prop:below-threshold}
Suppose the transition dynamics of $\mathcal{M}$ are deterministic, and let
$\delta_K$ be as in Lemma~\ref{lem:propagation}. Let us define
$\mathcal{A}^{*}_{\mathcal{D}}(s) := \{a \in \mathcal{A}_{\mathcal{D}}(s) :
Q^{*}_{\mathcal{D}}(s,a) = V^{*}_{\mathcal{D}}(s)\}$. Assume
$\delta_K(s,a) = 0$ for every $(s,a) \in \mathcal{X}_{\mathcal{D}}$ and
\begin{equation}
\lambda_{\mathrm{K}} \ge \omega_{\mathrm{Q}},
\qquad
\lambda_{\mathrm{K}} \sum_{a \in \mathcal{A}^{*}_{\mathcal{D}}(s)}
d_{\mathcal{D}}(s,a) \ \ge\ \omega_{\mathrm{V}}\, d_{\mathcal{S}}(s)
\quad \text{for every } s \in \mathcal{S}_{\mathcal{D}}.
\label{eq:lk-sufficient}
\end{equation}
Then $0 \in \mathcal{G}(Q^{*}_{\mathcal{D}}, V^{*}_{\mathcal{D}})$ for every
$\lambda_{\mathrm{B}} \ge 0$ and every $\lambda_{\mathrm{E}} \ge 0$.
\end{proposition}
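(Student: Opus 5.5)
We apply Proposition~\ref{prop:stationary}. It suffices to exhibit admissible selections $u,v,w,x$ at $(Q^{*}_{\mathcal D},V^{*}_{\mathcal D})$ that satisfy \eqref{eq:stat-q} and \eqref{eq:stat-v}. The idea is to switch off the one-step and epigraph selections entirely, taking $u\equiv0$ and $v\equiv0$. The objective terms are then carried by the rollout selections alone. Since $u$ and $v$ vanish, no condition on $\lambda_{\mathrm B}$ or $\lambda_{\mathrm E}$ arises, which is why the conclusion holds for every $\lambda_{\mathrm B},\lambda_{\mathrm E}\ge0$.

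The first step is to identify where the rollout residuals vanish. By Lemma~\ref{lem:propagation}, the $K$-step target along the recorded trajectory of $(s,a)$ equals $L_K[V^{*}_{\mathcal D}](s,a)=Q^{*}_{\mathcal D}(s,a)-\delta_K(s,a)$. Under the assumption $\delta_K\equiv0$ this gives $g_{\mathrm{KQ}}(s,a)=\hat y_K(s,a)-Q^{*}_{\mathcal D}(s,a)=0$ at every pair of $\mathcal X_{\mathcal D}$. It also gives $g_{\mathrm{KV}}(s,a)=Q^{*}_{\mathcal D}(s,a)-V^{*}_{\mathcal D}(s)$, which vanishes exactly when $a\in\mathcal A^{*}_{\mathcal D}(s)$. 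I expect this identification to be the main point needing care. The sampled target $\hat y_K$ used in $\mathcal G$ must coincide with $L_K[V^{*}_{\mathcal D}]$ for the trajectory segment in the data. Segments that reach a terminal state before $K$ steps must be handled through the convention $V=0$ in \eqref{T}. Because the lemma holds for any choice of recorded trajectory, $\delta_K\equiv0$ is to be read as holding for every sampled segment. With that reading the identity holds uniformly.

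Given this, admissibility permits $w(s,a)\in[0,\lambda_{\mathrm K}d_{\mathcal D}(s,a)]$ everywhere, and $x(s,a)\in[0,\lambda_{\mathrm K}d_{\mathcal D}(s,a)]$ on $\mathcal A^{*}_{\mathcal D}(s)$ with $x=0$ elsewhere. The choice $u=0$ is admissible because $g_{\mathrm B}=0$ at the Bellman-optimal pair, so $u$ ranges over $[0,\lambda_{\mathrm B}d_{\mathcal D}]$. The choice $v=0$ is always admissible.

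We now set the rollout selections. Take $w(s,a):=\omega_{\mathrm Q}d_{\mathcal D}(s,a)$, which is admissible since $\lambda_{\mathrm K}\ge\omega_{\mathrm Q}$, and \eqref{eq:stat-q} then reads $\omega_{\mathrm Q}d_{\mathcal D}-0+0-\omega_{\mathrm Q}d_{\mathcal D}=0$. The set $\mathcal A^{*}_{\mathcal D}(s)$ is nonempty for $s\in\mathcal S_{\mathcal D}$, because the maximum over the finite set $\mathcal A_{\mathcal D}(s)$ is attained. So for $a\in\mathcal A^{*}_{\mathcal D}(s)$ we may define
\[
x(s,a):=\omega_{\mathrm V}d_{\mathcal S}(s)\,\frac{d_{\mathcal D}(s,a)}{\sum_{b\in\mathcal A^{*}_{\mathcal D}(s)}d_{\mathcal D}(s,b)},
\]
and set $x(s,a):=0$ otherwise. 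The second inequality in \eqref{eq:lk-sufficient} gives exactly $x(s,a)\le\lambda_{\mathrm K}d_{\mathcal D}(s,a)$, so $x$ is admissible. Since $u=v=0$, \eqref{eq:stat-v} reduces to $\omega_{\mathrm V}d_{\mathcal S}(s)-\sum_a x(s,a)=0$, which holds by construction. Proposition~\ref{prop:stationary} then yields $0\in\mathcal G(Q^{*}_{\mathcal D},V^{*}_{\mathcal D})$.
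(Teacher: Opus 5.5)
Your proposal is correct and essentially matches the paper's proof. The paper also takes $u=v=0$ and $w=\omega_{\mathrm Q}d_{\mathcal D}$, allocates $\omega_{\mathrm V}d_{\mathcal S}(s)$ proportionally over $\mathcal A^{*}_{\mathcal D}(s)$ for $x$, verifies admissibility from \eqref{eq:lk-sufficient}, and invokes Proposition~\ref{prop:stationary}. Your additional remarks make explicit several points the paper leaves implicit: that $\mathcal A^{*}_{\mathcal D}(s)$ is nonempty, that $\delta_K\equiv0$ must hold for every sampled segment so that $\hat y_K=L_K[V^{*}_{\mathcal D}]$, and how the terminal convention applies.
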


\begin{proof}
The assumption $\delta_K = 0$ gives $g_{\mathrm{KQ}}(s,a) = 0$ on
$\mathcal{X}_{\mathcal{D}}$, and $g_{\mathrm{KV}}(s,a) = 0$ exactly on the pairs
with $a \in \mathcal{A}^{*}_{\mathcal{D}}(s)$. The selections
\begin{equation*}
u = 0, \qquad v = 0, \qquad w(s,a) = \omega_{\mathrm Q} d_{\mathcal D}(s,a),
\end{equation*}
\begin{equation*}
x(s,a) = \omega_{\mathrm V}\, d_{\mathcal S}(s)\,
\frac{d_{\mathcal D}(s,a)}
{\sum_{a' \in \mathcal A^{*}_{\mathcal D}(s)} d_{\mathcal D}(s,a')}\,
\mathbf 1\{a \in \mathcal A^{*}_{\mathcal D}(s)\}.
\end{equation*}
are admissible under \eqref{eq:lk-sufficient}, since
$w(s,a) \le \lambda_{\mathrm{K}} d_{\mathcal{D}}(s,a)$ and
$x(s,a) \le \lambda_{\mathrm{K}} d_{\mathcal{D}}(s,a)$, and they vanish wherever
the corresponding residual is strictly negative. Substituting them into
\eqref{eq:stat-q} and \eqref{eq:stat-v} yields
$\omega_{\mathrm{Q}} d_{\mathcal{D}} - w = 0$ and
$\omega_{\mathrm{V}} d_{\mathcal{S}}(s)
- \sum_{a \in \mathcal{A}_{\mathcal{D}}(s)} x(s,a) = 0$,
so Proposition~\ref{prop:stationary} applies.
\end{proof}

\begin{corollary}[Single-action empirical MDP]
\label{cor:single-action}
If $|\mathcal{A}_{\mathcal{D}}(s)| = 1$ for every $s \in \mathcal{S}_{\mathcal{D}}$,
then $\delta_K = 0$ on $\mathcal{X}_{\mathcal{D}}$ and $d_{\mathcal{D}}(s,a) =
d_{\mathcal{S}}(s)$, so \eqref{eq:lk-sufficient} reduces to
$\lambda_{\mathrm{K}} \ge \max(\omega_{\mathrm{Q}}, \omega_{\mathrm{V}})$. The
coefficients of Table~\ref{tab:hyperparams} satisfy this condition.
\end{corollary}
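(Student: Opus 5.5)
The plan is to derive both claims of Corollary~\ref{cor:single-action} directly from the single-action hypothesis and then substitute them into \eqref{eq:lk-sufficient}, so that Proposition~\ref{prop:below-threshold} applies verbatim.

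\textbf{Step 1: $\delta_K=0$ on $\mathcal X_{\mathcal D}$.} We return to the alternating chain in the proof of Lemma~\ref{lem:propagation}. There the only inequalities are of the form $V^*_{\mathcal D}(s_t)\ge Q^*_{\mathcal D}(s_t,a_t)$, and each holds with equality exactly when the recorded action attains the maximum over $\mathcal A_{\mathcal D}(s_t)$. If $\mathcal A_{\mathcal D}(s_t)=\{a_t\}$, then $V^*_{\mathcal D}(s_t)=Q^*_{\mathcal D}(s_t,a_t)$ by \eqref{eq:emp-bellman}. We need this at every intermediate state $s_1,\dots,s_{K-1}$ along the recorded path.

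One edge case must be handled. If the trajectory reaches a state with $\mathcal A_{\mathcal D}(s)=\emptyset$ before $K$ steps, we use the absorbing convention \eqref{T}, under which $V=0$ and all further rewards vanish. In that case the chain still closes with equality.

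It follows that $L_K[V^*_{\mathcal D}](s,a)=Q^*_{\mathcal D}(s,a)$ for every $(s,a)\in\mathcal X_{\mathcal D}$, that is, $\delta_K\equiv0$. This step also shows that the choice of recorded trajectory in Lemma~\ref{lem:propagation} is irrelevant.

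\textbf{Step 2: $d_{\mathcal D}(s,a)=d_{\mathcal S}(s)$.} The state marginal is $d_{\mathcal S}(s)=\sum_{a\in\mathcal A_{\mathcal D}(s)}d_{\mathcal D}(s,a)$, and here the sum has a single term, so the identity is immediate. In addition, the unique action trivially attains the maximum, so $\mathcal A^*_{\mathcal D}(s)=\mathcal A_{\mathcal D}(s)$. Hence $\sum_{a\in\mathcal A^*_{\mathcal D}(s)}d_{\mathcal D}(s,a)=d_{\mathcal S}(s)$.

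\textbf{Step 3: reduce the coefficient condition.} With Step 2, the second condition in \eqref{eq:lk-sufficient} reads $\lambda_{\mathrm K}d_{\mathcal S}(s)\ge\omega_{\mathrm V}d_{\mathcal S}(s)$. Since $d_{\mathcal S}(s)>0$ on $\mathcal S_{\mathcal D}$, we may divide through to get $\lambda_{\mathrm K}\ge\omega_{\mathrm V}$. Combined with the first condition $\lambda_{\mathrm K}\ge\omega_{\mathrm Q}$, this is exactly $\lambda_{\mathrm K}\ge\max(\omega_{\mathrm Q},\omega_{\mathrm V})$. Step 1 supplies the remaining hypothesis $\delta_K\equiv0$, so Proposition~\ref{prop:below-threshold} yields stationarity for every $\lambda_{\mathrm B},\lambda_{\mathrm E}\ge0$.

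\textbf{Step 4: check Table~\ref{tab:hyperparams}.} The last sentence of the corollary is a numerical check against the table. From the main text we have $\omega_{\mathrm Q}=0.1$ and $\omega_{\mathrm V}=0.05$, as in Appendix~\ref{app:sandwich}, and $\lambda_{\mathrm K}=1$, as in Section~\ref{sec:ablations}. These satisfy $1\ge0.1$.

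The only step with real content is Step 1, specifically the careful treatment of trajectories shorter than $K$ that end at a terminal state; I expect that to be the main obstacle.
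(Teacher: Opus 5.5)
Your proposal is correct and is essentially the argument the paper compresses into the statement itself: with a single action per state, every recorded action attains $V^*_{\mathcal D}$, so the chain in the proof of Lemma~\ref{lem:propagation} closes with equality and $\delta_K\equiv 0$, while the one-term state marginal gives $\mathcal A^*_{\mathcal D}(s)=\mathcal A_{\mathcal D}(s)$ and reduces the second condition of \eqref{eq:lk-sufficient} to $\lambda_{\mathrm K}\ge\omega_{\mathrm V}$. Your treatment of trajectories that terminate before $K$ steps is a harmless extra refinement, and the values you recovered indirectly, $\omega_{\mathrm Q}=0.1$, $\omega_{\mathrm V}=0.05$ and $\lambda_{\mathrm K}=1$, are exactly those listed in Table~\ref{tab:hyperparams}.
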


\begin{remark}[The boundedness conditions along the common shift direction]
\label{rem:aggregate}
Summing \eqref{eq:stat-q} over $(s,a)\in\mathcal X_{\mathcal D}$ and
\eqref{eq:stat-v} over $s\in\mathcal S_{\mathcal D}$, the selection $v$ cancels,
since it appears with $+v(s,a)$ in the first and with $-v(s,a)$ in the second,
and the selection $u$ appears once with $-u(s,a)$ and once with
$+\gamma u(s,a)$. With
$\sum_{\mathcal X_{\mathcal D}} d_{\mathcal D}=
\sum_{\mathcal S_{\mathcal D}} d_{\mathcal S}=1$,
\[
\omega_{\mathrm Q}+\omega_{\mathrm V}
=(1-\gamma)\!\!\sum_{\mathcal X_{\mathcal D}}\!\! u
+\!\!\sum_{\mathcal X_{\mathcal D}}\!\! w
+\!\!\sum_{\mathcal X_{\mathcal D}}\!\! x .
\]
Admissibility gives $u(s,a)\le\lambda_{\mathrm B}d_{\mathcal D}(s,a)$ and
$w(s,a),x(s,a)\le\lambda_{\mathrm K}d_{\mathcal D}(s,a)$, so the three sums are
bounded by $\lambda_{\mathrm B}$, $\lambda_{\mathrm K}$ and
$\lambda_{\mathrm K}$ respectively, and
\[
\omega_{\mathrm Q}+\omega_{\mathrm V}
\ \le\ \lambda_{\mathrm B}(1-\gamma)+2\lambda_{\mathrm K},
\]
which is \eqref{eq:bounded-sg}. Setting $\lambda_{\mathrm K}=0$ leaves
$\lambda_{\mathrm B}\ge(\omega_{\mathrm Q}+\omega_{\mathrm V})/(1-\gamma)$, the
bound of Remark~\ref{rem:onestep-regime}. Without \texttt{stop\_gradient} each
rollout selection reappears at $V(s_K)$ with weight $\gamma^K$, and the same
computation yields \eqref{eq:bounded}. Condition \eqref{eq:bounded-sg} is the
component of \eqref{eq:stat-q} and \eqref{eq:stat-v} along the common shift
direction.
\end{remark}

\newpage
\section{OGBench Experiment Details}
\label{app:config}

\begin{table}[h]
\centering

\begin{tabular}{ll}
\toprule
Hyperparameter & Value \\
\midrule
Actor learning rate & $3 \times 10^{-4}$ \\
Critic learning rate & $1 \times 10^{-4}$ \\
Optimizer & Adam~\citep{kingma2014adam} \\
Gradient steps & $1{,}000{,}000$ \\
Minibatch size & $256$ \\
MLP dimensions & $[512, 512, 512, 512]$ (actor and critic) \\
Nonlinearity & GELU~\citep{hendrycks2016gaussian} \\
Layer normalization & critic only \\
Critic ensemble size & $1$ \\
Policy & Gaussian, state-independent $\sigma$ \\
Discount factor $\gamma$ & $0.995$ \\
Sequence length $K$ & $10$ \\
$\omega_{Q}$ & $0.1$ \\
$\omega_{V}$ & $0.05$ \\
$\lambda_{B}$ & $2.5$ \\
$\lambda_{E}$ & $2.5$ \\
$\lambda_{K}$ & $1.0$ \\
BC coefficient $\alpha$ & task-specific (see below) \\
\bottomrule
\end{tabular}
\caption{Hyperparameters for ALBUM}
\label{tab:hyperparams}
\end{table}

\begin{table}[h]
\centering

\begin{tabular}{lc}
\toprule
Environment & $\alpha$ \\
\midrule
antmaze-large-navigate       & $0.01$ \\
antmaze-giant-navigate       & $0.003$ \\
humanoidmaze-medium-navigate & $0.003$ \\
humanoidmaze-large-navigate  & $0.003$ \\
antsoccer-arena-navigate     & $0.01$ \\
cube-single-play             & $0.3$ \\
cube-double-play             & $0.1$ \\
scene-play                   & $0.1$ \\
puzzle-3x3-play              & $0.1$ \\
puzzle-4x4-play              & $0.1$ \\
\bottomrule
\end{tabular}
\caption{BC coefficient $\alpha$ per environment.}
\label{tab:alpha}
\end{table}

\newpage

\section{Additional Results on OGBench}
\providecommand{\std}{}
\renewcommand{\std}[1]{\,$\pm$\scalebox{0.7}{$#1$}}
\begin{table}[h]

\centering
\scriptsize
\setlength{\tabcolsep}{2.5pt}
\renewcommand{\arraystretch}{0.9}
\resizebox{\textwidth}{!}{%
\begin{tabular}{ll|cccccccccc|c}
\toprule
 & & \multicolumn{3}{c}{Gaussian Policies} & \multicolumn{3}{c}{Diffusion Policies} & \multicolumn{4}{c}{Flow Policies} & \multicolumn{1}{c}{Gaussian (Ours)} \\
\cmidrule(lr){3-5}\cmidrule(lr){6-8}\cmidrule(lr){9-12}\cmidrule(lr){13-13}
Environment & Task & BC & IQL & ReBRAC & IDQL & SRPO & CAC & FAWAC & FBRAC & IFQL & FQL & ALBUM (Ours) \\
\midrule
antmaze-large-navigate & 1$^{*}$ & 0 & 48 & 91 & 0 & 0 & 42 & 1 & 70 & 24 & 80 & \textbf{92.5}\std{3.1} \\
 & 2 & 6 & 42 & \textbf{88} & 14 & 4 & 1 & 0 & 35 & 8 & 57 & 80.2\std{6.0} \\
 & 3 & 29 & 72 & 51 & 26 & 3 & 49 & 12 & 83 & 52 & \textbf{93} & 91.8\std{2.7} \\
 & 4 & 8 & 51 & 84 & 62 & 45 & 17 & 10 & 37 & 18 & 80 & \textbf{92.5}\std{4.0} \\
 & 5 & 10 & 54 & 90 & 2 & 1 & 55 & 9 & 76 & 38 & 83 & \textbf{92.0}\std{4.7} \\
\midrule
antmaze-giant-navigate & 1$^{*}$ & 0 & 0 & \textbf{27} & 0 & 0 & 0 & 0 & 0 & 0 & 4 & 3.5\std{3.6} \\
 & 2 & 0 & 1 & 16 & 0 & 0 & 0 & 0 & 4 & 0 & 9 & \textbf{76.2}\std{7.2} \\
 & 3 & 0 & 0 & \textbf{34} & 0 & 0 & 0 & 0 & 0 & 0 & 0 & 0.0\std{0.0} \\
 & 4 & 0 & 0 & 5 & 0 & 0 & 0 & 0 & 9 & 0 & 14 & \textbf{41.2}\std{21.2} \\
 & 5 & 1 & 19 & \textbf{49} & 0 & 0 & 0 & 0 & 6 & 13 & 16 & 19.5\std{25.6} \\
\midrule
humanoidmaze-medium-navigate & 1$^{*}$ & 1 & 32 & 16 & 1 & 0 & 38 & 6 & 25 & \textbf{69} & 19 & 2.0\std{2.8} \\
 & 2 & 1 & 41 & 18 & 1 & 1 & 47 & 40 & 76 & 85 & \textbf{94} & 79.0\std{6.9} \\
 & 3 & 6 & 25 & 36 & 0 & 2 & \textbf{83} & 19 & 27 & 49 & 74 & 4.8\std{3.7} \\
 & 4 & 0 & 0 & \textbf{15} & 1 & 1 & 5 & 1 & 1 & 1 & 3 & 2.2\std{2.5} \\
 & 5 & 2 & 66 & 24 & 1 & 3 & 91 & 31 & 63 & \textbf{98} & 97 & 89.5\std{3.1} \\
\midrule
humanoidmaze-large-navigate & 1$^{*}$ & 0 & 3 & 2 & 0 & 0 & 1 & 0 & 0 & 6 & \textbf{7} & 0.0\std{0.0} \\
 & 2 & \textbf{0} & \textbf{0} & \textbf{0} & \textbf{0} & \textbf{0} & \textbf{0} & \textbf{0} & \textbf{0} & \textbf{0} & \textbf{0} & \textbf{0.0}\std{0.0} \\
 & 3 & 1 & 7 & 8 & 3 & 1 & 2 & 1 & 10 & \textbf{48} & 11 & 6.2\std{4.8} \\
 & 4 & 1 & 1 & 1 & 0 & 0 & 0 & 0 & 0 & 1 & \textbf{2} & 0.5\std{1.3} \\
 & 5 & 0 & 1 & \textbf{2} & 0 & 0 & 0 & 0 & 1 & 0 & 1 & 1.0\std{1.0} \\
\midrule
antsoccer-arena-navigate & 1 & 2 & 14 & 0 & 44 & 2 & 1 & 22 & 17 & 61 & \textbf{77} & 66.0\std{7.0} \\
 & 2 & 2 & 17 & 0 & 15 & 3 & 0 & 8 & 8 & 75 & \textbf{88} & 58.0\std{7.8} \\
 & 3 & 0 & 6 & 0 & 0 & 0 & 8 & 11 & 16 & 14 & \textbf{61} & 4.8\std{6.9} \\
 & 4$^{*}$ & 1 & 3 & 0 & 0 & 0 & 0 & 12 & 24 & 16 & \textbf{39} & 30.2\std{9.1} \\
 & 5 & 0 & 2 & 0 & 0 & 0 & 0 & 9 & 15 & 0 & \textbf{36} & 21.5\std{5.6} \\
\midrule
cube-single-play & 1 & 10 & 88 & 89 & 95 & 89 & 77 & 81 & 73 & 79 & \textbf{97} & 84.5\std{10.6} \\
 & 2$^{*}$ & 3 & 85 & 92 & 96 & 82 & 80 & 81 & 83 & 73 & \textbf{97} & 89.8\std{6.1} \\
 & 3 & 9 & 91 & 93 & \textbf{99} & 96 & 98 & 87 & 82 & 88 & 98 & 90.8\std{4.1} \\
 & 4 & 2 & 73 & 92 & 93 & 70 & 91 & 79 & 79 & 79 & \textbf{94} & 84.2\std{6.5} \\
 & 5 & 3 & 78 & 87 & 90 & 61 & 80 & 78 & 76 & 77 & \textbf{93} & 67.0\std{6.0} \\
\midrule
cube-double-play & 1 & 8 & 27 & 45 & 39 & 7 & 21 & 21 & 47 & 35 & \textbf{61} & 47.2\std{8.8} \\
 & 2$^{*}$ & 0 & 1 & 7 & 16 & 0 & 2 & 2 & 22 & 9 & \textbf{36} & 5.2\std{4.7} \\
 & 3 & 0 & 0 & 4 & 17 & 0 & 3 & 1 & 4 & 8 & \textbf{22} & 4.5\std{3.1} \\
 & 4 & 0 & 0 & 1 & 0 & 0 & 0 & 0 & 0 & 1 & \textbf{5} & 2.0\std{1.4} \\
 & 5 & 0 & 4 & 4 & 1 & 0 & 3 & 2 & 2 & 17 & \textbf{19} & 7.8\std{7.0} \\
\midrule
scene-play & 1 & 19 & 94 & 95 & \textbf{100} & 94 & \textbf{100} & 87 & 96 & 98 & \textbf{100} & 91.8\std{4.9} \\
 & 2$^{*}$ & 1 & 12 & 50 & 33 & 2 & 50 & 18 & 46 & 0 & 76 & \textbf{93.2}\std{3.2} \\
 & 3 & 1 & 32 & 55 & 94 & 4 & 49 & 38 & 78 & 54 & \textbf{98} & 70.8\std{11.7} \\
 & 4 & 2 & 0 & 3 & 4 & 0 & 0 & 6 & 4 & 0 & 5 & \textbf{66.2}\std{9.5} \\
 & 5 & \textbf{0} & \textbf{0} & \textbf{0} & \textbf{0} & \textbf{0} & \textbf{0} & \textbf{0} & \textbf{0} & \textbf{0} & \textbf{0} & \textbf{0.0}\std{0.0} \\
\midrule
puzzle-3x3-play & 1 & 5 & 33 & \textbf{97} & 52 & 89 & \textbf{97} & 25 & 63 & 94 & 90 & 96.2\std{4.1} \\
 & 2 & 1 & 4 & 1 & 0 & 0 & 0 & 4 & 2 & 1 & 16 & \textbf{77.5}\std{32.6} \\
 & 3 & 1 & 3 & 3 & 0 & 0 & 0 & 1 & 1 & 0 & 10 & \textbf{39.0}\std{16.7} \\
 & 4$^{*}$ & 1 & 2 & 2 & 0 & 0 & 0 & 1 & 2 & 0 & 16 & \textbf{93.0}\std{7.9} \\
 & 5 & 1 & 3 & 5 & 0 & 0 & 0 & 1 & 2 & 0 & \textbf{16} & 0.8\std{1.4} \\
\midrule
puzzle-4x4-play & 1 & 1 & 12 & 26 & 48 & 24 & 44 & 1 & 32 & \textbf{49} & 34 & 40.2\std{6.4} \\
 & 2 & 0 & 7 & 12 & 14 & 0 & 0 & 0 & 5 & 4 & \textbf{16} & 11.2\std{3.5} \\
 & 3 & 0 & 9 & 15 & 34 & 21 & 29 & 1 & 20 & \textbf{50} & 18 & 33.2\std{15.0} \\
 & 4$^{*}$ & 0 & 5 & 10 & \textbf{26} & 7 & 1 & 0 & 5 & 21 & 11 & 11.8\std{4.7} \\
 & 5 & 0 & 4 & 7 & \textbf{24} & 1 & 0 & 0 & 4 & 2 & 7 & 6.2\std{3.1} \\
\bottomrule
\end{tabular}}
\caption{Per-task results against all baselines. Ours is the mean over 8 seeds at the final evaluation, $\pm$ the standard deviation across seeds; baseline means are taken from the FQL paper. The best entry in each row is bold. $^{*}$ indicates the default task.}
\label{tab:per-task-full}
\end{table}

\newpage
\section{Ablation studies and Further Analysis}
\label{app:ablation}

This appendix gives the full results for the questions raised in Section~\ref{sec:ablations}.
\begin{itemize}
    \item Why does ALBUM remain stable without target networks or critic ensembles?(Appendix~\ref{ablation:EMA,ensemble})

   \item How does ALBUM compare with action-chunking methods? (Appendix~\ref{app:qc})
       
    \item How much does ALBUM reduce computational cost? (Appendix~\ref{app:cost})

    \item Why should ALBUM satisfy the boundedness condition? (Appendix~\ref{app:boundness})
    
    \item How do the rollout horizon $K$ and $\lambda_{\mathrm K}$ affect learning? (Appendix~\ref{ablation:K})

    \item Does raising the one-step penalties recover performance without the $K$-step constraint? (Appendix~\ref{riselk})

    \item Does the ALBUM critic improvement transfer to a generative policy? (Appendix~\ref{app:album-ifql})
    
\end{itemize}
It also answers the following questions, which the main text has no space for.
\begin{itemize}

    \item How sensitive is ALBUM to the penalty coefficients? (Appendix~\ref{sec:abl-lambda})
    
    \item Is ALBUM's lower performance on the harder navigation domains a matter of convergence speed? (Appendix~\ref{abl:3m})
    
    \item Is the benefit of the $K$-step penalties specific to the LBLP formulation? (Appendix~\ref{app:hinge-iql})

    \item In continuous state spaces the program's solution reduces to the discounted return of each dataset trajectory. What is the gap between the neural critic and this quantity? (Appendix~\ref{app:vstar})

\end{itemize}

\newpage
\subsection{Target Networks and Critic Ensembles}
\label{ablation:EMA,ensemble}

\begin{figure}[h]
    \centering
    \includegraphics[width=1\linewidth]{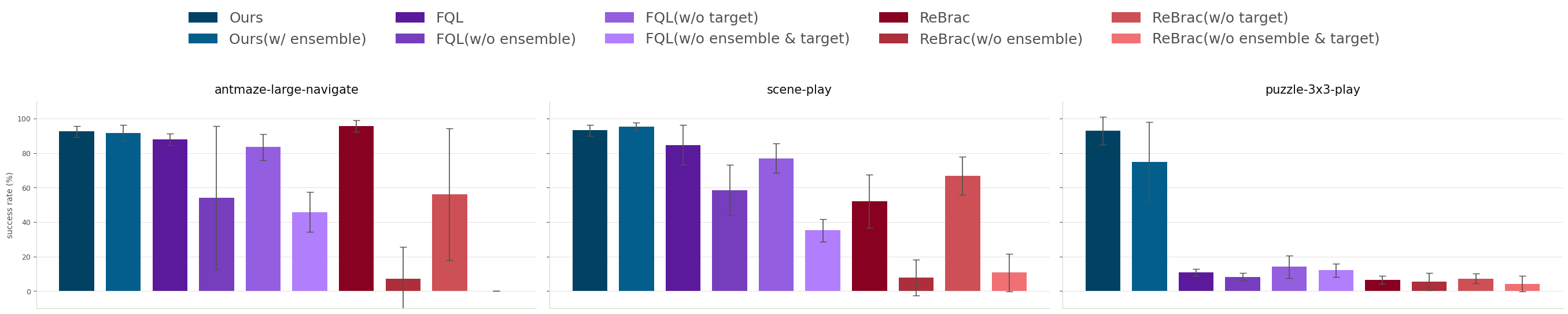}
    \caption{Critic ensemble and target network. ALBUM has no target network and appears once; Both baselines degrade when their ensemble and target are removed, ReBRAC most.}
    \label{fig:ensemble-target}
\end{figure}

We compare against FQL and ReBRAC, the strongest flow-based and Gaussian
methods in Table~\ref{tab:main}. Both rely on a target network and a two-critic ensemble, which have been standard in offline actor-critic methods since TD3 \citep{fujimoto2018addressing} and TD3+BC \citep{fujimoto2021minimalist}. ReBRAC provides the closer comparison, since
ALBUM also extracts a Gaussian policy, so differences in policy
expressiveness do not account for the performance gap. We use the default task of each environment, which is task 1 of \texttt{antmaze-large-navigate}, task 2 of \texttt{scene-play}, and task 4 of \texttt{puzzle-3x3-play}. Figure~\ref{fig:ensemble-target} removes the critic ensemble, the target network, and both. ALBUM uses a single critic and no target network, the configuration with both devices removed. In contrast, both baselines degrade on \texttt{antmaze-large-navigate} and \texttt{scene-play} when the ensemble is removed and when both devices are removed, most notably ReBRAC. Removing the target network alone degrades ReBRAC most on \texttt{antmaze-large-navigate}.

This behavior follows from where the two devices come from. Target networks
with EMA updates trace back to DQN and DDPG~\citep{mnih2015human, lillicrap2015continuous}, where the semi-gradient update of
Q-learning regresses $Q_\theta$ onto a target that itself depends on
$Q_\theta$, and the slowly moving copy is what keeps that regression from
chasing its own output. The ALBUM objective contains no squared regression
and no such target, since $Q_\theta$ and $V_\phi$ are decision variables of a
single objective, so the mechanism the device was introduced to control is
absent. Critic ensembles address a different failure. The maximization
inside the Q-learning update selects whichever action carries the largest
error, which biases the backup upward, and double Q-learning and its
ensemble descendants avoid this by decorrelating the estimates that the
maximum is taken over~\citep{hasselt2010double, van2016deep, fujimoto2018addressing}. In LBLP the maximization is a family of constraints rather than a step of the update rule, so the critic update never selects an action through a maximum over estimated values.

\newpage
\subsection{Comparison with Action-Chunking Methods}
\label{app:qc}

\paragraph{Comparison with horizon-reduction methods.}
We also compare ALBUM with methods that shorten the horizon through multi-step
returns or action chunks, such as QC-FQL~\citep{li2026reinforcement},
DEAS~\citep{kim2026deas}, DQC~\citep{li2026decoupled},
CGQ~\citep{song2026chunkguided}, and NFQL, an $n$-step variant of FQL that CGQ
evaluates as a baseline. Table~\ref{tab:qc} reports their scores from CGQ,
restricted to the five task categories whose dataset, reward, and number of
gradient steps match those of \citet{park2025flow} and ours, so the subset is determined by protocol agreement rather than by task performance. ALBUM attains the highest
score on \texttt{antmaze-large-navigate} and \texttt{antmaze-giant-navigate},
exceeding DQC, the strongest action-chunking method on both domains, by 18.8
and 16.4 points, respectively. Its average of 37.5 is higher than those of NFQL and QC-FQL and lower than those of FQL, DEAS, CGQ, and DQC, and the methods with action-chunked critics lead on \texttt{cube-double-play} and \texttt{puzzle-4x4-play}. These methods rely on
chunked critics, and those marked with $\dagger$ in Table~\ref{tab:qc} also
execute action sequences, whereas ALBUM uses a single critic and a Gaussian
policy that executes one action at a time. It also requires far less
computation than QC-FQL, DQC, and CGQ, as we quantify in Appendix~\ref{app:cost}.

\providecommand{\std}{}
\renewcommand{\std}[1]{\,$\pm$\scalebox{0.7}{$#1$}}
\begin{table}[h]
\centering
\footnotesize
\setlength{\tabcolsep}{2.5pt}
\resizebox{\textwidth}{!}{%
\begin{tabular}{lcccccc|c}
\toprule
& \multicolumn{2}{c}{Flow, no Q-chunking}
& \multicolumn{4}{c|}{Flow, Q-chunking}
& \multicolumn{1}{c}{Gaussian, no Q-chunking} \\
\cmidrule(lr){2-3}\cmidrule(lr){4-7}\cmidrule(lr){8-8}
Task Category
& FQL & NFQL
& QC-FQL$^{\dagger}$ & DEAS$^{\dagger}$ & DQC & CGQ
& ALBUM (Ours) \\
\midrule
antmaze-large-navigate
& \underline{79} & 47 & 20 & 67 & 71 & 67
& \textbf{89.8} \\
antmaze-giant-navigate
& 9 & 2 & 0 & 8 & \underline{10} & 4
& \textbf{26.4} \\
humanoidmaze-medium-navigate
& \underline{58} & 23 & 4 & 37 & \textbf{93} & 34
& 34.0 \\
\midrule
cube-double-play
& 29 & 11 & 39 & \underline{48} & 31 & \textbf{69}
& 14.6 \\
puzzle-4x4-play
& 17 & 23 & 26 & \textbf{39} & 8 & \underline{28}
& 22.7 \\
\midrule
Average
& 38.4 & 21.2 & 17.8 & 39.8 & \textbf{42.6} & \underline{40.4}
& 37.5 \\
\bottomrule
\end{tabular}}
\caption{Comparison with multi-step and Q-chunking methods. Baseline numbers obtained from CGQ. $^{\dagger}$Methods executing action sequences $\in \mathcal A^h$. Baselines report the mean of the last three evaluations (150 episodes each) over 4 seeds, and ALBUM the mean of the last three evaluations (50 episodes each) over 8 seeds. The ALBUM runs are the same as those in Table 1, which reports the evaluation at 1M steps only, so the two tables differ only in the aggregation of evaluation points. Best in bold, second underlined.}
\label{tab:qc}
\end{table}

\newpage
\subsection{Computational Cost Comparison}
\label{app:cost}
In this section, we measure the computational cost of each method in terms of
the number of network parameters, GFLOPs per step, wall-clock time per step,
the number of value networks, and peak GPU memory usage. We use
\texttt{antmaze-large-navigate-singletask-task1-v0} and run three seeds on a
single NVIDIA GeForce RTX 3080. Every method uses its default configuration,
including its critic ensemble and target copies, and the action-chunking
methods use the chunk lengths reported for \texttt{antmaze-large} in Table~8 of
\citet{song2026chunkguided}, which are $5$ for QC-FQL, $25$ for DQC, and $10$
for CGQ. Since ReBRAC skips its actor update on alternate steps, we also report
ReBRAC$^{*}$, which updates the actor at every step, so that every method
updates its actor at the same rate. All methods use a minibatch of 256 transitions and four hidden layers of width
512. Parameter counts and GFLOPs per step are read from the compiled program,
and peak GPU memory is read from the device allocator. Reported times are the
mean over three seeds, and the standard deviation stays below 3\% in every
case.

ALBUM trains with 2.43M parameters and holds two value networks, whereas the
other methods use 4.86M to 10.05M parameters and hold four to eight value
networks. Its peak GPU memory of 78\,MiB is the lowest among all methods, a
reduction of 39\% against FQL and of 71\% against CGQ. When every method updates
its actor at each step, ALBUM uses 28\% fewer FLOPs and 1\% less time per step
than ReBRAC$^{*}$, 7\% fewer FLOPs and 8\% less time than IQL, and 42\% fewer
FLOPs and 44\% less time than FQL. Against the action-chunking methods, it uses
40\% fewer FLOPs and 39\% less time than QC-FQL, 23\% fewer FLOPs and 47\% less
time than DQC, and 72\% fewer FLOPs and 74\% less time than CGQ. ReBRAC with its
default delayed actor update remains 15\% faster than ALBUM per step while using
twice the parameters. Figure~\ref{fig:pareto} places each method by its average success rate and its cost per step, and ALBUM lies on the Pareto frontier under both task sets.

\begin{table}[h]
\centering
\small
\begin{tabular}{lrrrrr}
\toprule
Method & Params & GFLOP/step & ms/step & Q/V nets & Peak GPU (MiB) \\
\midrule
ReBRAC & 4.86M & \textbf{3.67} & \textbf{0.99} & 4 & 132 \\
\textbf{ALBUM (ours)} & \textbf{2.43M} & 3.82 & 1.17 & \textbf{2} & \textbf{78} \\
ReBRAC* & 4.86M & 5.33 & 1.19 & 4 & 130 \\
IQL & 4.86M & 4.12 & 1.28 & 5 & 134 \\
QC-FQL & 5.00M & 6.41 & 1.93 & 4 & 130 \\
NFQL & 4.87M & 6.22 & 2.06 & 4 & 130 \\
FQL & 4.87M & 6.64 & 2.11 & 4 & 128 \\
DQC & 6.69M & 4.96 & 2.22 & 7 & 165 \\
CGQ & 10.05M & 13.79 & 4.59 & 8 & 265 \\
\bottomrule
\end{tabular}
\caption{Training cost per gradient step on antmaze-large-navigate. Q/V nets counts every value and critic network the agent holds, ensemble members and target copies included. Best in each column is bold.}
\label{tab:cost-antmaze}
\end{table}

\begin{figure}[h]
\centering
\includegraphics[width=0.8\textwidth]{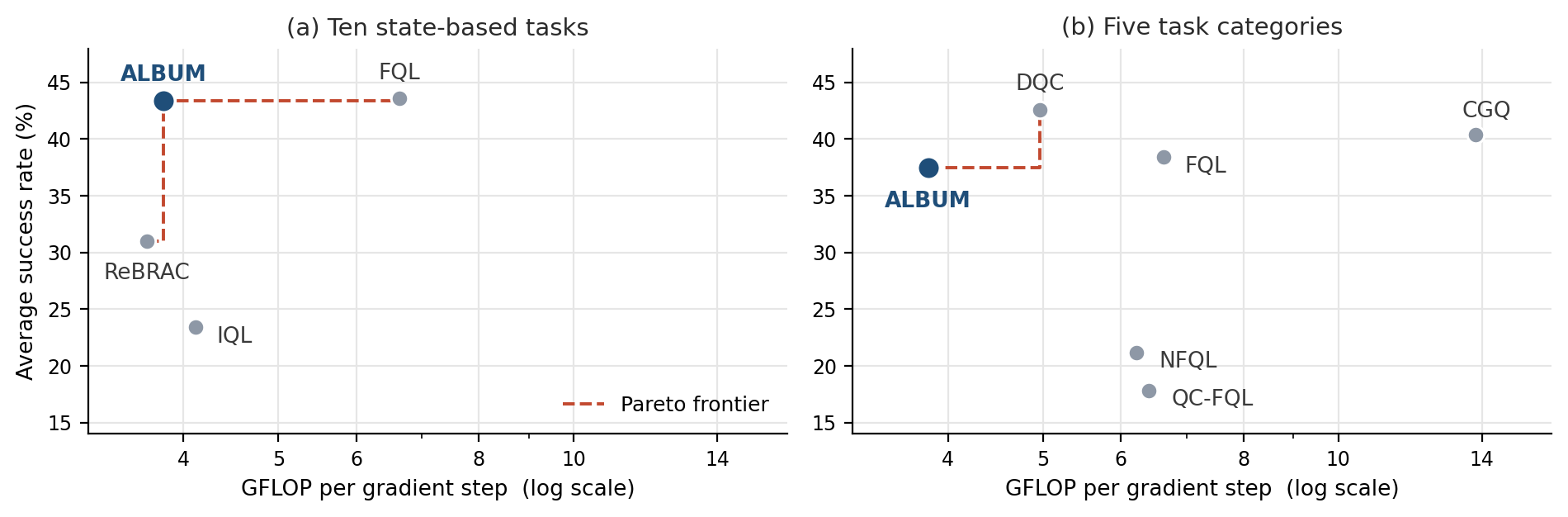}
\caption{Average success rate against training cost per gradient step. Left: the ten state-based tasks of Table~\ref{tab:main}, for the methods measured in Table~\ref{tab:cost-antmaze}. Right: the five task categories of Table~\ref{tab:qc}. Cost is measured on antmaze-large-navigate with each method in its default configuration, and the dashed line marks the Pareto frontier. ALBUM lies on the frontier in both, matching the average of FQL at 42\% fewer FLOPs per step.}
\label{fig:pareto}
\end{figure}

\begin{figure}[h]
\centering
\includegraphics[width=\textwidth]{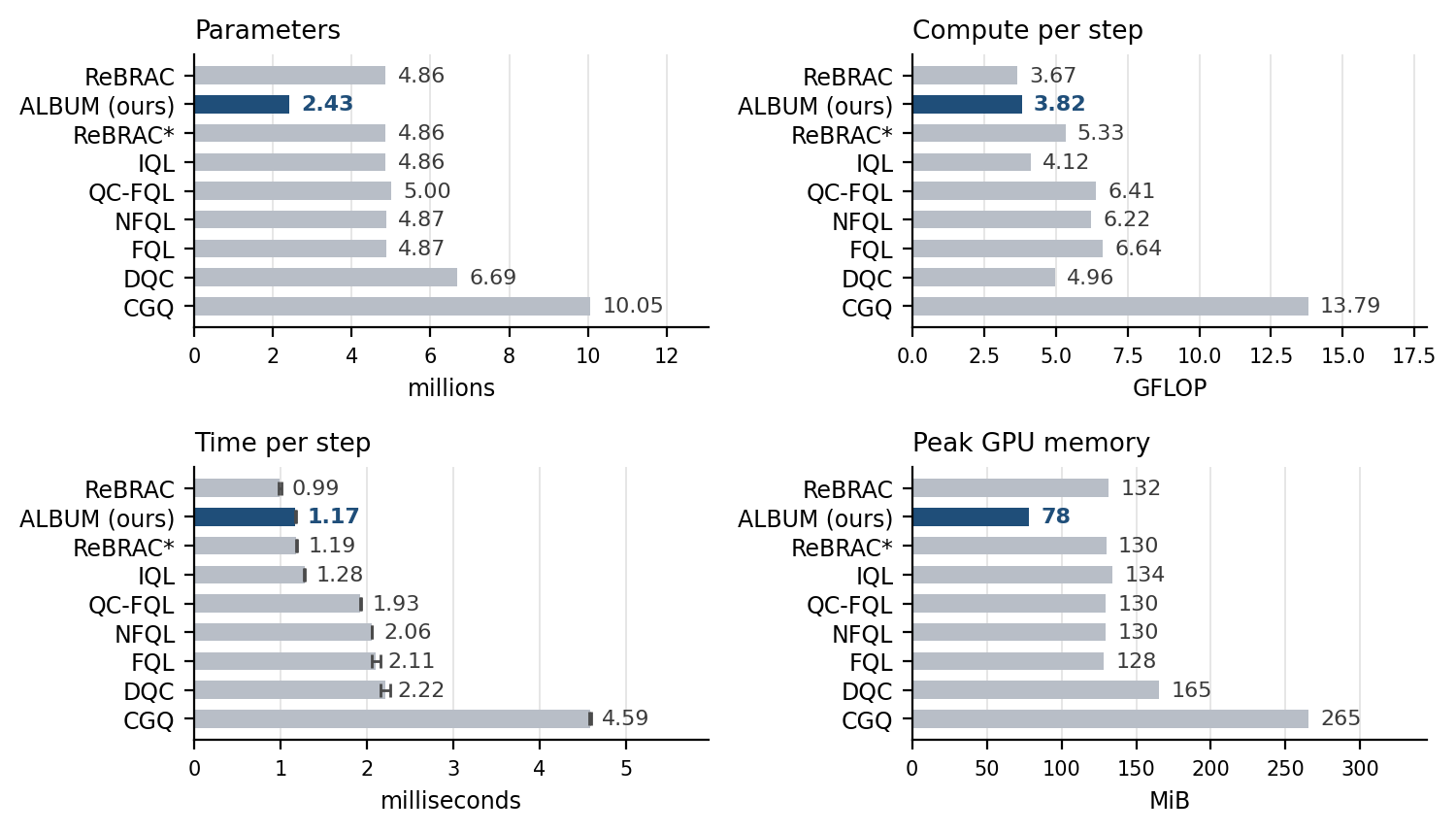}
\caption{Computational cost comparison regarding the number of network parameters, GFLOP per step,
wall-clock time per step, the number of critic networks, and peak GPU
memory usage.}
\label{fig:cost}
\end{figure}

\newpage
\subsection{Violating the Necessary Condition}
\label{app:boundness}

Proposition~\ref{prop:bounded} states a necessary condition on the coefficients for the update to stay bounded along the common shift direction, and it carries over to parameterizations whose outputs admit such a shift, including the networks of our experiments. The default coefficients of Table~\ref{tab:hyperparams} are chosen to satisfy it, with $\lambda_{\mathrm{B}}(1-\gamma) + 2\lambda_{\mathrm{K}} = 2.0125$ against $\omega_{\mathrm{Q}} + \omega_{\mathrm{V}} = 0.15$. We evaluate the condition with $\lambda_{\mathrm K}=0$,
which violates it at the default coefficients. The resulting critic statistics
and policy performance are reported in Table~\ref{tab:lk01}.

\begin{table}[h]
\centering\small\setlength{\tabcolsep}{4.5pt}
\begin{tabular}{l rr rr rr}
\toprule
& \multicolumn{2}{c}{\texttt{antmaze-large}} & \multicolumn{2}{c}{\texttt{scene}} & \multicolumn{2}{c}{\texttt{puzzle-3x3}} \\
\cmidrule(lr){2-3}\cmidrule(lr){4-5}\cmidrule(lr){6-7}
$\lambda_{\mathrm K}$ & 0 & 1 & 0 & 1 & 0 & 1 \\
\midrule
$\lambda_{\mathrm B}(1-\gamma)+2\lambda_{\mathrm K}$
& \textcolor{red}{$0.013$} & $2.01$
& \textcolor{red}{$0.013$} & $2.01$
& \textcolor{red}{$0.013$} & $2.01$ \\
\midrule
$V_{\max}$ & $-2.2{\times}10^{6}$ & $0.6$ & $-4.0{\times}10^{5}$ & $-15$ & $-5.2{\times}10^{5}$ & $-40$ \\
$V_{\mathrm{mean}}$ & $-5.9{\times}10^{6}$ & $-136$ & $-1.3{\times}10^{6}$ & $-285$ & $-8.6{\times}10^{5}$ & $-369$ \\
$V_{\min}$ & $-5.9{\times}10^{6}$ & $-199$ & $-1.3{\times}10^{6}$ & $-478$ & $-8.6{\times}10^{5}$ & $-521$ \\
$V_{\max}{-}V_{\mathrm{mean}}$ & $3.7{\times}10^{6}$ & $137$ & $9.3{\times}10^{5}$ & $271$ & $3.4{\times}10^{5}$ & $328$ \\
$g_{\mathrm B}\le0$ (\%) & 1 & 65 & 0 & 64 & 1 & 68 \\
Success (\%) & 0.5 & 92.5 & 1.0 & 93.2 & 0.5 & 93.0 \\
\bottomrule
\end{tabular}
\caption{Critic statistics at $1$M steps with
$\lambda_{\mathrm B}{=}\lambda_{\mathrm E}{=}2.5$ and $K{=}10$ for
$\lambda_{\mathrm K}\in\{0,1\}$. The first row reports the left side of
\eqref{eq:bounded-sg}, whose right side is
$\omega_{\mathrm Q}{+}\omega_{\mathrm V}=0.15$ throughout. Entries in red
violate the condition. Setting $\lambda_{\mathrm K}{=}0$ produces extreme
downward drift in $V$, accompanied by severe violations of the Bellman
constraints and near-zero success, whereas $\lambda_{\mathrm K}{=}1$
satisfies the condition and avoids this behavior.}
\label{tab:lk01}
\end{table}

With $\lambda_{\mathrm K}=0$, the critic diverges on all three tasks. On
\texttt{antmaze-large-navigate}, for example, the mean value reaches
$-5.9\times10^{6}$, more than four orders of magnitude below the physical
lower bound $-1/(1-\gamma)=-200$ induced by the per-step reward of $-1$.
The corresponding $g_{\mathrm B}\leq0$ rate is only $1\%$. This behavior
matches the descent direction identified in the proof of
Proposition~\ref{prop:bounded}.


\newpage
\subsection{Rollout Horizon $K$ and Coefficient $\lambda_\mathrm K$}
\label{ablation:K}
\begin{figure}[h]
\centering
\includegraphics[width=0.95\linewidth]{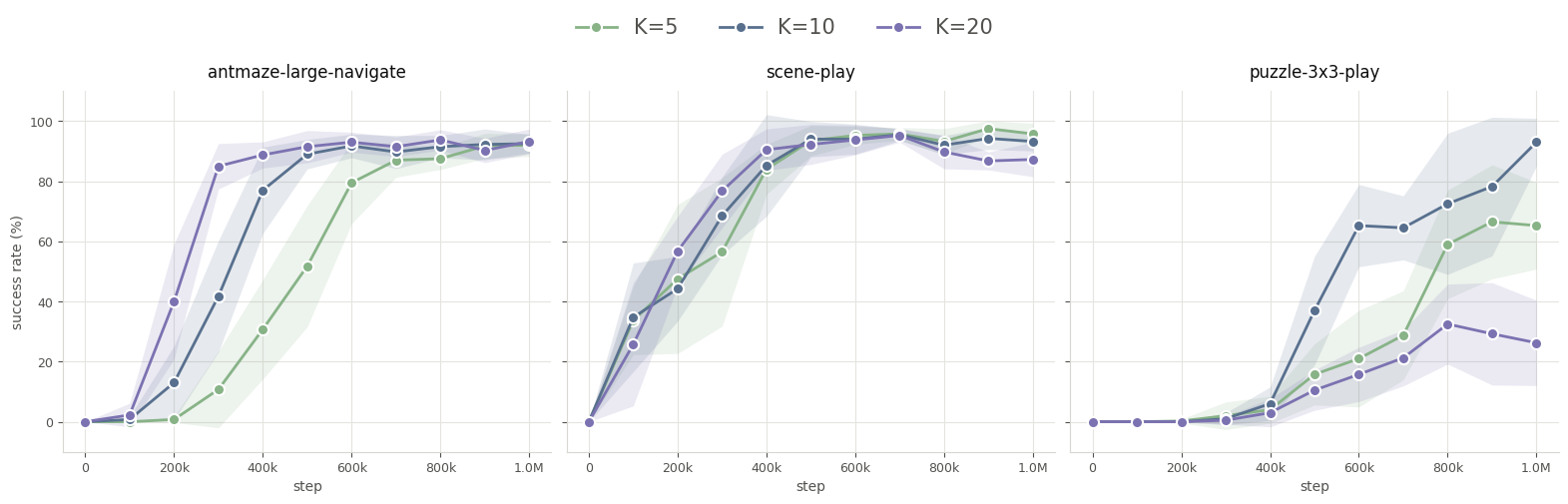}
\caption{Rollout horizon $K$ at the default $\lambda_{\mathrm K}$. Longer
horizons help on goal-directed data and hurt on play data.}
\label{fig:K}
\end{figure}

\begin{table}[h]
\centering\small\setlength{\tabcolsep}{6pt}
\begin{tabular}{l r r r}
\toprule
 & \texttt{antmaze-large-navigate} & \texttt{scene-play} & \texttt{puzzle-3x3-play} \\
\midrule
K=5 & $88.2$\std{4.4} & $\mathbf{95.8}$\std{3.5} & $65.2$\std{14.5} \\
K=10 & $92.5$\std{3.1} & $93.2$\std{3.2} & $\mathbf{93.0}$\std{7.9} \\
K=20 & $\mathbf{93.5}$\std{4.2} & $87.2$\std{5.7} & $26.2$\std{14.2} \\
\bottomrule
\end{tabular}
\caption{Success rate (\%) at the final evaluation, $8$ seeds $\times$ $50$ episodes; $\pm$ std across seeds.}
\label{tab:K}
\end{table}
\begin{figure}[H]
\centering
\includegraphics[width=0.95\linewidth]{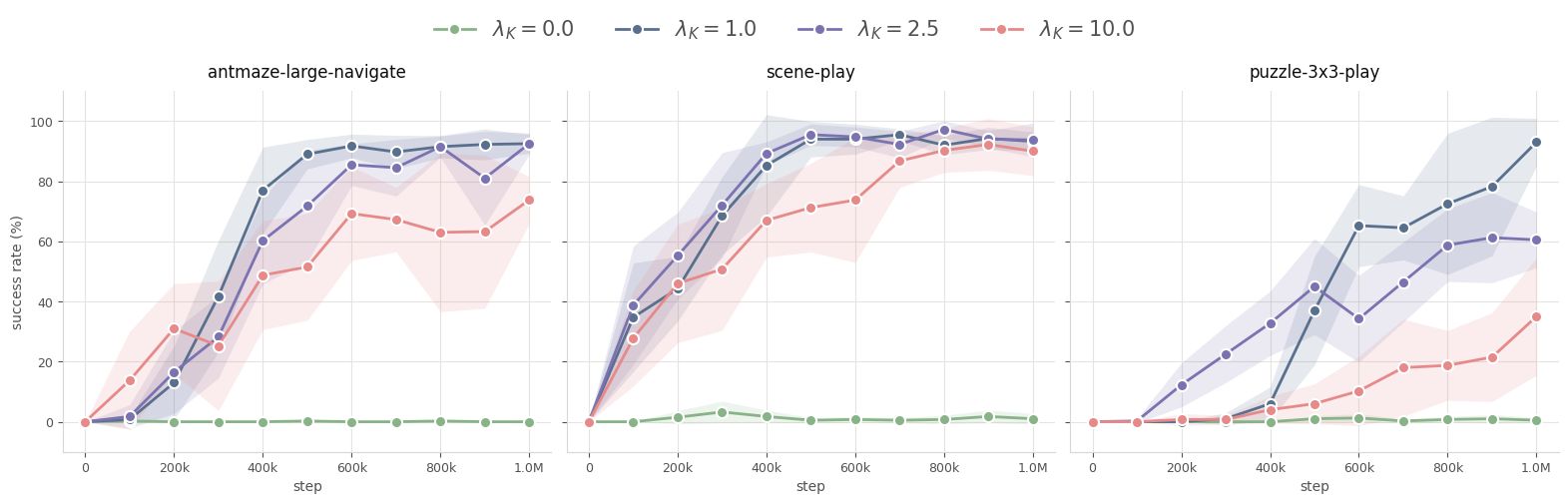}
\caption{Rollout coefficient $\lambda_{\mathrm K}$ at $K=10$. $\lambda_{\mathrm K}=1$ is the only setting that exceeds 90\% success on all three tasks, and $\lambda_{\mathrm K}=0$ fails on all of them.}
\label{fig:lambdaK}
\end{figure}
\begin{table}[h]
\centering\small\setlength{\tabcolsep}{6pt}
\begin{tabular}{l r r r}
\toprule
 & \texttt{antmaze-large-navigate} & \texttt{scene-play} & \texttt{puzzle-3x3-play} \\
\midrule
$\lambda_K=0.0$ & $0.5$\std{0.9} & $1.0$\std{1.7} & $0.5$\std{0.9} \\
$\lambda_K=1.0$ & $\mathbf{92.5}$\std{3.1} & $93.2$\std{3.2} & $\mathbf{93.0}$\std{7.9} \\
$\lambda_K=2.5$ & $79.5$\std{26.2} & $\mathbf{93.8}$\std{5.6} & $60.5$\std{9.3} \\
$\lambda_K=10.0$ & $73.0$\std{18.6} & $90.0$\std{8.1} & $35.0$\std{19.5} \\
\bottomrule
\end{tabular}
\caption{Success rate (\%) at the final evaluation, $8$ seeds $\times$ $50$ episodes; $\pm$ std across seeds.}
\label{tab:LK}
\end{table}

Figure~\ref{fig:K} and Table~\ref{tab:K} vary the rollout horizon $K$. On
\texttt{antmaze-large-navigate} a longer horizon accelerates learning, with
$K=20$ reaching $50\%$ success at $300$k steps against $500$k for $K=5$,
and the final success rate is unchanged. This matches
Theorem~\ref{thm:speedup}, under which a longer horizon never slows the
idealized iteration and contracts at $\gamma^K$ where the $K$-step term is
selected. On \texttt{puzzle-3x3-play} the final success rate drops to
$26.2\%$ at $K=20$ against $93.0\%$ at $K=10$, which the idealized iteration
does not predict. The source lies outside that analysis, in the sampled
constraint and the function approximation.

Figure~\ref{fig:lambdaK} and Table~\ref{tab:LK} vary $\lambda_{\mathrm K}$.
Setting $\lambda_{\mathrm K}=0$ fails on all three tasks, consistent with
Proposition~\ref{prop:bounded} and Appendix~\ref{riselk}.
$\lambda_{\mathrm K}=1$ is the best setting on
\texttt{antmaze-large-navigate} and \texttt{puzzle-3x3-play}, and
\texttt{scene-play} attains $93.8$ at $\lambda_{\mathrm K}=2.5$ against
$93.2$ at $1$. Table~\ref{tab:hyperparams} adopts $K=10$ and
$\lambda_{\mathrm K}=1$.

\newpage
\subsection{Additional Tuning under $\lambda_K = 0$}
\label{riselk}

\begin{table}[h]
\centering\small\setlength{\tabcolsep}{6pt}
\resizebox{\textwidth}{!}{%
\begin{tabular}{l rr rrr r}
\toprule
Environment & $\lambda_{\mathrm B}{=}\lambda_{\mathrm E}$ & $\lambda_{\mathrm K}$ &
$V_{\max}$ & $V_{\mathrm{mean}}$ & $V_{\min}$ & Success (\%) \\
\midrule
\texttt{antmaze-large-navigate}
 & $2.5$  & $0$ & $-2.2{\times}10^{6}$ & $-5.9{\times}10^{6}$ & $-5.9{\times}10^{6}$ & $0.5$\std{0.9} \\
 & $25$   & $0$ & $-207.7$ & $-973.8$ & $-982.0$ & $0.0$\std{0.0} \\
 & $250$  & $0$ & $1.7$    & $-163.3$ & $-168.4$ & $0.2$\std{0.7} \\
 & $2500$ & $0$ & $-2.1$   & $-29.1$  & $-36.0$  & $4.2$\std{2.7} \\
 & $2.5$  & $1$ & $0.6$    & $-136.2$ & $-199.4$ & $\mathbf{92.5}$\std{3.1} \\
\midrule
\texttt{scene-play}
 & $2.5$  & $0$ & $-4.0{\times}10^{5}$ & $-1.3{\times}10^{6}$ & $-1.3{\times}10^{6}$ & $1.0$\std{1.9} \\
 & $25$   & $0$ & $-947.4$ & $-3222.3$ & $-3236.7$ & $1.2$\std{1.5} \\
 & $250$  & $0$ & $-84.2$  & $-374.4$  & $-393.5$  & $1.5$\std{1.8} \\
 & $2500$ & $0$ & $-0.5$   & $-43.1$   & $-63.7$   & $2.0$\std{2.4} \\
 & $2.5$  & $1$ & $-14.7$  & $-285.4$  & $-477.5$  & $\mathbf{93.2}$\std{3.4} \\
\midrule
\texttt{puzzle-3x3-play}
 & $2.5$  & $0$ & $-5.2{\times}10^{5}$ & $-8.6{\times}10^{5}$ & $-8.6{\times}10^{5}$ & $0.5$\std{0.9} \\
 & $25$   & $0$ & $-3998.5$ & $-7678.6$ & $-7726.9$ & $0.2$\std{0.7} \\
 & $250$  & $0$ & $-344.7$  & $-573.7$  & $-587.7$  & $0.5$\std{0.9} \\
 & $2500$ & $0$ & $-22.4$   & $-60.7$   & $-75.8$   & $0.0$\std{0.0} \\
 & $2.5$  & $1$ & $-40.2$   & $-368.6$  & $-520.5$  & $\mathbf{93.0}$\std{8.5} \\
\bottomrule
\end{tabular}}
\caption{Raising $\lambda_{\mathrm B}$ past the value
$(\omega_Q+\omega_V)/(1-\gamma) = 30$ required by
Proposition~\ref{prop:bounded} at $\lambda_{\mathrm K} = 0$, with the rollout
term removed, $1$M steps, $8$ seeds. The last row of each group is the default
setting. The values become bounded once the condition holds, and no setting
without the rollout term exceeds $4.2\%$ on any of the three tasks.}
\label{tab:lambda-threshold}
\end{table}

\begin{table}[h]
\centering\small\setlength{\tabcolsep}{6pt}
\begin{tabular}{rr rrr rr r}
\toprule
$\lambda_{\mathrm B}{=}\lambda_{\mathrm E}$ & $\lambda_{\mathrm K}$ &
$V_{\max}$ & $V_{\mathrm{mean}}$ & $V_{\min}$ &
$V_{\max}{-}V_{\mathrm{mean}}$ & $V_{\max}{-}V_{\min}$ &
Success (\%) \\
\midrule
$2.5$    & $0$ & $6.1{\times}10^{3}$ & $-4.4{\times}10^{6}$ & $-4.5{\times}10^{6}$ & $4.4{\times}10^{6}$ & $4.6{\times}10^{6}$ & $8.5$\std{5.6} \\
$1000$   & $0$ & $0.3$ & $-16.9$ & $-24.0$ & $17.2$ & $24.3$ & $66.0$\std{15.2} \\
$10000$  & $0$ & $0.4$ & $-6.1$  & $-8.8$  & $6.4$  & $9.1$  & $49.5$\std{14.1} \\
\midrule
$2.5$    & $1$ & $1.4$ & $-63.1$ & $-106.5$ & $64.5$ & $107.9$ & $\mathbf{89.8}$\std{6.5} \\
\bottomrule
\end{tabular}
\caption{\texttt{cube-single-play}, task 2, $8$ seeds. Raising
$\lambda_{\mathrm B}$ bounds the values and lifts success to $66.0\%$, but
compresses the range they span and never reaches the default; pushing it
further to $10^{4}$ compresses the range again and costs performance.}
\label{tab:lambda-threshold-cube}
\end{table}

\begin{figure}[H]
\centering
\includegraphics[width=\textwidth]{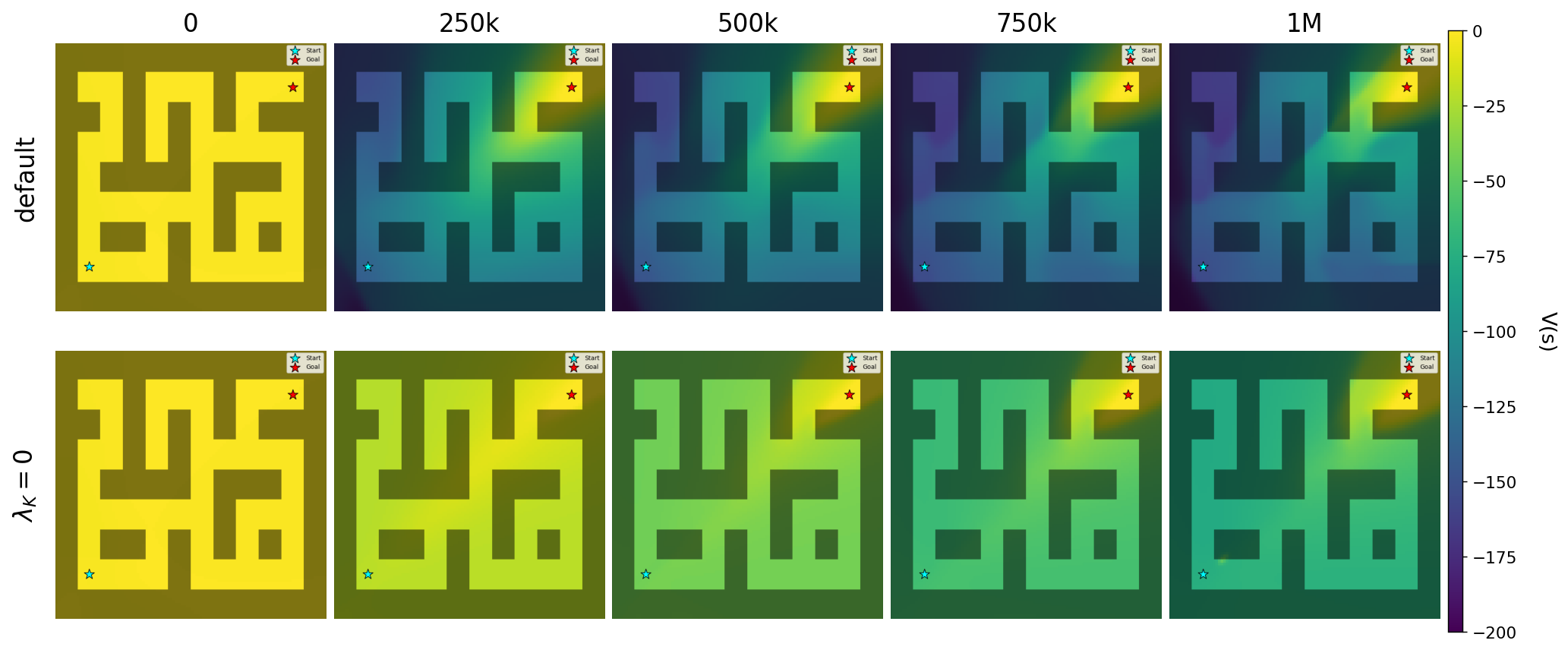}
\caption{Value maps of $V_\phi$ on \texttt{antmaze-large-navigate} at
[$250$k, $500$k, $750$k, and $1$M] training steps, for the default setting
($\lambda_{\mathrm K}=1$, top) and for $\lambda_{\mathrm K}=0$ with
$\lambda_{\mathrm B}{=}\lambda_{\mathrm E}=1000$ (bottom). Each cell shows
$V_\phi$ evaluated at the cell position with all other state coordinates set to
zero, and every map uses this same input and the same color scale
$[\,-1/(1-\gamma),0\,]$. The marker indicates the start and goal.}
\label{fig:propagate}
\end{figure}

At $\lambda_{\mathrm K} = 0$, \eqref{eq:bounded-sg} requires $\lambda_{\mathrm B} \ge (\omega_Q + \omega_V)/(1-\gamma) = 30$. Table~\ref{tab:lambda-threshold} places this threshold between $\lambda_{\mathrm B} = 25$ and $\lambda_{\mathrm B} = 250$. At $\lambda_{\mathrm B} = 25$, $V_{\min}$ lies below the range of $V^*_{\mathcal{D}}$ in Table~\ref{tab:critic-all-envs} on all three tasks, and from $\lambda_{\mathrm B} = 250$ upward it lies inside this range.

Boundedness does not restore performance. No setting with $\lambda_{\mathrm K} = 0$ exceeds $4.2\%$ on the three tasks, and \texttt{cube-single-play} reaches $66.0\%$ at $\lambda_{\mathrm B} = 1000$ against $89.8\%$ at the default (Table~\ref{tab:lambda-threshold-cube}). The range $V_{\max} - V_{\min}$ contracts as $\lambda_{\mathrm B}$ grows, from $170.1$ at $\lambda_{\mathrm B} = 250$ to $33.9$ at $\lambda_{\mathrm B} = 2500$ on \texttt{antmaze-large-navigate} and from $24.3$ to $9.1$ on \texttt{cube-single-play}, against $200.0$ and $107.9$ at the default.

Figure~\ref{fig:propagate} compares $V_\phi$ on \texttt{antmaze-large-navigate} under the default setting ($\lambda_{\mathrm K}=1$) and under $\lambda_{\mathrm K}=0$ with $\lambda_{\mathrm B}{=}\lambda_{\mathrm E}=1000$. Under the default setting, values spread from the goal across the maze from $250$k steps onward. In the bounded setting without the rollout term, $V_\phi$ remains nearly uniform across the maze through $1$M steps.

\subsection{ALBUM Critic with Generative Policies}
\label{app:album-ifql}

In this section, we examine whether the critic contribution observed in the main experiments persists when the policy is replaced by a generative policy. The main experiments use DDPG+BC to isolate the effect of the critic under a common Gaussian policy, leaving open whether it extends to generative policy classes. To assess this, we replace the IQL~\citep{kostrikov2021offline} critic in IFQL~\citep{park2025flow} with the ALBUM critic while keeping its generative policy and action extraction procedure unchanged.

IFQL trains its IQL critic independently of the policy, and IQL approximates the in-sample maximum through expectile regression, so the ALBUM critic, which also targets the in-sample maximum, replaces it directly. This comparison evaluates ALBUM and IQL under the same generative policy and therefore isolates the effect of replacing the critic.

\providecommand{\std}{}
\renewcommand{\std}[1]{\,$\pm$\scalebox{0.7}{$#1$}}
\providecommand{\gain}{}
\renewcommand{\gain}[1]{\textcolor{green!55!black}{$+#1$}}
\providecommand{\loss}{}
\renewcommand{\loss}[1]{\textcolor{red!65!black}{$-#1$}}

\begin{table}[h]
\centering
\small
\setlength{\tabcolsep}{5pt}
\begin{tabular}{llccc|c}
\toprule
Environment & Task & IFQL & IFQL-ALBUM & $\Delta$ & ALBUM \\
\midrule
antmaze-large-navigate & 1$^{*}$ & 24 & 25.5\std{14.7} & \gain{1.5}  & $\mathbf{92.5}$\std{3.1} \\
                       & 2       &  8 & 45.0\std{5.4}  & \gain{37.0} & $\mathbf{80.2}$\std{6.0} \\
                       & 3       & 52 & 51.0\std{11.4} & \loss{1.0}  & $\mathbf{91.8}$\std{2.7} \\
                       & 4       & 18 & 16.5\std{11.5} & \loss{1.5}  & $\mathbf{92.5}$\std{4.0} \\
                       & 5       & 38 & 26.5\std{21.2} & \loss{11.5} & $\mathbf{92.0}$\std{4.7} \\
\cmidrule(lr){2-6}
 & Average & $28.0$ & $32.9$ & \gain{4.9} & $\mathbf{89.8}$ \\
\midrule
scene-play & 1       & 98 & $\mathbf{98.5}$\std{1.7}  & \gain{0.5}  & 91.8\std{4.9} \\
           & 2$^{*}$ &  0 & $\mathbf{97.0}$\std{4.1}  & \gain{97.0} & 93.2\std{3.2} \\
           & 3       & 54 & $\mathbf{94.5}$\std{6.5}  & \gain{40.5} & 70.8\std{11.7} \\
           & 4       &  0 & 54.0\std{26.8} & \gain{54.0} & $\mathbf{66.2}$\std{9.5} \\
           & 5       &  0 & 0.0\std{0.0}   & $0.0$       & 0.0\std{0.0} \\
\cmidrule(lr){2-6}
 & Average & $30.4$ & $\mathbf{68.8}$ & \gain{38.4} & $64.4$ \\
\midrule
puzzle-3x3-play & 1       & 94 & $\mathbf{98.5}$\std{1.7}  & \gain{4.5}  & 96.2\std{4.1} \\
                & 2       &  1 & $\mathbf{97.0}$\std{2.2}  & \gain{96.0} & 77.5\std{32.6} \\
                & 3       &  0 & $\mathbf{63.5}$\std{19.3} & \gain{63.5} & 39.0\std{16.7} \\
                & 4$^{*}$ &  0 & $\mathbf{96.0}$\std{3.2}  & \gain{96.0} & 93.0\std{7.9} \\
                & 5       &  0 &  $\mathbf{6.5}$\std{3.0}  & \gain{6.5}  & 0.8\std{1.4} \\
\cmidrule(lr){2-6}
 & Average & $19.0$ & $\mathbf{72.3}$ & \gain{53.3} & $61.2$ \\
\bottomrule
\end{tabular}
\caption{Success rate (\%) of IFQL-ALBUM, which replaces the IQL critic of IFQL with the ALBUM critic. IFQL results are taken from \citet{park2025flow}. IFQL-ALBUM reports the mean over 4 seeds and ALBUM the mean over 8 seeds, $\pm$ the standard deviation across seeds. $\Delta$ denotes the change from IFQL in points. $^{*}$ denotes the default task.}
\label{tab:ifql-album-per-task}
\end{table}

The ALBUM critic raises the five-task average of IFQL from 30.4 to 68.8 on \texttt{scene-play} and from 19.0 to 72.3 on \texttt{puzzle-3x3-play}, and changes it from 28.0 to 32.9 on \texttt{antmaze-large-navigate}. The gains concentrate on the tasks that IFQL fails to solve. Of the six tasks at which IFQL reports 0, four reach between 54.0 and 97.0.

\newpage
\subsection{Sensitivity of Penalty Coefficients}
\label{sec:abl-lambda}

Figure~\ref{fig:LambdaEB} and Table~\ref{tab:ablation} vary
$\lambda_{\mathrm B}{=}\lambda_{\mathrm E}$ over $\{1,2.5,10,25\}$ at the
default $\lambda_{\mathrm K}$. The value $2.5$ is the only setting that
succeeds on all three tasks. At $1$ the two play tasks fall below it with
wider spreads across seeds, while \texttt{antmaze-large-navigate} is
unaffected. At $10$ and above every setting fails on the play tasks, and
\texttt{antmaze-large-navigate} follows at $25$.
Table~\ref{tab:hyperparams} adopts
$\lambda_{\mathrm B}{=}\lambda_{\mathrm E}{=}2.5$ across every domain.

\begin{figure}[h]
\centering
\includegraphics[width=\linewidth]{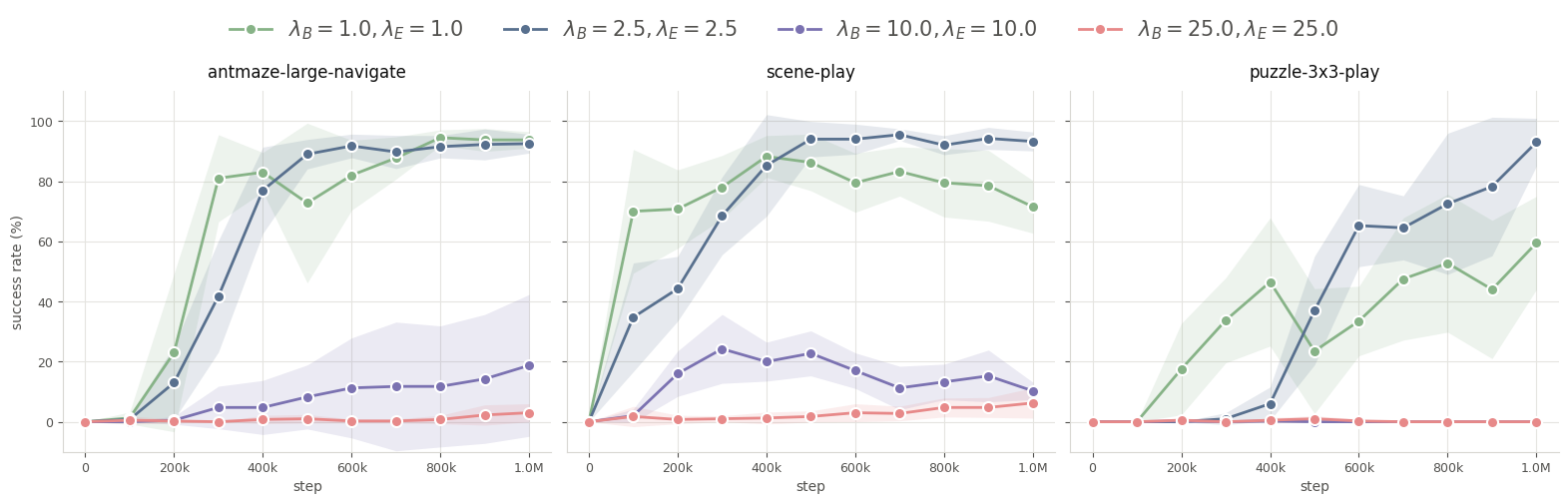}
\caption{Penalty coefficients $\lambda_{\mathrm B}{=}\lambda_{\mathrm E}$ at
the default $\lambda_{\mathrm K}$.}
\label{fig:LambdaEB}
\end{figure}
\begin{table}[H]
\centering\small\setlength{\tabcolsep}{6pt}
\begin{tabular}{l cccc}
\toprule
Task & $\lambda_{\mathrm B}{=}\lambda_{\mathrm E}{=}1$ & $2.5$ & $10$ & $25$ \\
\midrule
\texttt{antmaze-large-navigate} & $\mathbf{93.8}$\std{2.7} & $92.5$\std{3.1} & $13.0$\std{26.2} & $0.2$\std{0.7} \\
\texttt{scene-play} & $71.5$\std{8.8} & $\mathbf{93.2}$\std{3.2} & $10.2$\std{2.9} & $6.2$\std{5.0} \\
\texttt{puzzle-3x3-play} & $59.5$\std{15.5} & $\mathbf{93.0}$\std{7.9} & $0.0$\std{0.0} & $0.0$\std{0.0} \\
\bottomrule
\end{tabular}
\caption{Success rate (\%) at the final evaluation, $8$ seeds $\times$ $50$
episodes, $\pm$ std across seeds.}
\label{tab:ablation}
\end{table}

\newpage
\subsection{3M Training on Long horizon and Complex Tasks}
\label{abl:3m}
\begin{table}[h]
\centering\small\setlength{\tabcolsep}{6pt}
\begin{tabular}{l rrr rrr}
\toprule
& \multicolumn{3}{c}{\texttt{antmaze-giant-navigate}} & \multicolumn{3}{c}{\texttt{humanoidmaze-medium-navigate}} \\
\cmidrule(lr){2-4}\cmidrule(lr){5-7}
Task & $1$M & $2$M & $3$M & $1$M & $2$M & $3$M \\
\midrule
$1$ & $3.5$\std{3.8} & $20.5$\std{17.8} & $22.0$\std{19.2} & $2.0$\std{3.0} & $65.2$\std{11.1} & $82.2$\std{8.7} \\
$2$ & $76.2$\std{7.7} & $81.0$\std{5.1} & $80.5$\std{3.7} & $79.0$\std{7.3} & $90.5$\std{5.8} & $91.5$\std{4.5} \\
$3$ & $0.0$\std{0.0} & $0.0$\std{0.0} & $0.0$\std{0.0} & $4.8$\std{4.0} & $76.8$\std{10.1} & $92.0$\std{6.8} \\
$4$ & $41.2$\std{22.6} & $53.5$\std{23.0} & $47.5$\std{22.5} & $2.2$\std{2.7} & $57.0$\std{5.7} & $69.2$\std{8.7} \\
$5$ & $19.5$\std{27.4} & $30.8$\std{34.3} & $34.0$\std{36.2} & $89.5$\std{3.3} & $91.5$\std{5.6} & $93.5$\std{4.9} \\
\midrule
Average & $28.1$\std{5.1} & $37.1$\std{10.0} & $36.8$\std{11.4} & $35.5$\std{1.8} & $76.2$\std{3.7} & $85.7$\std{3.8} \\
\bottomrule
\end{tabular}
\caption{Success rate (\%) of ALBUM with an extended training budget, $8$ seeds
$\times$ $50$ evaluation episodes. Per-task rows report the spread across seeds;
the average row reports the spread of the per-seed task averages.}
\label{tab:3M}
\end{table}

Table~\ref{tab:3M} extends training from $1$M to $3$M steps on \texttt{antmaze-giant-navigate} and \texttt{humanoidmaze-medium-navigate}. On \texttt{humanoidmaze-medium-navigate} the average rises from $35.5\%$ at $1$M steps to $85.7\%$ at $3$M steps, and tasks 1, 3, and 4 rise from below $5\%$ to between $69.2\%$ and $92.0\%$, so the $1$M score on this environment reflects incomplete training. On \texttt{antmaze-giant-navigate} the average rises from $28.1\%$ to $37.1\%$ at $2$M steps without further increase, and task 3 remains at $0.0\%$. The baselines of Table~\ref{tab:main} are evaluated at $1$M steps only, so these results characterize the convergence of ALBUM and are not a comparison with the baselines.

These results also do not bear on Theorem~\ref{thm:speedup}. The theorem compares the iteration in~\eqref{eq:repair} with $K>1$ against the same iteration with $K=1$, and Appendix~\ref{ablation:K} reports the corresponding comparison within ALBUM. A comparison between ALBUM and TD-based methods is outside the scope of the theorem, since their convergence rates depend on optimization hyperparameters not included in the analysis. In particular, the critic learning rate is $10^{-4}$ for ALBUM and $3\times 10^{-4}$ for the baselines in Table~\ref{tab:main}.


\subsection{Hinge-IQL}
\label{app:hinge-iql}

\providecommand{\std}{}
\renewcommand{\std}[1]{\,$\pm$\scalebox{0.7}{$#1$}}
\begin{table}[h]
\centering
\small
\setlength{\tabcolsep}{6pt}
\begin{tabular}{ll|cc}
\toprule
Environment & Task & IQL & hinge-IQL \\
\midrule
antmaze-large-navigate & 1$^{*}$ & \textbf{48} & 12.5\std{14.9} \\
scene-play & 2$^{*}$ & \textbf{12} & \textbf{12.0}\std{7.1} \\
puzzle-3x3-play & 4$^{*}$ & \textbf{2} & 0.0\std{0.0} \\
\bottomrule
\end{tabular}
\caption{Adding ALBUM's $K$-step hinge to IQL. hinge-IQL is IQL with $\lambda_{\mathrm K}[\mathrm{sg}[\hat y_K(s,a)] - Q(s,a)]_+$ added to the critic loss and $\lambda_{\mathrm K}[\mathrm{sg}[\hat y_K(s,a)] - V(s)]_+$ added to the value loss at $\lambda_{\mathrm K} = 1$, where $\hat y_K$ is computed with IQL's value network and detached as in ALBUM, keeping IQL's AWR actor at $\alpha = 10$. Ours is the mean over 4 seeds at the final evaluation, $\pm$ the standard deviation across seeds, and the IQL column is taken from the FQL paper. The best entry in each row is bold. $^{*}$ indicates the default task.}
\label{tab:hinge-iql}
\end{table}

In this section we isolate the contribution of the $K$-step rollout hinge by
asking whether it improves a critic that is not derived from LBLP. We add the
two penalties to the critic and value losses of IQL, keeping the
hyperparameters reported in~\citet{park2025flow}, and run the default tasks of
\texttt{antmaze-large-navigate}, \texttt{scene-play} and
\texttt{puzzle-3x3-play}. As Table~\ref{tab:hinge-iql} shows, the penalty does
not improve IQL on any of the three tasks, and it degrades performance
substantially on \texttt{antmaze-large-navigate}. The benefit of the $K$-step
penalties therefore does not transfer on its own. The critic in IQL is
regressed onto a bootstrapped target, so a lower bound added to that loss has
no descent term to work against, whereas in ALBUM the same bound enters a
program whose objective descends on $Q$ and $V$.

\newpage

\subsection{Neural Critic versus In-Sample Optimum}
\label{app:vstar}

\begin{table}[h]
\centering\tiny\setlength{\tabcolsep}{4.5pt}
\resizebox{\textwidth}{!}{%
\begin{tabular}{l l rrr rr r r}
\toprule
Environment & & $V_{\max}$ & $V_{\mathrm{mean}}$ & $V_{\min}$ &
$V_{\max}{-}V_{\mathrm{mean}}$ & $V_{\max}{-}V_{\min}$ &
$g_{\mathrm B}\le0$ (\%) & Success (\%) \\
\midrule
\multirow{2}{*}{\texttt{antmaze-large-navigate}}
 & $V^{*}_{\mathcal D}$ & $0.0$ & $-156.8$ & $-198.7$ & $156.8$ & $198.7$ & --- & $11.8$ \\
 & learned              & $0.6$ & $-136.2$ & $-199.4$ & $136.8$ & $200.0$ & $65$ & $92.5$ \\
\midrule
\multirow{2}{*}{\texttt{antmaze-giant-navigate}}
 & $V^{*}_{\mathcal D}$ & $0.0$   & $-178.1$ & $-200.0$ & $178.1$ & $200.0$ & --- & $10.6$ \\
 & learned              & $-86.0$ & $-172.4$ & $-199.3$ & $86.4$  & $113.4$ & $81$ & $3.5$ \\
\midrule
\multirow{2}{*}{\texttt{humanoidmaze-medium-navigate}}
 & $V^{*}_{\mathcal D}$ & $0.0$  & $-177.1$ & $-200.0$ & $177.1$ & $200.0$ & --- & $15.1$ \\
 & learned              & $-0.3$ & $-116.5$ & $-157.1$ & $116.2$ & $156.8$ & $67$ & $2.0$ \\
\midrule
\multirow{2}{*}{\texttt{humanoidmaze-large-navigate}}
 & $V^{*}_{\mathcal D}$ & $0.0$    & $-178.6$ & $-200.0$ & $178.6$ & $200.0$ & --- & $7.7$ \\
 & learned              & $-199.5$ & $-199.5$ & $-199.5$ & $0.0$   & $0.0$   & $50$ & $0.0$ \\
\midrule
\multirow{2}{*}{\texttt{antsoccer-arena-navigate}}
 & $V^{*}_{\mathcal D}$ & $0.0$  & $-156.7$ & $-198.7$ & $156.7$ & $198.7$ & --- & $10.9$ \\
 & learned              & $-0.9$ & $-103.6$ & $-198.5$ & $102.7$ & $197.6$ & $74$ & $66.0$ \\
\midrule
\multirow{2}{*}{\texttt{cube-single-play}}
 & $V^{*}_{\mathcal D}$ & $0.0$ & $-147.7$ & $-198.7$ & $147.7$ & $198.7$ & --- & $35.6$ \\
 & learned              & $2.3$ & $-72.0$  & $-117.5$ & $74.3$  & $119.8$ & $72$ & $84.5$ \\
\midrule
\multirow{2}{*}{\texttt{cube-double-play}}
 & $V^{*}_{\mathcal D}$ & $0.0$   & $-311.8$ & $-397.3$ & $311.8$ & $397.3$ & --- & $1.5$ \\
 & learned              & $-85.3$ & $-309.3$ & $-377.5$ & $224.0$ & $292.2$ & $54$ & $5.2$ \\
\midrule
\multirow{2}{*}{\texttt{scene-play}}
 & $V^{*}_{\mathcal D}$ & $0.0$   & $-500.0$ & $-972.3$ & $500.0$ & $972.3$ & --- & $4.0$ \\
 & learned              & $-14.7$ & $-285.4$ & $-477.5$ & $270.6$ & $462.7$ & $64$ & $93.2$ \\
\midrule
\multirow{2}{*}{\texttt{puzzle-3x3-play}}
 & $V^{*}_{\mathcal D}$ & $0.0$   & $-714.0$ & $-1309.0$ & $714.0$ & $1309.0$ & --- & $5.4$ \\
 & learned              & $-40.2$ & $-368.6$ & $-520.5$  & $328.4$ & $480.3$  & $68$ & $93.0$ \\
\midrule
\multirow{2}{*}{\texttt{puzzle-4x4-play}}
 & $V^{*}_{\mathcal D}$ & $-3.0$   & $-1282.1$ & $-2241.7$ & $1279.1$ & $2238.7$ & --- & $0.0$ \\
 & learned              & $-696.1$ & $-1031.7$ & $-1234.3$ & $335.5$  & $538.2$  & $71$ & $11.8$ \\
\bottomrule
\end{tabular}
}
\caption{The learned value function against $V^*_\mathcal{D}$ on one task per environment, which is the default task except for task 1 of \texttt{antsoccer-arena-navigate} and \texttt{cube-single-play}, with the default configuration ($\lambda_{\mathrm B}=\lambda_{\mathrm E}=2.5$, $\lambda_{\mathrm K}=1$, $K=10$), 1M steps, and 8 seeds. For $V^*_\mathcal{D}$ the Success column reports the fraction of dataset trajectories that reach the goal. Reward scales differ across environments, so values compare within a row pair but not across them.}
\label{tab:critic-all-envs}
\end{table}

Under deterministic dynamics and a continuous state space, each state of $\mathcal{D}$ carries a single action, so $V^*_{\mathcal{D}}(s) = Q^*_{\mathcal{D}}(s,a) = G_{\mathcal{D}}(s,a)$, the discounted return along the dataset trajectory through $s$. Table~\ref{tab:critic-all-envs} compares the statistics of $V^*_{\mathcal{D}}$ over $\mathcal{D}$ with those of the learned $V_\phi$. The mean of $V_\phi$ exceeds the mean of $V^*_{\mathcal{D}}$ in nine of ten environments. The exception is \texttt{humanoidmaze-large-navigate}, with $V_\phi$ constant at $-199.5$, close to $-1/(1-\gamma) = -200$. The detached $K$-step hinges in \eqref{eq:album-loss} bound $V_\phi(s)$ from below, and $V_\phi$ shares parameters across states of different trajectories, so a value above the return of the trajectory through $s$ is consistent with lower bounds transferred from neighboring trajectories with higher returns. Rewards are nonpositive, so this transfer does not account for the entries $V_{\max} = 0.6$ on \texttt{antmaze-large-navigate} and $V_{\max} = 2.3$ on \texttt{cube-single-play}, which are attributable to function approximation error. The constraint $g_{\mathrm{B}} \le 0$ holds on $50\%$ to $81\%$ of the samples, so the learned pair is not feasible for LBLP on $\mathcal{M}_{\mathcal{D}}$.